\DeclareMathOperator*{\argmax}{arg\,max}
\DeclareMathOperator*{\argmin}{arg\,min}
\newcommand{\pr}{\mathbb{P}}
\newcommand{\ep}{\mathbb{E}}
\newtheorem{theorem}{Theorem}[section]
\newtheorem{lemma}[theorem]{Lemma}
\newtheorem{corollary}[theorem]{Corollary}
\newtheorem{definition}[theorem]{Definition}
\newtheorem{assumption}[theorem]{Assumption}
\newtheorem{remark}[theorem]{Remark}
\newenvironment{breakablealgorithm}
  {
   \begin{center}
     \refstepcounter{algorithm}
     \hrule height.8pt depth0pt \kern2pt
     \renewcommand{\caption}[2][\relax]{
       {\raggedright\textbf{\fname@algorithm~\thealgorithm} ##2\par}%
       \ifx\relax##1\relax 
         \addcontentsline{loa}{algorithm}{\protect\numberline{\thealgorithm}##2}%
       \else 
         \addcontentsline{loa}{algorithm}{\protect\numberline{\thealgorithm}##1}%
       \fi
       \kern2pt\hrule\kern2pt
     }
  }{
     \kern2pt\hrule\relax
   \end{center}
  }
\title{Policy-Oriented Binary Classification: Improving (KD-)CART Final Splits for Subpopulation Targeting}
\author[1]{Lei Bill Wang}
\author[2]{Zhenbang Jiao}
\author[2]{Fangyi Wang}
\affil[1]{Ohio State University Department of Economics}
\affil[2]{Ohio State University Department of Statistics}
\date{} 
\begin{document}
\maketitle

\begin{abstract}
  Policymakers often use recursive binary split rules to partition populations based on binary outcomes and target subpopulations whose probability of the binary event exceeds a threshold. We call such problems Latent Probability Classification (LPC). Practitioners typically employ Classification and Regression Trees (CART) for LPC. We prove that in the context of LPC, classic CART and the knowledge distillation method, whose student model is a CART (referred to as KD-CART), are suboptimal. We propose Maximizing Distance Final Split (MDFS), which generates split rules that strictly dominate CART/KD-CART under the unique intersect assumption. MDFS identifies the unique best split rule, is consistent, and targets more vulnerable subpopulations than CART/KD-CART. To relax the unique intersect assumption, we additionally propose Penalized Final Split (PFS) and weighted Empirical risk Final Split (wEFS).
  Through extensive simulation studies, we demonstrate that the proposed methods predominantly outperform CART/KD-CART. When applied to real-world datasets, MDFS generates policies that target more vulnerable subpopulations than the CART/KD-CART.

\end{abstract}

\section{Introduction}  \label{intro}
Policymakers and researchers often use recursive binary split rules to partition a population based on a binary outcome $Y\in\{0,1\}$ and target subpopulations with a probability of $Y = 1$ greater than 50\% when implementing a policy. 
We call such policy targeting problems Latent Probability Classification (LPC). Practically, CART is often employed for LPC. 
For example, \citet{andini2018targeting} uses CART to find subpopulations with a higher than 50\% probability of being financially constrained and recommends targeting these households with a tax credit program. 
In Appendix \ref{appendix Extensive use of CART for policy targeting}, we cite 23 empirical cases of using CART for various LPC problems to demonstrate the broad applicability of the setup that this paper investigates. 
In these studies, researchers typically use CART to divide the samples into many nodes, estimate the probability of $Y = 1$ for each node, and target those nodes with estimated probabilities higher than a threshold of 50\%, denoted as $c = 0.5$. This approach, though intuitive, is not optimal for LPC. 
Here, we provide a toy example in Figure \ref{Figure Sine example} to illustrate the limitation of using CART for an LPC problem.

\textbf{A toy example}: 
Suppose the latent probability of a binary event $Y = 1$ is a sinusoidal function of an observable variable $X$, $\mathbb{P}(Y=1|X) = \frac{\sin (2 \pi X)+1}{3} \text{ where $ X \sim {\rm Unif}[0,1]$}$. Figure \ref{Figure Sine example} shows the function, where the green segment represents the subpopulation that should be targeted, and the orange segments represent the subpopulation that should not be targeted.
In this example, we impose that the policymaker can only split the population \textit{once} based on the value of $X$.  
CART splits at $X = 0.5$, denoted as $s^{CART}$ in Figure \ref{Figure Sine example}. The left node has $\mathbb{P}(Y=1 | X > 0.5)  = \frac{2}{3}(0.5 + \frac{1}{\pi}) > 0.5$, whereas the right node has $\mathbb{P}(Y=1| X \leq 0.5)  = \frac{2}{3}(0.5 - \frac{1}{\pi}) \leq 0.5$. Consequently, we target the subpopulation by $X \leq 0.5$, i.e., left node. 
To demonstrate why $s^{CART}$ is suboptimal for the LPC problem, consider an alternative split at $s^* = \frac{5}{12}$. The left node still has $\mathbb{P}(Y=1|X \leq \frac{5}{12}) \approx 0.571 > 0.5$, right node has $\mathbb{P}(Y=1|X >\frac{5}{12}) \approx 0.164 < 0.5$. Only the left node ($X \leq
\frac{5}{12}$) is targeted. 
\begin{figure*}[ht]
    \centering
    \begin{subfigure}{0.3\textwidth}
        \centering
        \includegraphics[width=\linewidth]{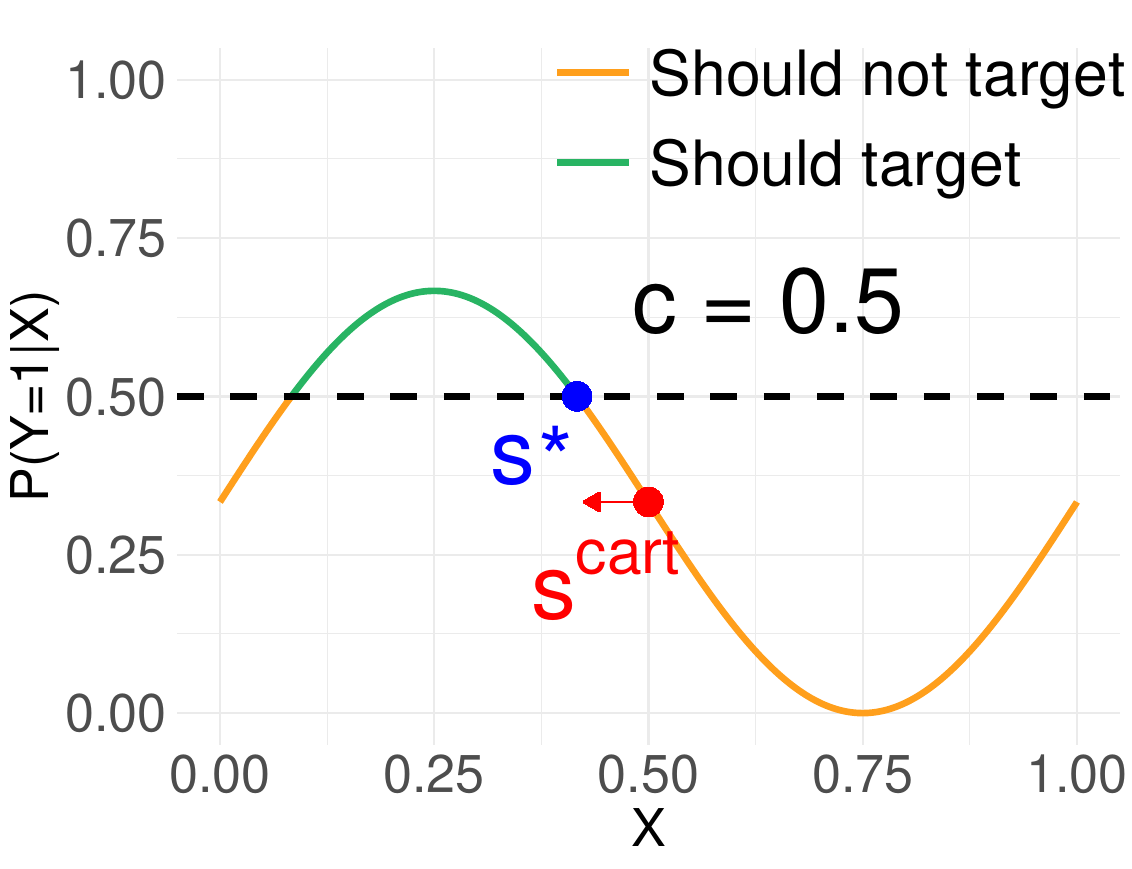}
        \caption{Toy example}
        \label{Figure Sine example}
    \end{subfigure}
    \hfill
    \begin{subfigure}{0.3\textwidth}
        \centering
        \includegraphics[width=\linewidth]{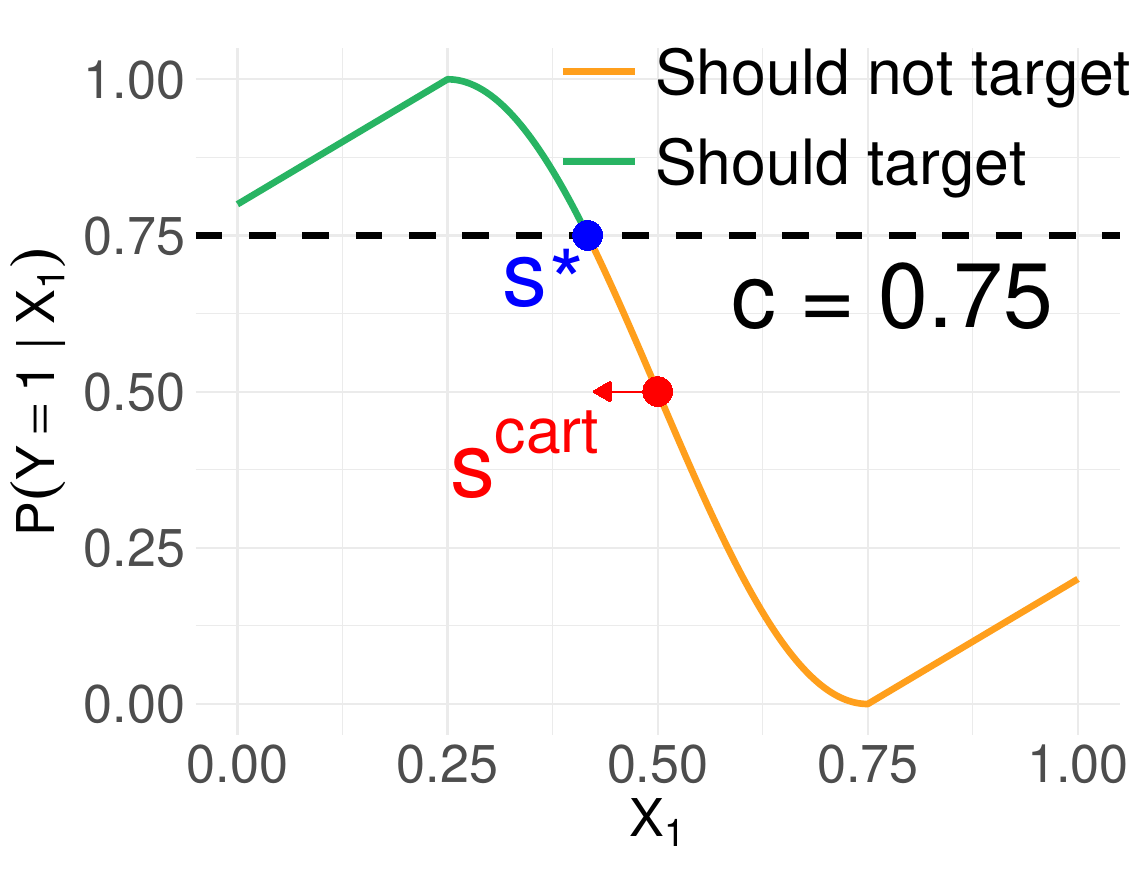}
        \caption{\(t_1\) example}
        \label{Figure Unique Sine example}
    \end{subfigure}
    \hfill
    \begin{subfigure}{0.3\textwidth}
        \centering
        \includegraphics[width=\linewidth]{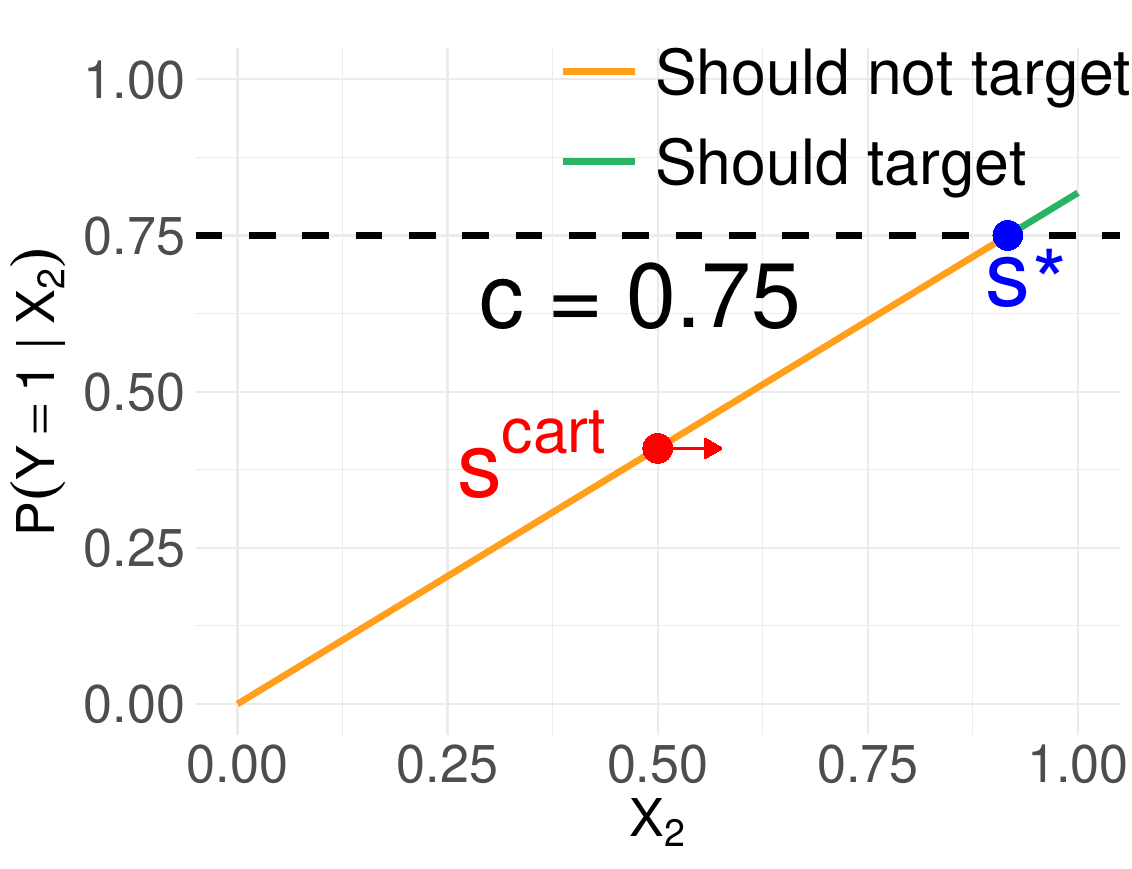}
        \caption{\(t_2\) example}
        \label{Figure Monotonic example}
    \end{subfigure}
    \caption{Illustrative examples comparing \(s^{CART}\) with \(s^*\)}
    \label{fig:policy_significance}
    \vspace{-15pt}
\end{figure*}
All subgroups that are correctly targeted/not targeted by $s^{CART}$ are also correctly targeted/not targeted by $s^*$. Moreover, $s^*$ excludes the group with $\frac{5}{12} < X < 0.5$ whose $\mathbb{P}(Y=1|X) < 0.5$ from being targeted whereas $s^{CART}$ incorrectly targets this subgroup. We say that $s^*$ strictly dominates $s^{CART}$. Section \ref{sec problem setup} formally defines strict domination.  

\textbf{Proposed methods}: 
This paper proposes methods that generate improved split rules. First, we replace the CART impurity function with a weighted sum of the distances between node means and the threshold $c$. This method is called Maximizing Distance Final Split (MDFS). Assuming the existence of a unique intersection between $\mathbb{P}(Y=1|X)$ and $c$, MDFS identifies the unique best split (hence, strictly dominating CART). The second method, Penalized Final Split (PFS), is a generalization of MDFS. It relaxes the unique intersect assumption and still strictly dominates CART. The third method, weighted Empirical risk Final Split (wEFS), adapts the weighted loss function from the cost-sensitive binary classification literature to our setup.


\textbf{Generalizing the threshold from $c = 0.5$ to $c \in (0,1)$}: 
Though most of the LPC studies use 50\% as the threshold ($c=0.5$ corresponds to the classic 0-1 loss), there are more general choices. For example, \citet{sarkar2024ensembling} uses CART to determine which forest zone has a higher than $c=61\%$ probability of forest fire to implement early warning systems. In some other cases, a policymaker may face budget constraints and hence, decides to adopt a threshold that is close to 1 \citep{hassanzadeh2021tradeoffs}. In the rest of this paper, we use $c \in (0,1)$ to denote the threshold. We assume that $c$ is \textit{fixed} before implementing our methods. In particular, policymakers can first tune $c$ with CART and then implement our methods with the chosen $c$. In this scenario, our methods still improve over CART because our theoretical results apply broadly to \textit{any} fixed $c \in (0,1)$.

\textbf{Extending to Knowledge distillation}: 
Knowledge distillation (KD) refers to a two-step learning algorithm. The first step trains a teacher model with a higher learning capacity, e.g., a neural network or a random forest, to learn $\mathbb{P}(Y=1|X)$. The second step uses the learned $\mathbb{P}(Y=1|X)$ as the response variable to train a simpler student model. 
The goal of the student model is to output a simple and interpretable representation of the teacher model's knowledge. In our case, the student model is a CART.
We refer the existing KD method with a CART student model as KD-CART. We apply the MDFS method to improve KD-CART and refer our proposed new method as KD-MDFS. This generalizes our contribution to \textit{a larger class of advanced tree-based methods}. 


\textbf{Summary of contributions:} Section \ref{sec problem setup} formulates the LPC problem from a wide range of empirical works and shows that split rule generated by CART/KD-CART is strictly dominated. Section \ref{sec penalized split} proposes MDFS, which point-identifies the unique best split rule assuming a unique intersection between $\mathbb{P}(Y=1|X)$ and $c$. In addition, we propose a consistent estimator for the MDFS split rule. Section \ref{sec policy significance} shows that MDFS generates policies that target \textit{more vulnerable} subpopulations. To relax the unique intersection assumption, we further propose PFS and wEFS in Section \ref{PFS and wEFS}. Lastly, in Section \ref{sec experiments}, we demonstrate that the proposed methods outperform their respective baselines and target more vulnerable subpopulations, using simulations with synthetic and real-world datasets.

\section{Related literature} \label{sec related literature}
The novelty of our problem setup can be best established by comparing our work with various strands of existing frontier literature.

\textbf{Nonparametric binary classification}: Nonparametric binary classifiers are often not designed for transparent policymaking. \cite{babii2024binary}'s theorems apply to uninterpretable deep learning. \cite{singh2022optimal} outputs stochastic decision rules, which raise fairness concerns. Our output policy is nonparametric, interpretable, and deterministic.

\textbf{Policy targeting}: Most of the existing policy targeting literature has been developed within the causal inference framework \cite{kitagawa2018should, athey2021policy, mbakop2021model}. Such causal inference methods are incompatible with the LPC setup, where the treatment might not have been tested in real life. A more detailed comparison between LPC and policy learning literature is provided in Appendix \ref{appendix comparing LPC and policy learning}.

\textbf{Tree-based methods' consistency}: The tree-based method literature has a vested interest in consistency. \cite{wager2018estimation,zheng2023consistency} show that different tree-based methods are consistent for estimating $\mathbb{P}(Y=1 | X)$. This work is interested in a different type of consistency. We show that our estimator consistently estimates the MDFS split rule.


\textbf{Knowledge distillation with CART as student}: \citet{liu2018improving,dao2021knowledge} use CART as the student model in knowledge distillation, termed as KD-CART in our paper. We show that the theoretical results we develop for MDFS apply to KD-CART. 

\section{Preliminaries: (OSF)LPC} \label{sec problem setup}

We first restrict the theoretical discussion to the following \textit{one-split, one-feature} (OSF) LPC problem characterized as follows. 
Let $\mathcal{X} = [0,1]$ and $\mathcal{Y} = \{0,1\}$ be the univariate feature space and label space, respectively. Let $f: \mathcal{X}\xrightarrow[]{}\mathbb{R}^{+}$ be a probability density function that is continuous on $\mathcal{X}$, $F$ be its corresponding cumulative distribution function, and $\eta(x) := \mathbb{P}(Y=1|X=x)$ be continuous.
Consider a dataset comprising $n$ i.i.d. samples, $(X_i, Y_i),\,i=1,\dots,n$, where $X_i\sim f$ and $Y_i|X_i\sim \text{Bernoulli}(\eta(X_i))$.
A policymaker is allowed to split the feature space one time (i.e. one-split) using the univariate feature $X$ (i.e. one-feature). After the split, policymakers target those node(s) whose node mean is greater than $c$. 

\subsection{CART is strictly dominated}
The most common criterion function optimized by CART to determine the split $s^{CART}$ is the weighted sum of variances of the two child nodes, i.e., $s^{CART} = \argmin_{s \in (0,1)} \mathcal{G}^{CART}(s)$
where $\mathcal{G}^{CART}(s) = F(s)(\mu_L(s) - \mu^2_L(s)) + (1-F(s))(\mu_R(s) - \mu^2_R(s))$, $\mu_L(s) =~ \int_0^s \eta(x) dF(x) / F(s)$ and $\mu_R(s) =~ \int_s^1 \eta(x) dF(x) / (1 - F(s))$ are the left and right node mean, respectively.

To illustrate that $s^{CART}$ is strictly dominated, 
we first introduce the following definition of strict dominance for comparing two splitting rules $s$ and $s'$. The definition is adapted from dominating decision rule in classic decision theory. To guide our readers through the dense notation in Definition \ref{Definition inadmissible}, we provide an intuitive explanation after introducing the definition.

\begin{definition}[Strict dominance] \label{Definition inadmissible}
    If $~\exists~ \{s,s'\} \in [0,1]^2$ such that $\forall\ x \in [0,1]$,$\mu_{t(x)}(s) > c \implies \mu_{t(x)}(s') > c  \quad \text{when $\eta(x) > c$}$, and $\mu_{t(x)}(s) \leq c \implies \mu_{t(x)}(s') \leq c \quad \text{when $\eta(x) \leq c$}$,
    where
    $\mu_{t(x)}(s) = \mu_L(s) ~\text{if} ~x \leq s$ and $\mu_{t(x)}(s) = \mu_R(s) ~\text{if} ~x > s$,
    \textbf{and} there exists a set $\mathcal{A}$ with a nonzero measure such that, $\forall\ x \in \mathcal{A}$, either one (or both) of the following conditions is true: (i) $\mu_{t(x)}(s) \leq c, \mu_{t(x)}(s') > c  \quad \text{when $\eta(x) > c$}$; and (ii) $\mu_{t(x)}(s) > c, \mu_{t(x)}(s') \leq c \quad \text{when $\eta(x) \leq c$}$.
    Then we say that splitting rule $s'$ \textbf{strictly dominates} splitting rule $s$.
\end{definition}
Despite its dense notation, Definition \ref{Definition inadmissible} has a straightforward interpretation: split rule $s'$ strictly dominates $s$ if it performs no worse  than $s$ for all $x \in [0,1]$ and strictly better than $s$ for all $x \in {\cal A} \subseteq [0,1]$, where $\cal A$ has nonzero measure.\footnote{``No worse'' means that if $s$ targets/not target correctly at a point, then $s'$ must also target/not target correctly at that point, ``strictly better'' means that at some points where $s$ target/not target incorrectly, $s'$ targets/not target correctly at those points.}

We show that under very general conditions, there exist some rules that strictly dominate CART's split rule in Theorem \ref{theorem tendency}. Proofs of all lemmas and theorems are collected in Appendix \ref{appendix proofs}.



\begin{theorem}\label{theorem tendency}
    Suppose $c\in [c_{min}, c_{max}]$ where $c_{min} = \min(\mu_L(s^{CART}), \mu_R(s^{CART})) $ and $ c_{max} = \max(\mu_L(s^{CART}) , \mu_R(s^{CART}))$ and $\eta(s^{CART}) \neq c$. Then there exists
    \begin{align*}
    s^* = \begin{cases}
        \begin{aligned}
            &\argmin_{s\in(0,s^{CART}), \eta(s) = c} (s^{CART} - s)\quad ~\text{ if }~ 
             (\eta(s^{CART}) - c)(\mu_R(s^{CART}) - \mu_L(s^{CART}))
             > 0,
        \end{aligned} \\
        \\
        \begin{aligned}
            &\argmin_{s\in (s^{CART},1), \eta(s) = c} (s - s^{CART})\quad ~\text{ if }~ 
            (\eta(s^{CART}) - c)(\mu_R(s^{CART}) - \mu_L(s^{CART})) < 0. 
        \end{aligned}
    \end{cases}
\end{align*}
    Further, all 
    $s \in \left((s^* \land s^{CART}), (s^* \lor s^{CART})\right)$, 
    \textbf{strictly dominates} $s^{CART}$.
\end{theorem}

The key challenge in understanding Theorem \ref{theorem tendency} is the interpretation of $s^*$. Figure \ref{Figure Sine example} provides a graphical illustration for interpreting $s^*$. Given that $\eta(s^{CART}) < c = 0.5$ and $\mu_L(s^{CART}) > \mu_R(s^{CART})$, Figure \ref{Figure Sine example} corresponds to the first minimization problem in the definition of $s^*$. The minimization problem searches for $s^*$ over the intersection of $s \in (0,s^{CART})$ and $s \in \{s: \eta(s) = c\}$, where $s^{CART} = 0.5$. In Figure \ref{Figure Sine example}, there are two candidate values, $s = \frac{1}{12}$ and $s = \frac{5}{12}$, between which $s=\frac{5}{12}$ is closer to $s^{CART}$, and hence $s^* = \frac{5}{12}$. The graphical illustration is generalizable: $s^{CART}$ determines which of the two minimization problems is used to determine $s^*$, then $\eta(s) = c$ pins down a set of candidate values of $s^*$, and lastly, $s^*$ is set to be the one that is closest to $s^{CART}$ among all candidate values.


\subsection{KD-CART is strictly dominated}
 In the LPC setup, a teacher model learns $\eta(x), x\in[0,1]$. Prediction based on the teacher model is denoted as $\hat{\eta}(x)$. In the LPC setup, the student model is a CART that takes in $\hat{\eta}(x)$ as the response and learns to partition the population based on $\hat{\eta}(x)$. One can show that Theorem \ref{theorem tendency} also applies to KD-CART, meaning there exist split rules that strictly dominates the split rule generated by KD-CART. Details are provided in Lemma \ref{lemma mid point for KD} in Appendix \ref{appendix proofs}.




\section{MDFS} \label{sec penalized split}

The suboptimality of CART for solving OSF LPC problem motivates our proposed method MDFS, which point-identifies $s^*$.
Since our theoretical results apply to OSF LPC, we advocate using our methods at the final splits with features identified by CART.\footnote{Though we focus on explaining MDFS in this section, all the theorems in this section easily extend to KD-MDFS, i.e., replacing the final splits' split criterion function of KD-CART with MDFS.}  
Restricting modifications to the last splits may appear trivial at first, however, note that as the tree grows deeper, the number of final splits increases exponentially, leading to non-trivial modifications to the policy designs. We substantiate this claim with empirical applications in Section \ref{sec empirical studies}. 



\subsection{Identification and estimation of MDFS
} \label{methods::mdfs}


\begin{assumption}[Unique intersection between $\eta(X)$ and $c$] \label{assumption unique intersection between eta and c}
     For $X \sim {\rm Unif}[0,1]$, there exists a unique $s^*$ such that $\eta(s^*) = c$, and $\eta(X)$ is strictly monotonic and differentiable on $[s^*-\epsilon,s^*+\epsilon]$ for some $\epsilon \in (0,\min(s^*,1-s^*))$.
\end{assumption}
We argue for the plausibility of Assumption \ref{assumption unique intersection between eta and c} in Section \ref{sec Plausibility}. 


\begin{theorem} \label{theorem unique intersection between eta and c}
    Under Assumption \ref{assumption unique intersection between eta and c}, 
    $\argmax_s \mathcal{G}^*(s,c)$ identifies $s^*$, where $\mathcal{G}^*(s,c) = s \left\lvert \mu_L - c \right\rvert + (1-s) \left\lvert \mu_R - c \right\rvert$.
    
    
\end{theorem}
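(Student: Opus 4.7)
The plan is to identify $\int_0^1 |\eta(x) - c|\,dx$ as a sharp upper bound on $\mathcal{G}^*(s,c)$ and then show this bound is attained at precisely one point, namely $s^*$.

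First, I would use the uniform distribution of $X$ to rewrite $s\mu_L(s) = \int_0^s \eta(x)\,dx$ and $(1-s)\mu_R(s) = \int_s^1 \eta(x)\,dx$, and likewise express $sc$ and $(1-s)c$ as integrals of the constant $c$. This recasts each summand of $\mathcal{G}^*$ as the absolute value of a signed integral:
\[
s|\mu_L(s) - c| = \Big|\int_0^s (\eta(x) - c)\,dx\Big|, \qquad (1-s)|\mu_R(s) - c| = \Big|\int_s^1 (\eta(x) - c)\,dx\Big|.
\]
Applying $|\int f| \le \int |f|$ to each term immediately yields $\mathcal{G}^*(s,c) \le \int_0^1 |\eta(x) - c|\,dx$, with equality in the first (resp.\ second) term iff $\eta - c$ has constant sign on $[0,s]$ (resp.\ on $[s,1]$).

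Next, I would invoke Assumption \ref{assumption unique intersection between eta and c} together with continuity of $\eta$ (from Section \ref{sec problem setup}) to argue that $\eta - c$ has constant sign on each of the open intervals $(0, s^*)$ and $(s^*, 1)$: any sign change on one of them would, by the intermediate value theorem, produce a second zero of $\eta - c$, contradicting uniqueness. The strict local monotonicity of $\eta$ at $s^*$ then forces these two constant signs to be \emph{opposite}, ruling out the possibility that $\eta$ merely touches $c$ tangentially.

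With this sign structure in hand, the conclusion follows by inspection: for $s < s^*$, the interval $[s,1]$ straddles $s^*$, so $\eta - c$ takes both signs on subsets of positive measure, and the triangle-inequality bound is strict for the second term; a symmetric argument handles $s > s^*$. Only at $s = s^*$ do both terms simultaneously saturate the bound, so $s^*$ uniquely maximizes $\mathcal{G}^*(\cdot,c)$. The main subtlety I anticipate is precisely this sign analysis near $s^*$: one must combine global uniqueness of the intersection with local strict monotonicity to exclude tangential contact, which is exactly the role played by the differentiability/monotonicity clause in Assumption \ref{assumption unique intersection between eta and c}.
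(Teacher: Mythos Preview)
Your proof is correct and takes a genuinely different route from the paper's. The paper proceeds in two stages: first it works in a small interval $[a,b]$ around $s^*$ where $\eta$ is monotone and $\mu_L<c<\mu_R$, drops the absolute values, and applies first- and second-order conditions to show $s^*$ is the unique local maximizer there; then, for $s$ outside $[a,b]$, it performs explicit inequality computations (splitting integrals at $s^*$ and bounding $\eta$ by $c$ on each piece) to show $\mathcal{G}^*(s,c)<\mathcal{G}^*(s^*,c)$. Your argument, by contrast, identifies the single global bound $\mathcal{G}^*(s,c)=\big|\int_0^s(\eta-c)\big|+\big|\int_s^1(\eta-c)\big|\le\int_0^1|\eta-c|$ via the triangle inequality and reduces the whole question to a sign analysis of $\eta-c$. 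This is cleaner and more conceptual: it bypasses calculus entirely (you only need strict monotonicity at $s^*$ to force a genuine sign change, not differentiability), handles all $s\in[0,1]$ uniformly, and makes the value $\mathcal{G}^*(s^*,c)=\int_0^1|\eta-c|$ transparent. The paper's approach, on the other hand, gives more structural information about how $\mathcal{G}^*$ behaves in the various regimes of $(\mu_L,\mu_R,c)$---e.g., that it is constant when both node means lie on one side of $c$---which is useful for understanding the algorithm but not needed for the bare identification claim.
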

The proof of Theorem \ref{theorem unique intersection between eta and c} can be largely decomposed into two steps: first, we show that there exists an interval containing $s^*$ in which first-order condition guarantees that $s^*$ is the local minimum; second, we show that for all $s$ outside of the interval, $\mathcal{G}^*(s,c) < \mathcal{G}^*(s^*,c)$.

Theorem \ref{theorem unique intersection between eta and c} states that $s^*$ maximizes the \textit{population} objective function ${\cal G}^*(s,c)$. In the \textit{finite} sample regime (denote sample size as $n$), we can estimate the sample version of $s^*$ using $\{(X_i,Y_i)\}_{i=1}^{n}$ by maximizing the sample analogue of ${\cal G}^*(s,c)$. Following from Assumption \ref{assumption unique intersection between eta and c}, we have $s^{*} \in (\epsilon,1-\epsilon)$. Define the MDFS final split estimator $\hat{s}$ as
\begin{align}
    \hat{s} = \argmax_{s\in(\epsilon,1-\epsilon)} \widehat{{\cal G}}^*(s,c),
    \notag
\end{align}
where
$\widehat{{\cal G}}^*(s,c)
    =
    s ~
    \Bigg|
        \frac
        {\sum_{i=1}^{n} Y_i\mathbbm{1}\{X_i \leq s\}}
        {\sum_{i=1}^{n}\mathbbm{1}\{X_i \leq s\}}
        -c
    \Bigg|+
    (1-s)
    \Bigg|
        \frac
        {\sum_{i=1}^{n} Y_i \mathbbm{1}\{X_i > s\}}
        {\sum_{i=1}^{n}\mathbbm{1}\{X_i > s\}}
        -c
    \Bigg|$
is the estimator of ${\cal G}^*(s,c)$. The following theorem states that $\hat{s}$ is a consistent nonparametric estimator for $s^*$.
\begin{theorem} \label{theorem MDFS consistency}
    Under Assumption \ref{assumption unique intersection between eta and c}, $\hat{s} \overset{p}{\to} s^*$
\end{theorem}

\subsection{Plausibility of Assumption \ref{assumption unique intersection between eta and c}} \label{sec Plausibility}
\textbf{Uniform $X$}. The set of quantile statistics for any continuous feature follows a uniform distribution, so we can convert a continuous variable to its quantile statistics. For a discrete $X$, the LPC problem is simpler. We defer the explanation to Appendix \ref{appendix proofs}.

\textbf{The unique intersection condition} is weaker than monotonicity, which is assumed by some theoretical works on CART \citep{blanc2020provable}. Moreover, in many real-life applications, the monotonicity of $\eta(X)$ is a reasonable assumption. For example, in our real-world dataset application in Section \ref{sec empirical studies}, a final node is split based on blood glucose level, and the binary outcome $Y$ is diabetic status. It is reasonable to assume that the probability of diabetes increases with blood glucose level. Nevertheless, monotonicity is \textit{not necessary}, see Figure \ref{Figure Unique Sine example} as a non-monotonic example that satisfies Assumption \ref{assumption unique intersection between eta and c}. 

Further, when we apply MDFS to the final nodes, the splitting feature is determined by CART. CART selects the splitting feature that gives the greatest reduction in variances (i.e., the greatest increase in purity in binary classification). Hence, when many features are available, CART is likely to pick some features that exhibit a salient trend (e.g., monotonicity), ruling out features that are more likely to violate the unique intersect assumption. Also, as we go down the tree, the domain of all nodes becomes smaller and smaller, making Assumption \ref{assumption unique intersection between eta and c} easier to satisfy. For example, take Figure \ref{Figure Sine example} as an example. It violates the unique intersect assumption. However, if we split the population at $X = 0.3$, then both child nodes will satisfy the unique intersection assumption.

\subsection{Policy significance of MDFS} \label{sec policy significance}
This section relaxes the OSF LPC setup and consider the policy relevance of MDFS under many-node many-feature setup.
We demonstrate two advantages of MDFS policies: (i) targeting more vulnerable subpopulations than CART by using the same amount of resources and (ii) uncovering subgroups with a higher-than-threshold probability of event $Y=1$ that CART ignores. 

Suppose there are two nodes $\{t_1,t_2\}$ with the same amount of population and corresponding features $X_1,X_2 \sim \rm{Unif}(0,1)$. We depict $\eta_1(X_1)$ for node $t_1$ in Figure \ref{Figure Unique Sine example} and $\eta_2(X_2)$ for node $t_2$ in Figure \ref{Figure Monotonic example}. The analytical forms of $\eta_1(X_1)$ and $\eta_2(X_2)$ are provided in Appendix \ref{appendix proofs}. Here we compare the targeted population using MDFS policy versus CART policy. Splitting nodes $t_1$ and $t_2$ individually at $s^{CART}$ versus $s^*$ results in different target subpopulations: $s^{CART}$: Target $\{X_1<\frac{1}{2}\}$ in $t_1$. $s^*$: Target $\{X_1<\frac{5}{12}\}$ in $t_1$ and $\{\frac{11}{12}<X_2<1\}$ in $t_2$.
The two sets of policies target the same proportion of the population, but $\eta_1(x_1) < 0.75$ for $x_1 \in \{\frac{5}{12}<X_1<\frac{1}{2}\}$, which is targeted by $s^{CART}$, whereas $\eta_2(x_2) > 0.75$ for $x_2 \in \{\frac{11}{12}<X_2<1\}$, which is targeted by $s^*$. Therefore, policies based on LPC target a \textbf{more vulnerable} subpopulation than CART/KD-CART policy. 

Admittedly, the fact that LPC and CART policies target the same proportion of the population in the previous example is by construction. It is also possible that the proportion of the population targeted by the MDFS policy is bigger than that by CART or KD-CART. In this scenario, comparing the effectiveness of the two sets of policies is not straightforward. 
Nonetheless, LPC still has policy significance. It discovers new latent groups with a higher-than-$c$ probability of an adversarial event happening to them (i.e., vulnerable subpopulations), e.g., $\eta_2(x_2) > 0.75$ for $\frac{11}{12} < x_2 < 1$. 

We make two remarks to formalize the two advantages that the MDFS policy offers. Assuming a homogeneous targeting cost per unit of population, the targeting cost of a policy is the percentage of the subpopulation targeted. 
Denote all $M$ final splitting nodes as $\{t_1,t_2,\dots,t_M\}$ and $\mathcal{M} = \{1,2,\dots,M\}$. For node $m$, we denote the feature selected by CART as $X_{(m)}$ and the CDF of $X_{(m)}$ as $F_m$ and let $\eta_m(x) = \pr(Y=1|X_{(m)} = x)$. The cost of CART and MDFS policies are 
\begin{align*}
    C^{CART}=& \sum_{m \in \mathcal{M}} \int_{0}^{1} \mathbbm{1}\{\mu_{t(x)}(s^{CART}_m) > c\} dF_m(x) \\
    C^{*}=& \sum_{m \in \mathcal{M}} \int_{0}^{1} \mathbbm{1}\{\mu_{t(x)}(s^{*}_m) > c\} dF_m(x)
\end{align*}
Assume that for some 
$~ m \in \mathcal{M}, 
~s^{CART}_m \neq s^{*}_m$.
\begin{remark} \label{remark same cost}
    If $C^{CART} = C^{*}$, then MDFS policy targets strictly more vulnerable $(\text{greater}~ \eta_m(X_{(m)}))$ subpopulation than CART using the same targeting resources.
\end{remark}
\begin{remark} \label{remark more cost}
    If $C^{CART} < C^{*}$, then MDFS policy uncovers latent subgroups in some node $m$ with selected feature $X_{(m)} = x$ whose $\eta_m(x) > c$ that CART ignores.
\end{remark}

\section{PFS and wEFS}
\label{PFS and wEFS}
To relax Assumption \ref{assumption unique intersection between eta and c}, we propose two additional methods: PFS and wEFS. 

\subsection{PFS
}

Intuitively, if $\mu_L$ is \textit{close} to $c$ and it is slightly higher than $c$, by the continuity assumption on $\eta(x)$, it's likely that 
there is a considerable amount of subpopulation from the left node with $\eta(x) < c$, see $0<x<\frac{1}{12}$ and $\frac{5}{12}<x<\frac{1}{2}$ in Figure \ref{Figure Sine example} as examples. This can substantially increase misclassification cases because $\mu_L$ and $\eta(x)$ are on different sides of $c$ for all these $x$ values.
Following this intuition, we consider adding a penalty to $\mathcal{G}^{CART}$ that pushes $\mu_L$ and $\mu_R$ away from $c$. Let
\begin{align}
    &\mathcal{G}^{PFS}(s,c) 
    = \mathcal{G}^{CART} + \lambda J \label{G_pfs}
\end{align}
where $J = F(s) W(|\mu_L - c|) + (1-F(s)) W(|\mu_R - c|)$ and $W: \mathbb{R}^{+} \cup 0 \to \mathbb{R}^{+}\cup 0$ is a decreasing function that penalizes small distances between the node means (i.e., $\mu_L$ and $\mu_R$) and $c$, and $\lambda\ge0$ controls the weight of the penalty term, which we denoted as $J$. The following theorem states that under some regularity conditions of the function $W$ and weight $\lambda$, the split rule that minimizes ${\cal G}^{PFS}$, denoted as $s^{PFS}$, strictly dominates $s^{CART}$.  



\begin{theorem} \label{theorem penalized loss works}
    Suppose $W: \mathbb{R}^+ \xrightarrow[]{} \mathbb{R}^+$ is convex, monotone decreasing, and upper bounded in second derivative. If $s^{CART}$ is the unique minimizer for $\mathcal{G}^{CART}$ and $\eta(s^{CART}) \neq c$, then there exists a $\Lambda > 0$, such that $\forall \lambda \in (0,\Lambda)$, $s^{PFS} := \argmin_{s\in [0,1]} \mathcal{G}^{PFS}(s,c)$ strictly dominates $s^{CART}$.
\end{theorem}

We provide an intuitive explanation for Theorem \ref{theorem penalized loss works} in Appendix \ref{appendix proofs}. Note that MDFS is a special case of PFS whose $\lambda = \infty$ and $W$ is the identity function.

\subsection{wEFS}
We adapt cost-sensitive classifier from \citet{koyejo2014consistent} to our setup and name it weighted Empirical Final Split (wEFS). Due to the lack of theoretical backing, we do not elaborate on wEFS. We defer the explanation for the lack of the theoretical results to Section \ref{sec conlusion}. Details about wEFS can be found in Algorithm~\ref{Calculate_risk} in Appendix~\ref{algor}. 

\section{Experiments} \label{sec experiments}
In this section, we conduct comprehensive numerical experiments to compare the performance of our proposed methods and the classic CART under both regular and KD frameworks. 
Under regular framework, we compare the classic CART with MDFS, PFS, and wEFS. Under KD framework, we compare RF-CART with RF-MDFS. We replace KD with RF because the teacher model is a \textbf{R}andom \textbf{F}orest.
 

\subsection{Synthetic data} \label{sec simulation}
\textbf{Settings:}
We simulate synthetic datasets using 8 different data generation processes (DGP), e.g., the Friedman synthetic datasets \citep{friedman1991multivariate, breiman1996bagging}. These setups are designed to capture various aspects of real-world data complexities and challenges commonly encountered when using tree-based methods, such as nonlinear relationships, feature interactions, collinearity, and noises (subtitles in Figure~\ref{Boxplot simulation results in main text}). Detailed descriptions of the DGP are provided in Appendix~\ref{Data generation details}.
For each DGP, we consider a threshold of interest $c\in\{0.5, 0.6, 0.7, 0.8\}$, resulting in 32 unique tasks.

We set two stopping rules for growing the tree: (i) a max depth of $m\in\{4, 5, 6, 7\}$, (ii) a minimal leaf node size of $\rho n$, where $\rho \in \{1\%, 2\%, 3\%\}$ and $n$ is the sample size. We set $n = 5000$. The fitting procedures stop once either of the rules is met. These give us 12 configurations for each tasks. For each of the $32\times 12 = 384$ settings, we do 50 replicates of experiments.

\textbf{Evaluation metrics}: 
We define false positives (FP) and false negatives (FN) before defining the two performance metrics that we use in the synthetic data simulations.
\begin{align*}
    FP =& \sum_{i=1}^n\mathbbm{1}\{\hat{T}(\mathbf{X}_i) \leq c\}\mathbbm{1}\{\eta(\mathbf{X}_i) > c\} \\
    FN =& \sum_{i=1}^n\mathbbm{1}\{\hat{T}(\mathbf{X}_i) > c\}\mathbbm{1}\{\eta(\mathbf{X}_i) \leq c\}
\end{align*}
where $\mathbf{X}_i$ is the \textit{multivariate} feature vector which includes \textit{all} features in the simulation and $\widehat{T}(\mathbf{X}_i)$ is the tree-estimate of $\eta(\mathbf{X}_i)$.

We use two metrics to evaluate the performance and robustness of all methods: misclassification rate (MR) and F1 score (F1). MR and F1 are defined as
\begin{align*}
    \text{MR} =& \frac{1}{n} (FP + FN) \\
    \text{F1} =& \frac{1}{n} \frac{2(n-FP)}{2(n-FP)+FP+FN}
\end{align*}
Given that our theorems state that split rules generated by (RF-)MDFS and PFS strictly dominate (RF-)CART, we expect that our proposed methods outperform their counterpart in both metrics.



\textbf{Model fitting procedures}: As we mentioned in Section~\ref{sec penalized split}, our methods and their counterparts 
only differ in the splits at the final level (final splits).
We first implement CART until the final split is reached based on the stopping rules. 
Once the final split is identified, we select the splitting feature using CART, then implement MDFS, PFS and wEFS with the selected splitting feature. MDFS and PFS decide the split by optimizing $\mathcal{G}^*$ and $\mathcal{G}^{PFS}$, respectively, while wEFS selects the split by identifying $s$ such that the minimum weighted empirical risk is obtained.

For $W$ in $\mathcal{G}^{PFS}$ from \eqref{G_pfs}, we choose $W(d) = 1 - d$. This choice of $W$ satisfies conditions specified in Theorem \ref{theorem penalized loss works}. 
We also experiment with alternative choices, such as $W(d)=(1-d)^2$ and $W(d)=\exp(-d)$, but observed similar performance across these choices.
Based on Theorem~\ref{theorem penalized loss works}, we set $\lambda=0.1$, a sufficiently small value that presumably satisfies the conditions of the theorem while maintaining practical effectiveness. We also propose a standard $\lambda$ selection procedure inspired by cross-validation and the honest approach \citep{athey2016recursive} (see details in Appendix~\ref{Appendix: honest approach}). 

\textbf{Comparison against respective baselines}: 
Figure~\ref{Boxplot simulation results in main text} provides a pairwise evaluation of MR differences between CART and its three refinements—MDFS, PFS and wEFS, and between RF-CART and RF-MDFS.
Each boxplot depicts the distribution of the MR difference for a single DGP across 50 replicates, with $m=7, \rho = 2\%, c=0.5$. Tables beneath the boxplots report one-sided p-values from paired t-tests (mean $<$ 0) and Wilcoxon signed-rank tests (median $<$ 0).  Except for a few cases in Ball, Friedman \#1, and Ring, MDFS and RF-MDFS reduce both the mean and the median MR relative to their counterparts, CART and RF-CART, at the 5$\%$ significance level. PFS and wEFS outperform CART in most cases, but have less stability than MDFS, as evidenced by their large p-values in a few cases.  

The comparison in terms of F1 is similar. All proposed methods largely outperform their counterparts at the 5$\%$ significance level. We defer this result to Appendix \ref{Appendix: simulation result}.

\begin{figure*}[htb]
    \centering
    \includegraphics[width=\linewidth]{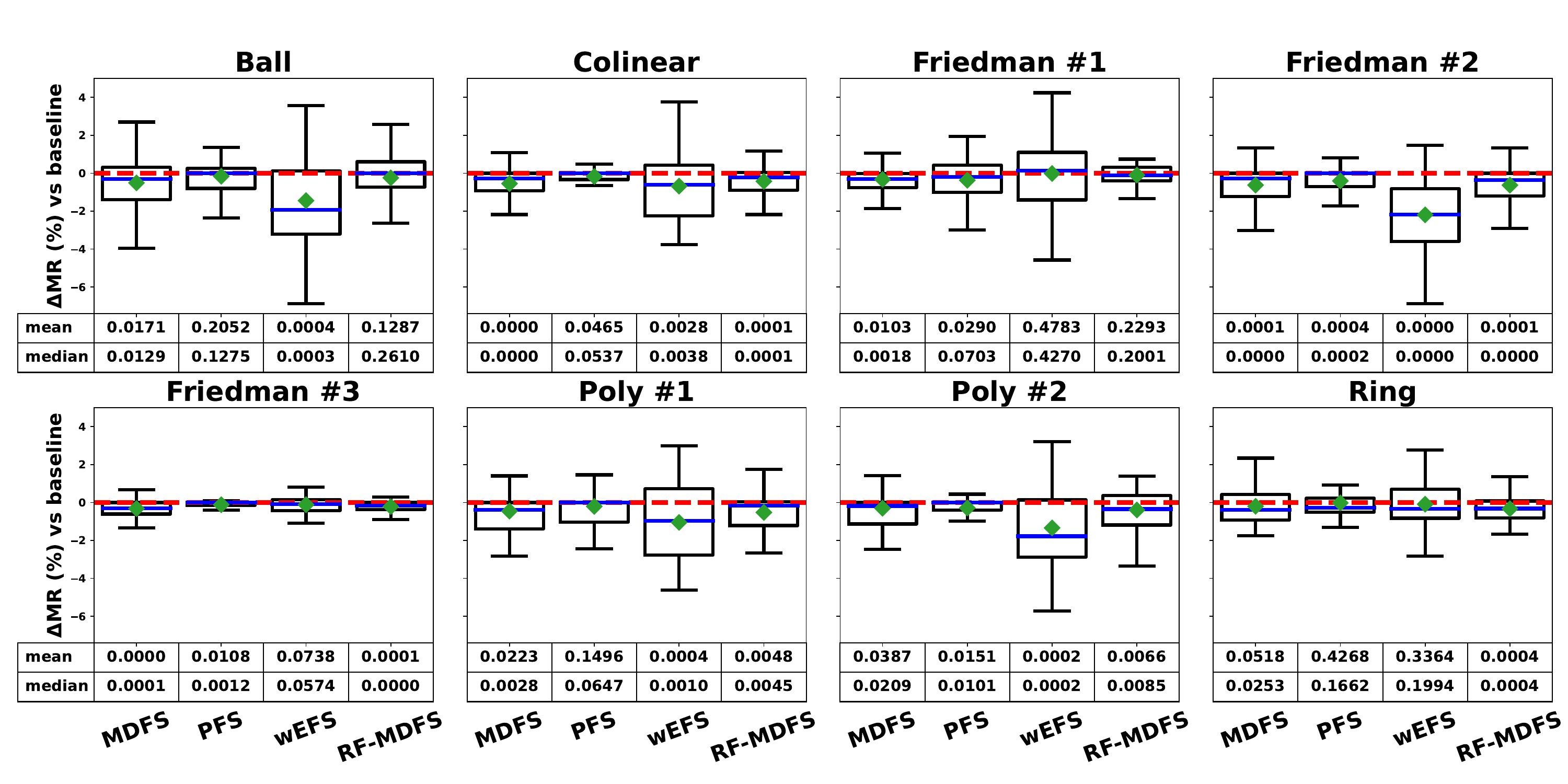}
    \caption{
    Boxplots of MR differences relative to baseline models. For each panel, the first three boxplots compare CART with MDFS, PFS and wEFS, and the last boxplot compares RF-CART with RF-MDFS. 
    Negative values in the boxplots denote \textit{improvement} in MR. Embedded tables list one-sided paired t-test and Wilcoxon signed-rank p-values for mean and median MR differences, respectively. See the boxplot for F1 scores in Appendix \ref{Appendix: simulation result}.}
    
    
    \label{Boxplot simulation results in main text}
\end{figure*}


\textbf{RF-MDFS has the strongest performance among six methods}: Our methods have a clear advantage over their respective baselines, as shown by Figure \ref{Boxplot simulation results in main text}. In addition, we compare all six methods together, instead of pair-wise comparison. RF-MDFS performs the best out of all six methods on 26 out of the 32 tasks in terms of MR, 27 out of 32 tasks in terms of F1, as shown by Table \ref{Table simulation results in appendix} and Table \ref{Table simulation results in appendix (F1)} in Appendix~\ref{Appendix: simulation result}. These results highlight the strength of combining MDFS with KD-CART, a frontier tree-based algorithm.

\subsection{Real-world datasets} \label{sec empirical studies}
We implement CART, MDFS, RF-CART and RF-MDFS with the
Pima Indians Diabetes dataset \citep{smith1988using} 
to demonstrate the policy significance of our proposed methods.\footnote{We supplement an additional forest fire empirical study in Appendix \ref{Appendix Empirical Studies} to showcase the wide applicability of our paper.} 
The Pima Indians Diabetes dataset measures health factors among Pima Indian women with the response variable being a binary indicator of diabetes status: 34.9\% of the sample is diabetic. It contains 768 observations and 8 features: number of pregnancies, glucose level, blood pressure, skin thickness, insulin level, BMI, family diabetes history index, and age. We use these eight features to search for subpopulations whose probability of having diabetes is above 60\% with depth of trees fixed at $m=3$. We pick these hyperparameter values so that one set of the empirical results match Remark \ref{remark same cost}. We also experimented with other hyperparameter values. The results from these additional experiments align with Remark \ref{remark more cost}. 
\begin{figure*}[ht] 
    \centering
    \begin{minipage}{0.5\textwidth}
        \centering
        \fontsize{8pt}{9.6pt}\selectfont
        \textbf{CART}
            \begin{Verbatim}[commandchars=\\\{\}]
if Glucose > 127.5
    if BMI <= 29.95
        if Glucose <= 145.5
            value: 0.146, samples: 41
            value: 0.514, samples: 35
        if Glucose <= 157.5
            \textcolor[HTML]{990000}{value: 0.609, samples: 115} 
            \textcolor[HTML]{990000}{value: 0.870, samples: 92}
        \end{Verbatim}
    \end{minipage}%
    \hfill
    \begin{minipage}{0.5\textwidth}
        \centering
        \fontsize{8pt}{9.6pt}\selectfont
        \textbf{MDFS}
        \begin{Verbatim}[commandchars=\\\{\}]
if Glucose > 127.5
    if BMI <= 29.95
        if Glucose <= 166.5
            value: 0.250, samples: 64
            \textcolor[HTML]{990000}{value: 0.667, samples: 12}
        if Glucose <= 129.5
            value: 0.579, samples: 19 
            \textcolor[HTML]{990000}{value: 0.739, samples: 188}
        \end{Verbatim}
    \end{minipage}
\\
\vspace{.5cm}
\begin{minipage}{0.5\textwidth}
        \centering
        \fontsize{8pt}{9.6pt}\selectfont
        \textbf{RF-CART}
            \begin{Verbatim}[commandchars=\\\{\}]
if Glucose > 127.5
    if BMI <= 29.95
        if Glucose <= 145.5
            value: 0.193, samples: 41
            value: 0.511, samples: 35
        if Glucose <= 157.5
            value: 0.595, samples: 115 
            \textcolor[HTML]{990000}{value: 0.830, samples: 92}
        \end{Verbatim}
    \end{minipage}%
    \hfill
    \begin{minipage}{0.5\textwidth}
        \centering
        \fontsize{8pt}{9.6pt}\selectfont
        \textbf{RF-MDFS}
        \begin{Verbatim}[commandchars=\\\{\}]
if Glucose > 127.5
    if BMI <= 29.95
        if Glucose <= 166.5
            value: 0.284, samples: 64
            \textcolor[HTML]{990000}{value: 0.636, samples: 12}
        if Glucose <= 129.5
            value: 0.573, samples: 19 
            \textcolor[HTML]{990000}{value: 0.713, samples: 188}
        \end{Verbatim}
    \end{minipage}

    \caption{The targeting policies generated by CART, MDFS, RF-CART, RF-MDFS. The \textcolor[HTML]{990000}{red} groups are the targeted subpopulations predicted to a higher than 60\% probability of being diabetic. We present nodes that differ by targeting decisions due to page limit, see the full trees in Appendix \ref{Appendix Empirical Studies}.}
    \label{figure forest fire CART vs MDFS}
\end{figure*}

\textbf{CART vs MDFS}: As shown in Figure \ref{figure forest fire CART vs MDFS}, CART and MDFS commonly target those with ${\rm Glucose} > 129.5$ and ${\rm BMI} > 29.95$, which consists of 188 observations. The two methods' targeting policies differ in two ways: CART additionally targets $127.5 < {\rm Glucose} \leq 129.5$ and ${\rm BMI} > 29.95$. This subgroup consists of 19 observations. MDFS additionally targets ${\rm Glucose} > 166.5$ and ${\rm BMI} \leq 29.95$. This subgroup consists of 12 observations.
The difference between the sizes of these two subgroups is 7, which is small relative to 188, i.e., the size of the subgroup commonly targeted by both policies. Assuming that the sample is representative of the population of all Pima Indian women, then the two sets of policies will incur a similar amount of targeting resources. The additional group targeted by CART has a 57.9\% probability of having diabetes, whereas the additional group targeted by MDFS has a 66.7\% probability of having diabetes. MDFS targets a subpopulation that is more prone to diabetes. This corresponds to Remark \ref{remark same cost}.

\textbf{RF-CART vs RF-MDFS}: As shown in Figure \ref{figure forest fire CART vs MDFS}, both RF-CART and RF-MDFS commonly target ${\rm Glucose} > 157.5$ and ${\rm BMI} > 29.95$, a subgroup consisting of 92 observations. RF-MDFS additionally targets two subgroups: ${\rm Glucose} > 166.5$ and ${\rm BMI} \leq 29.95$ consisted of 12 observations and $129.5 < {\rm Glucose} \leq 157.5$ and ${\rm BMI} > 29.95$, which consisted of 96 observations.
RF-MDFS targets a much greater number of observations than RF-CART. If the sample is representative of the population, then RF-MDFS would use much more targeting resources than RF-CART. Nevertheless, RF-MDFS is useful in uncovering groups with higher than 60\% probability of having diabetes that RF-CART is unable to find. For example, the first group that RF-MDFS additionally targets has a 63.6\% probability of having diabetes. This aligns with Remark \ref{remark more cost}.

\section{Further literature and conclusion}  \label{sec conlusion}

\textbf{Cost-sensitive binary classification}: LPC can be related to the cost-sensitive binary classification problem, further highlighting the policy significance of the LPC problem. \citet{nan2012optimizing,menon2013statistical,koyejo2014consistent} show that the optimal classifier of a cost-sensitive binary classification problem is determined by whether the latent probability is above some performance-metric-dependent threshold. 
Whether the theoretical results in the cost-sensitive classification literature carry over to our setup is unclear. For example, \citet{nan2012optimizing} shows risk consistency with uniform convergence of the empirical risk. In our setup, $\{\mu_L(s),\mu_R(s)\}$ appear inside the indicator functions, and uniform convergence of empirical risk is not guaranteed even with a large sample. In another related work, \citet{koyejo2014consistent} proves risk-consistency of weighted ERM assuming that the policymakers search through all real-valued functions. By definition, LPC searches binary-split-type policies only. The performance of wEFS is strong in our simulations, so it would be interesting to look into the theoretical property of wEFS in future research.

\textbf{Subgroup discovery}: \cite{lavravc2004subgroup,herrera2011overview,atzmueller2015subgroup,helal2016subgroup} discuss a widely-used quality measure, called Weighted Relative Accuracy (WRAcc), in the subgroup discovery literature. Under the assumption that $ X \sim {\rm Unif}[0,1]$, 
\begin{align*}
    \text{WRAcc}_L
    = F(s) \left( \frac{\int_{0}^{s}\eta(x) dx}{s} - \int_{0}^{1}\eta(x) dx \right)
\end{align*}
where the subscript $L$ indicates it is the WRAcc measure associated with the left node. $\mathcal{G}^*$ is a generalization of $\text{WRAcc}_L + \text{WRAcc}_R$. Hence, Theorem \ref{theorem unique intersection between eta and c} applies to a combination of CART and WRAcc.
\begin{remark} \label{remark generalizing WRAcc}
    Under Assumption \ref{assumption unique intersection between eta and c}, a CART algorithm that uses $\text{WRAcc}_L + \text{WRAcc}_R$ as the objective function is a special case of $\mathcal{G}^*$ whose $c = \ddot{s}$ where $\eta(\ddot{s}) = \int_{0}^{1} \eta(x) dx$. 
\end{remark}
\begin{corollary}
    Under Assumption \ref{assumption unique intersection between eta and c}, a CART algorithm that uses $\text{WRAcc}_L + \text{WRAcc}_R$ identifies $\ddot{s}$.
\end{corollary}

\textbf{Conclusion}: Our paper points out that classic CART/KD-CART is suboptimal for LPC in each split. 
Based on different assumptions, we propose three alternative methods: MDFS, PFS and wEFS. MDFS and PFS generate policy rules that strictly dominate CART. Our proposed methods predominantly outperform their counterparts in our simulation and provide more policy insights in real-world applications.

\bibliography{reference}

\begin{thebibliography}{50}
\providecommand{\natexlab}[1]{#1}
\providecommand{\url}[1]{\texttt{#1}}
\expandafter\ifx\csname urlstyle\endcsname\relax
  \providecommand{\doi}[1]{doi: #1}\else
  \providecommand{\doi}{doi: \begingroup \urlstyle{rm}\Url}\fi

\bibitem[Andini et~al.(2018)Andini, Ciani, de~Blasio, D'Ignazio, and Salvestrini]{andini2018targeting}
Monica Andini, Emanuele Ciani, Guido de~Blasio, Alessio D'Ignazio, and Viola Salvestrini.
\newblock Targeting with machine learning: An application to a tax rebate program in italy.
\newblock \emph{Journal of Economic Behavior \& Organization}, 156:\penalty0 86--102, 2018.

\bibitem[Athey and Imbens(2016)]{athey2016recursive}
Susan Athey and Guido Imbens.
\newblock Recursive partitioning for heterogeneous causal effects.
\newblock \emph{Proceedings of the National Academy of Sciences}, 113\penalty0 (27):\penalty0 7353--7360, 2016.

\bibitem[Athey and Wager(2021)]{athey2021policy}
Susan Athey and Stefan Wager.
\newblock Policy learning with observational data.
\newblock \emph{Econometrica}, 89\penalty0 (1):\penalty0 133--161, 2021.

\bibitem[Atzmueller(2015)]{atzmueller2015subgroup}
Martin Atzmueller.
\newblock Subgroup discovery.
\newblock \emph{Wiley Interdisciplinary Reviews: Data Mining and Knowledge Discovery}, 5\penalty0 (1):\penalty0 35--49, 2015.

\bibitem[Babii et~al.(2024)Babii, Chen, Ghysels, and Kumar]{babii2024binary}
Andrii Babii, Xi~Chen, Eric Ghysels, and Rohit Kumar.
\newblock Binary choice with asymmetric loss in a data-rich environment: Theory and an application to racial justice.
\newblock \emph{under revision of Quantitative Economics}, 2024.
\newblock URL \url{chrome-extension://efaidnbmnnnibpcajpcglclefindmkaj/https://ababii.github.io/papers/binary_choice.pdf}.

\bibitem[Blanc et~al.(2020)Blanc, Lange, and Tan]{blanc2020provable}
Guy Blanc, Jane Lange, and Li-Yang Tan.
\newblock Provable guarantees for decision tree induction: the agnostic setting.
\newblock In \emph{International Conference on Machine Learning}, pages 941--949. PMLR, 2020.

\bibitem[Breiman(1996)]{breiman1996bagging}
Leo Breiman.
\newblock Bagging predictors.
\newblock \emph{Machine learning}, 24:\penalty0 123--140, 1996.

\bibitem[{\c{C}}{\i}{\u{g}}{\c{s}}ar and {\"U}nal(2019)]{cciugcsar2019comparison}
Beg{\"u}m {\c{C}}{\i}{\u{g}}{\c{s}}ar and Deniz {\"U}nal.
\newblock Comparison of data mining classification algorithms determining the default risk.
\newblock \emph{Scientific Programming}, 2019\penalty0 (1):\penalty0 8706505, 2019.

\bibitem[Clarke et~al.(1998)Clarke, Forsyth, and Wright]{clarke1998machine}
David~D Clarke, Richard Forsyth, and Richard Wright.
\newblock Machine learning in road accident research: decision trees describing road accidents during cross-flow turns.
\newblock \emph{Ergonomics}, 41\penalty0 (7):\penalty0 1060--1079, 1998.

\bibitem[Crowe et~al.(2017)Crowe, O’Sullivan, Cassetti, and O’Sullivan]{crowe2017weight}
Michael Crowe, Michael O’Sullivan, Oscar Cassetti, and Aifric O’Sullivan.
\newblock Weight status and dental problems in early childhood: classification tree analysis of a national cohort.
\newblock \emph{Dentistry Journal}, 5\penalty0 (3):\penalty0 25, 2017.

\bibitem[da~Cruz~Figueira et~al.(2017)da~Cruz~Figueira, Pitombo, Larocca, et~al.]{da2017identification}
Aurenice da~Cruz~Figueira, Cira~Souza Pitombo, Ana Paula~Camargo Larocca, et~al.
\newblock Identification of rules induced through decision tree algorithm for detection of traffic accidents with victims: A study case from brazil.
\newblock \emph{Case Studies on Transport Policy}, 5\penalty0 (2):\penalty0 200--207, 2017.

\bibitem[Dao et~al.(2021)Dao, Kamath, Syrgkanis, and Mackey]{dao2021knowledge}
Tri Dao, Govinda~M Kamath, Vasilis Syrgkanis, and Lester Mackey.
\newblock Knowledge distillation as semiparametric inference.
\newblock \emph{arXiv preprint arXiv:2104.09732}, 2021.

\bibitem[Feldman and Gross(2005)]{feldman2005mortgage}
David Feldman and Shulamith Gross.
\newblock Mortgage default: classification trees analysis.
\newblock \emph{The Journal of Real Estate Finance and Economics}, 30:\penalty0 369--396, 2005.

\bibitem[Friedman(1991)]{friedman1991multivariate}
Jerome~H Friedman.
\newblock Multivariate adaptive regression splines.
\newblock \emph{The annals of statistics}, 19\penalty0 (1):\penalty0 1--67, 1991.

\bibitem[Hannan and Anmala(2021)]{hannan2021classification}
Abdul Hannan and Jagadeesh Anmala.
\newblock {Classification and prediction of fecal coliform in stream waters using decision trees (DTs) for upper Green River watershed, Kentucky, USA}.
\newblock \emph{Water}, 13\penalty0 (19):\penalty0 2790, 2021.

\bibitem[Hassanzadeh et~al.(2021)Hassanzadeh, Dervovic, Assefa, Reddy, and Veloso]{hassanzadeh2021tradeoffs}
Parisa Hassanzadeh, Danial Dervovic, Samuel Assefa, Prashant Reddy, and Manuela Veloso.
\newblock Tradeoffs in streaming binary classification under limited inspection resources.
\newblock In \emph{Proceedings of the Second ACM International Conference on AI in Finance}, pages 1--9, 2021.

\bibitem[Helal(2016)]{helal2016subgroup}
Sumyea Helal.
\newblock Subgroup discovery algorithms: a survey and empirical evaluation.
\newblock \emph{Journal of computer science and technology}, 31\penalty0 (3):\penalty0 561--576, 2016.

\bibitem[Herman and Giuliani(2018)]{herman2018policy}
Jonathan~D Herman and Matteo Giuliani.
\newblock Policy tree optimization for threshold-based water resources management over multiple timescales.
\newblock \emph{Environmental Modelling \& Software}, 99:\penalty0 39--51, 2018.

\bibitem[Herrera et~al.(2011)Herrera, Carmona, Gonz{\'a}lez, and Del~Jesus]{herrera2011overview}
Franciso Herrera, Crist{\'o}bal~Jos{\'e} Carmona, Pedro Gonz{\'a}lez, and Mar{\'\i}a~Jos{\'e} Del~Jesus.
\newblock An overview on subgroup discovery: foundations and applications.
\newblock \emph{Knowledge and information systems}, 29\penalty0 (3):\penalty0 495--525, 2011.

\bibitem[Kashani and Mohaymany(2011)]{kashani2011analysis}
Ali~Tavakoli Kashani and Afshin~Shariat Mohaymany.
\newblock Analysis of the traffic injury severity on two-lane, two-way rural roads based on classification tree models.
\newblock \emph{Safety Science}, 49\penalty0 (10):\penalty0 1314--1320, 2011.

\bibitem[Kitagawa and Tetenov(2018)]{kitagawa2018should}
Toru Kitagawa and Aleksey Tetenov.
\newblock Who should be treated? empirical welfare maximization methods for treatment choice.
\newblock \emph{Econometrica}, 86\penalty0 (2):\penalty0 591--616, 2018.

\bibitem[Koyejo et~al.(2014)Koyejo, Natarajan, Ravikumar, and Dhillon]{koyejo2014consistent}
Oluwasanmi~O Koyejo, Nagarajan Natarajan, Pradeep~K Ravikumar, and Inderjit~S Dhillon.
\newblock Consistent binary classification with generalized performance metrics.
\newblock \emph{Advances in neural information processing systems}, 27, 2014.

\bibitem[Lakshmi and Kavilla(2018)]{lakshmi2018machine}
SVSS Lakshmi and Selvani~Deepthi Kavilla.
\newblock Machine learning for credit card fraud detection system.
\newblock \emph{International Journal of Applied Engineering Research}, 13\penalty0 (24):\penalty0 16819--16824, 2018.

\bibitem[Lavra{\v{c}} et~al.(2004)Lavra{\v{c}}, Kav{\v{s}}ek, Flach, and Todorovski]{lavravc2004subgroup}
Nada Lavra{\v{c}}, Branko Kav{\v{s}}ek, Peter Flach, and Ljup{\v{c}}o Todorovski.
\newblock Subgroup discovery with cn2-sd.
\newblock \emph{Journal of Machine Learning Research}, 5\penalty0 (Feb):\penalty0 153--188, 2004.

\bibitem[Lee and Floridi(2021)]{lee2021algorithmic}
Michelle Seng~Ah Lee and Luciano Floridi.
\newblock Algorithmic fairness in mortgage lending: from absolute conditions to relational trade-offs.
\newblock \emph{Minds and Machines}, 31\penalty0 (1):\penalty0 165--191, 2021.

\bibitem[Liu et~al.(2018)Liu, Wang, and Matwin]{liu2018improving}
Xuan Liu, Xiaoguang Wang, and Stan Matwin.
\newblock Improving the interpretability of deep neural networks with knowledge distillation.
\newblock In \emph{2018 IEEE International Conference on Data Mining Workshops (ICDMW)}, pages 905--912. IEEE, 2018.

\bibitem[Madaan et~al.(2021)Madaan, Kumar, Keshri, Jain, and Nagrath]{madaan2021loan}
Mehul Madaan, Aniket Kumar, Chirag Keshri, Rachna Jain, and Preeti Nagrath.
\newblock Loan default prediction using decision trees and random forest: A comparative study.
\newblock In \emph{IOP Conference Series: Materials Science and Engineering}, volume 1022, page 012042. IOP Publishing, 2021.

\bibitem[Mahendran et~al.(2022)Mahendran, Lizotte, and Bauer]{mahendran2022quantitative}
Mayuri Mahendran, Daniel Lizotte, and Greta~R Bauer.
\newblock Quantitative methods for descriptive intersectional analysis with binary health outcomes.
\newblock \emph{SSM-Population Health}, 17:\penalty0 101032, 2022.

\bibitem[Mann et~al.(2008)Mann, Ellis, Waternaux, Liu, Oquendo, Malone, Brodsky, Haas, and Currier]{mann2008classification}
J~John Mann, Steven~P Ellis, Christine~M Waternaux, Xinhua Liu, Maria~A Oquendo, Kevin~M Malone, Beth~S Brodsky, Gretchen~L Haas, and Dianne Currier.
\newblock Classification trees distinguish suicide attempters in major psychiatric disorders: a model of clinical decision making.
\newblock \emph{Journal of Clinical Psychiatry}, 69\penalty0 (1):\penalty0 23, 2008.

\bibitem[Mbakop and Tabord-Meehan(2021)]{mbakop2021model}
Eric Mbakop and Max Tabord-Meehan.
\newblock Model selection for treatment choice: Penalized welfare maximization.
\newblock \emph{Econometrica}, 89\penalty0 (2):\penalty0 825--848, 2021.

\bibitem[Menon et~al.(2013)Menon, Narasimhan, Agarwal, and Chawla]{menon2013statistical}
Aditya Menon, Harikrishna Narasimhan, Shivani Agarwal, and Sanjay Chawla.
\newblock On the statistical consistency of algorithms for binary classification under class imbalance.
\newblock In \emph{International Conference on Machine Learning}, pages 603--611. PMLR, 2013.

\bibitem[Nan et~al.(2012)Nan, Chai, Lee, and Chieu]{nan2012optimizing}
Ye~Nan, Kian~Ming Chai, Wee~Sun Lee, and Hai~Leong Chieu.
\newblock Optimizing f-measure: A tale of two approaches.
\newblock \emph{arXiv preprint arXiv:1206.4625}, 2012.

\bibitem[Obereigner et~al.(2021)Obereigner, Tkachenko, and del Re]{obereigner2021methods}
Gunda Obereigner, Pavlo Tkachenko, and Luigi del Re.
\newblock Methods for traffic data classification with regard to potential safety hazards.
\newblock \emph{IFAC-PapersOnLine}, 54\penalty0 (7):\penalty0 250--255, 2021.

\bibitem[Pedregosa et~al.(2011)Pedregosa, Varoquaux, Gramfort, Michel, Thirion, Grisel, Blondel, Prettenhofer, Weiss, Dubourg, Vanderplas, Passos, Cournapeau, Brucher, Perrot, and Duchesnay]{scikit-learn}
F.~Pedregosa, G.~Varoquaux, A.~Gramfort, V.~Michel, B.~Thirion, O.~Grisel, M.~Blondel, P.~Prettenhofer, R.~Weiss, V.~Dubourg, J.~Vanderplas, A.~Passos, D.~Cournapeau, M.~Brucher, M.~Perrot, and E.~Duchesnay.
\newblock Scikit-learn: Machine learning in {P}ython.
\newblock \emph{Journal of Machine Learning Research}, 12:\penalty0 2825--2830, 2011.

\bibitem[Saghebian et~al.(2014)Saghebian, Sattari, Mirabbasi, and Pal]{saghebian2014ground}
S~Mehdi Saghebian, M~Taghi Sattari, Rasoul Mirabbasi, and Mahesh Pal.
\newblock Ground water quality classification by decision tree method in ardebil region, iran.
\newblock \emph{Arabian Journal of Geosciences}, 7:\penalty0 4767--4777, 2014.

\bibitem[Sahin et~al.(2013)Sahin, Bulkan, and Duman]{sahin2013cost}
Yusuf Sahin, Serol Bulkan, and Ekrem Duman.
\newblock A cost-sensitive decision tree approach for fraud detection.
\newblock \emph{Expert Systems with Applications}, 40\penalty0 (15):\penalty0 5916--5923, 2013.

\bibitem[Sarkar et~al.(2024)Sarkar, Majhi, Pathak, Biswas, Mahapatra, Kumar, Bhatt, Kuniyal, and Nautiyal]{sarkar2024ensembling}
Mriganka~Shekhar Sarkar, Bishal~Kumar Majhi, Bhawna Pathak, Tridipa Biswas, Soumik Mahapatra, Devendra Kumar, Indra~D Bhatt, Jagadish~C Kuniyal, and Sunil Nautiyal.
\newblock Ensembling machine learning models to identify forest fire-susceptible zones in northeast india.
\newblock \emph{Ecological Informatics}, 81:\penalty0 102598, 2024.

\bibitem[Save et~al.(2017)Save, Tiwarekar, Jain, and Mahyavanshi]{save2017novel}
Prajal Save, Pranali Tiwarekar, Ketan~N Jain, and Neha Mahyavanshi.
\newblock A novel idea for credit card fraud detection using decision tree.
\newblock \emph{International Journal of Computer Applications}, 161\penalty0 (13), 2017.

\bibitem[Shouman et~al.(2011)Shouman, Turner, and Stocker]{shouman2011using}
Mai Shouman, Tim Turner, and Rob Stocker.
\newblock Using decision tree for diagnosing heart disease patients.
\newblock \emph{AusDM}, 11:\penalty0 23--30, 2011.

\bibitem[Singh and Khim(2022)]{singh2022optimal}
Shashank Singh and Justin~T Khim.
\newblock Optimal binary classification beyond accuracy.
\newblock \emph{Advances in Neural Information Processing Systems}, 35:\penalty0 18226--18240, 2022.

\bibitem[Smith et~al.(1988)Smith, Everhart, Dickson, Knowler, and Johannes]{smith1988using}
Jack~W Smith, James~E Everhart, WC~Dickson, William~C Knowler, and Robert~Scott Johannes.
\newblock Using the adap learning algorithm to forecast the onset of diabetes mellitus.
\newblock In \emph{Proceedings of the annual symposium on computer application in medical care}, page 261. American Medical Informatics Association, 1988.

\bibitem[Speiser et~al.(2018)Speiser, Karvellas, Shumilak, Sligl, Mirzanejad, Gurka, Kumar, and Kumar]{speiser2018predicting}
Jaime~L Speiser, Constantine~J Karvellas, Geoffery Shumilak, Wendy~I Sligl, Yazdan Mirzanejad, Dave Gurka, Aseem Kumar, and Anand Kumar.
\newblock Predicting in-hospital mortality in pneumonia-associated septic shock patients using a classification and regression tree: a nested cohort study.
\newblock \emph{Journal of intensive care}, 6:\penalty0 1--10, 2018.

\bibitem[Toth et~al.(2021)Toth, Gibbs, Moczygemba, and McLeod]{toth2021decision}
Elena~G Toth, David Gibbs, Jackie Moczygemba, and Alexander McLeod.
\newblock Decision tree modeling in r software to aid clinical decision making.
\newblock \emph{Health and Technology}, 11\penalty0 (3):\penalty0 535--545, 2021.

\bibitem[Van~der Vaart(2000)]{van2000asymptotic}
Aad~W Van~der Vaart.
\newblock \emph{Asymptotic statistics}, volume~3.
\newblock Cambridge university press, 2000.

\bibitem[Varian(2014)]{varian2014big}
Hal~R Varian.
\newblock Big data: New tricks for econometrics.
\newblock \emph{Journal of Economic Perspectives}, 28\penalty0 (2):\penalty0 3--28, 2014.

\bibitem[Wager and Athey(2018)]{wager2018estimation}
Stefan Wager and Susan Athey.
\newblock Estimation and inference of heterogeneous treatment effects using random forests.
\newblock \emph{Journal of the American Statistical Association}, 113\penalty0 (523):\penalty0 1228--1242, 2018.

\bibitem[Waheed et~al.(2006)Waheed, Bonnell, Prasher, and Paulet]{waheed2006measuring}
T~Waheed, RB~Bonnell, Shiv~O Prasher, and E~Paulet.
\newblock {Measuring performance in precision agriculture: CART—A decision tree approach}.
\newblock \emph{Agricultural Water Management}, 84\penalty0 (1-2):\penalty0 173--185, 2006.

\bibitem[Wang and Ahn(2025)]{wang2025disentangling}
Lei~Bill Wang and Sooa Ahn.
\newblock Disentangling barriers to welfare program participation with semiparametric and mixed effect approaches.
\newblock \emph{arXiv preprint arXiv:2506.03457}, 2025.

\bibitem[Zheng et~al.(2023)Zheng, Lyu, Zhang, Jiang, and Zhou]{zheng2023consistency}
Qin-Cheng Zheng, Shen-Huan Lyu, Shao-Qun Zhang, Yuan Jiang, and Zhi-Hua Zhou.
\newblock On the consistency rate of decision tree learning algorithms.
\newblock In \emph{International Conference on Artificial Intelligence and Statistics}, pages 7824--7848. PMLR, 2023.

\bibitem[Zou and Khern-am nuai(2023)]{zou2023ai}
Leying Zou and Warut Khern-am nuai.
\newblock {AI and housing discrimination: the case of mortgage applications}.
\newblock \emph{AI and Ethics}, 3\penalty0 (4):\penalty0 1271--1281, 2023.

\end{thebibliography}
\bibliographystyle{plainnat}




\appendix
\section{Additional literature reivew}
\subsection{Extensive use of CART for LPC setups}\label{appendix Extensive use of CART for policy targeting}
\textbf{Traffic safety:} \citet{da2017identification} identifies potential sites of serious accidents in Brazil using different decision tree algorithms. Policymakers may choose to take more safety precautions at dangerous traffic spots whose probability of having a fatal accident is above a threshold. Other research that uses decision trees to identify dangerous traffic situations includes \citet{clarke1998machine,kashani2011analysis,obereigner2021methods}.

\textbf{Fraud detection:} \citet{sahin2013cost, save2017novel,lakshmi2018machine}  find conditions under which credit card fraudulent usage is likely to happen. Credit card companies can inform card owners when the probability of fraudulent usage is above a pre-specified threshold.


\textbf{Mortgage lending:} 
\citet{feldman2005mortgage, cciugcsar2019comparison, madaan2021loan} predict different subpopulations' mortgage (and other types of loan) default rates using a decision tree. Mortgage loan lenders can use the result to determine whether to deny a loan request. Another stream of literature on mortgage lending using decision trees focuses on racial discrimination \citep{varian2014big, lee2021algorithmic, zou2023ai}.  Policymakers may want to intervene in situations where with high enough probability race seems to play a role in determining whether mortgage lending is denied and the probability of denial is high.

\textbf{Health intervention:} \citet{mann2008classification, shouman2011using, crowe2017weight, speiser2018predicting, toth2021decision, mahendran2022quantitative} classify diabetes (or other diseases) using relevant risk factors. Such classification result leads to different health interventions. Doctors may recommend various treatments based on whether the subpopulation a patient belongs to is more likely to be classified as type I or type II diabetes.

\textbf{Water management:} One of the objectives of \citet{herman2018policy} is to manage flood control and output a threshold-based water resources management policy. The authors output a set of conditions that define the subpopulation whose probability of flooding is high enough to justify policy intervention. Many other studies also use decision trees to classify whether the water quality is satisfactory, resulting in important implications to water management policies \citep{waheed2006measuring, saghebian2014ground,hannan2021classification}.

\subsection{Comparing LPC and policy learning/causal subgroup/individualized treatment rules} \label{appendix comparing LPC and policy learning}
Policy targeting can be largely divided into two cases: policy targeting with treatment testing and policy targeting without treatment testing. The former is a burgeoning literature termed policy learning/causal subgroup/individualized treatment rules. The LPC framework falls under the latter, imposing no requirement on treatment testing. Here we provide two common scenarios where LPC is useful but policy learning is not possible.

\textit{What if the treatment cannot be/has not yet been tested in real life?}

Our methods do not need a randomized control trial/quasi-experiment/instrumental variable for unconfoundedness, which is a key assumption for the policy learning literature. Experimental data (or observational data that satisfies unconfoundedness) is not always available.

For instance, in the case of tax credit programs (the example in our introduction), it is nearly impossible to experiment since the tax credit program is a one-time, urgent assistance package for households during financial crisis. Policymakers are unlikely to test out the assistance package with a random set of households and then evaluate how to allocate these assistance packages at the population level. There are two reasons. One, such an experiment is ethically questionable; two, by the time such an experiment is completed, the financial crisis might have already been over.

In such cases, classic policy learning is impractical. LPC can help refine policy design by identifying and targeting the subgroups that are more likely to be financially constrained. It is much faster for the policymakers (in this case, the Italian central bank) to collect data on households' financial information (e.g., whether the household is financially constrained) than to experiment.

\textit{What if the policymakers do not have a specific treatment in mind?}

Another scenario that happens often is that the policymakers may decide the treatment after finding the vulnerable subgroups. In this case, testing the treatment is not possible, since there is no treatment to begin with.

Example 1: traffic safety

\cite{da2017identification} uses CART to find out that for the northern part of BR-116 (a highway in Sao Paulo, Brazil), the severe traffic rate (FSI in the paper) is higher than 50\% (severe means involving human injury/death, the rate is the number of severe events divided by the total number of accidents).
In contrast, the southern part of BR-116 has a higher than 50\% FSI during the peak hours (12 pm to 6 pm) when there is drizzle.
Policymakers can analyze these two geographic regions separately and use different interventions for the northern and southern parts of BR-116.

For the northern part, there might be some geographic features (maybe curvy turns) that make the road dangerous at all times, then installing traffic mirrors/stop signs/traffic lights at curvy turns would be good solutions.
For the southern part, since it is during specific times, then sending out more traffic police during those hours to check on the speed limit would work better with minimal disruption to ongoing traffic.

Example 2: welfare take-up

Another example is \cite{wang2025disentangling}, in their paper, they find subgroups among eligible households for WIC (a welfare program) who have a low probability of choosing to participate in WIC (they use random forest instead of CART). They then analyze which subgroup is less likely to participate for what reason: unawareness, limited usefulness, hassle or stigma. For different reasons, different interventions are suggested. For example, an information campaign is suggested for higher-educated subgroups, and choice-inducing strategies are suggested for those who are already in the programs.

In both examples, the treatment is decided after the subgroups are found. This is inherently contradictory to the premise of policy learning that treatment has to be tested before finding the targeting subgroups.

\section{Mathematical appendix} \label{appendix proofs}

We state a property of the CART splitting rule in Lemma~\ref{lemma mid point}. This is useful for establishing the rest of the theoretical results.


\begin{lemma} \label{lemma mid point}
$2\eta(s^{CART}) - \mu_L(s^{CART}) - \mu_R(s^{CART}) = 0$ and $\mu_L(s^{CART}) \neq \mu_R(s^{CART})$, 
\end{lemma}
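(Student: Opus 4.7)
The natural starting point is to exploit the identity $F_X(s)\mu_L(s) + (1-F_X(s))\mu_R(s) = \ep[\eta(X)]$, valid for every $s\in(0,1)$ since both sides equal $\int_0^1 \eta(x)\, dF(x)$. Combined with $\mathrm{Var}(Y\mid X\leq s) = \mu_L(s) - \mu_L(s)^2$ and the analogous identity on the right, minimizing $\mathcal{G}^{CART}(s)$ is equivalent to maximizing $Q(s):= F_X(s)\mu_L(s)^2 + (1-F_X(s))\mu_R(s)^2$. A short algebraic manipulation yields the between-group decomposition
\[
Q(s) = (\ep\eta)^2 + F_X(s)\bigl(1-F_X(s)\bigr)\bigl(\mu_L(s)-\mu_R(s)\bigr)^2,
\]
which I will reuse at the end of the argument.

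Next I would compute $Q'$. Applying the Leibniz rule to $F_X(s)\mu_L(s)=\int_0^s \eta\,dF$ and to $(1-F_X(s))\mu_R(s)=\int_s^1 \eta\,dF$ gives $\mu_L'(s)=f(s)(\eta(s)-\mu_L(s))/F_X(s)$ and $\mu_R'(s)=f(s)(\mu_R(s)-\eta(s))/(1-F_X(s))$. Substituting into $Q'(s)$ collapses, after a short simplification, to the factorization
\[
Q'(s) = f(s)\,\bigl(\mu_L(s)-\mu_R(s)\bigr)\,\bigl(2\eta(s)-\mu_L(s)-\mu_R(s)\bigr).
\]
Because $f>0$ on $(0,1)$ and $s^{CART}$ is an interior maximizer (see below), the first-order condition $Q'(s^{CART})=0$ forces either $\mu_L(s^{CART})=\mu_R(s^{CART})$ or $2\eta(s^{CART})=\mu_L(s^{CART})+\mu_R(s^{CART})$.

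To finish, I would rule out the first alternative. From the decomposition above, $\mu_L(s)=\mu_R(s)$ yields $Q(s)=(\ep\eta)^2$, which is the minimum possible value of $Q$, so CART rejects this split as long as some other split does strictly better. Existence of such a better split uses the (implicit but natural) assumption that $\eta$ is not $F$-a.s.\ constant: if instead $\mu_L(s)\equiv \ep\eta$ on $(0,1)$, differentiating $F_X(s)\mu_L(s)=\int_0^s \eta\,dF$ gives $\eta(s)=\ep\eta$ almost everywhere, contradicting non-constancy. A small side check that $s^{CART}$ is interior follows from continuity of $\mu_L,\mu_R$ together with $Q(s)\to(\ep\eta)^2$ at both endpoints. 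Therefore $\mu_L(s^{CART})\neq \mu_R(s^{CART})$, and the remaining factor of the FOC delivers the claimed identity. The main obstacle is precisely this last step: the FOC only produces a dichotomy, and resolving it requires both the interiority of $s^{CART}$ and the non-constancy of $\eta$, whereas the derivative computation itself is essentially mechanical.
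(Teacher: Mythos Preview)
Your proposal is correct and proceeds along the same line as the paper's proof: rewrite $\mathcal{G}^{CART}$ using $F_X(s)\mu_L(s)+(1-F_X(s))\mu_R(s)=\ep\eta$, differentiate, and obtain the factorization $f(s)(\mu_L-\mu_R)(2\eta(s)-\mu_L-\mu_R)$, then argue that the factor $\mu_L-\mu_R$ cannot vanish at the optimum. The derivative computation and the interiority check are essentially identical in both arguments (the paper's Lemma~\ref{lemma FOC} compares $\mathcal{G}^{CART}(0)$ with $\mathcal{G}^{CART}(s')$, which is precisely your between-group identity evaluated at the boundary).

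Where you diverge from the paper is in the elimination of $\mu_L(s^{CART})=\mu_R(s^{CART})$. The paper computes the second derivative of $\mathcal{G}^{CART}$ at a critical point with $\mu_L=\mu_R$ and shows it is proportional to $-2(\eta(s)-\mu_L)^2\leq 0$, so any such point is a local \emph{maximum} of $\mathcal{G}^{CART}$ and hence not the global minimum. You instead invoke the decomposition $Q(s)=(\ep\eta)^2+F_X(s)(1-F_X(s))(\mu_L-\mu_R)^2$ once more: any $s$ with $\mu_L(s)=\mu_R(s)$ achieves the global minimum $(\ep\eta)^2$ of $Q$, so it cannot be the argmax unless $\mu_L\equiv\mu_R$, which you rule out by differentiating $F_X(s)\mu_L(s)=\int_0^s\eta\,dF$ and invoking non-constancy of $\eta$. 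Your route is shorter and more conceptual, sidestepping the second-derivative bookkeeping; the paper's route, on the other hand, yields the slightly finer local information that such points are strict local maxima of $\mathcal{G}^{CART}$ whenever $\eta(s)\neq\mu_L(s)$, which is not needed here but could be useful elsewhere.
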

Proof of Lemma~\ref{lemma mid point} needs the following lemma, which is proved after the proof of Lemma~\ref{lemma mid point}.
\begin{lemma}\label{lemma FOC}
    Given the problem setup, for any node $t$, assume that $\eta(X)$ is continuous and not a constant,  $\frac{\partial \mathcal{G}^{CART}(s)}{\partial s} = 0$ is a necessary condition for $s$ to be an optimal split point.
\end{lemma}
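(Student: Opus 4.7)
The plan is to reduce the statement to showing that any minimizer of $\mathcal{G}^{CART}$ on $[0,1]$ lies in the open interval $(0,1)$; the first-order condition then follows from differentiability of $\mathcal{G}^{CART}$ on $(0,1)$ and the standard interior-optimum argument from calculus.

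First, I would rewrite $\mathcal{G}^{CART}$ using the law of total variance applied to $Y$ conditional on the indicator $\mathbbm{1}\{X \leq s\}$, obtaining
\begin{equation*}
\mathcal{G}^{CART}(s) \;=\; \mathrm{Var}(Y) \;-\; F_X(s)\bigl(1 - F_X(s)\bigr)\bigl(\mu_L(s) - \mu_R(s)\bigr)^2.
\end{equation*}
Evaluated at $s = 0$ or $s = 1$, the factor $F_X(s)(1-F_X(s)) = 0$, so $\mathcal{G}^{CART}$ reduces to $\mathrm{Var}(Y)$ at the boundary. It therefore suffices to exhibit one interior $s_0$ at which the subtracted term is strictly positive. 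Because $f$ is continuous and strictly positive on $[0,1]$, $F_X$ is strictly increasing, so $F_X(s)(1-F_X(s)) > 0$ for every $s \in (0,1)$.

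Second, I would use non-constancy of $\eta$ to produce some $s_0 \in (0,1)$ with $\mu_L(s_0) \neq \mu_R(s_0)$. Arguing by contradiction, if $\mu_L(s) = \mu_R(s)$ for all $s \in (0,1)$, then both quantities equal $\mu := \int_0^1 \eta \, dF$, so $\int_0^s \eta(x) \, dF(x) = \mu F_X(s)$; differentiating and dividing by $f(s) > 0$ yields $\eta(s) = \mu$ on $(0,1)$, contradicting the hypothesis. Consequently $\mathcal{G}^{CART}(s_0) < \mathrm{Var}(Y) = \mathcal{G}^{CART}(0) = \mathcal{G}^{CART}(1)$, so every minimizer on $[0,1]$ lies in $(0,1)$.

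Finally, by Leibniz's rule $\mu_L(s)$ and $\mu_R(s)$ are differentiable on $(0,1)$, and $F_X$ is $C^1$ since $f$ is continuous, so $\mathcal{G}^{CART}$ is differentiable on $(0,1)$. Any interior minimizer must satisfy $\partial \mathcal{G}^{CART}(s)/\partial s = 0$, which is the claimed necessary condition. The only subtle step is ruling out boundary optima, and the key leverage is the implication from non-constancy of $\eta$ to non-equality of node means for at least one split; this is where the continuity and non-constancy hypotheses are used essentially.
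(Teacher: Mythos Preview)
Your proof is correct and follows essentially the same route as the paper: rule out the boundary points by exhibiting an interior split with strictly smaller impurity, then invoke the first-order condition for an interior minimizer. The only cosmetic difference is that you obtain the identity $\mathcal{G}^{CART}(s) = \mathrm{Var}(Y) - F_X(s)(1-F_X(s))(\mu_L(s)-\mu_R(s))^2$ via the law of total variance, whereas the paper derives the equivalent equality $\mathcal{G}^{CART}(0) = \mathcal{G}^{CART}(s') + (\mu_L-\mu_R)^2 F(s')(1-F(s'))$ by direct algebraic expansion; your contradiction argument for the existence of $s_0$ with $\mu_L(s_0)\neq\mu_R(s_0)$ is also more explicit than the paper's bare assertion.
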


{\bf Proof for Lemma \ref{lemma mid point}.}
\begin{proof}
    For a fixed $s\in[0,1]$, the impurity score after the split is given as:
    \begin{align}
        \mathcal{G}^{CART}(s)
        &= (\mu_L - \mu_L^2)F(s) + (\mu_R - \mu_R^2)(1-F(s)) \label{impurity sum} \\
        &= E_t - \mu_L^2F(s) - \mu_R^2(1-F(s)),\nonumber
    \end{align}
    where $E_t = \int_0^1 \eta(x)f(x)dx$ is free of $s$. Moreover, 
    \begin{align}
    \frac{\partial \mu_L}{\partial s} =&\ \frac{\partial \int_0^s \eta(x)f(x)dx/F(s)}{\partial s} = \frac{f(s)}{F(s)}\left( \eta(s) -\mu_L \right) \label{dElds}\\
    \frac{\partial \mu_R}{\partial s} =&\ \frac{\partial \int_s^1 \eta(x)f(x)dx/(1-F(s))}{\partial s} = \frac{f(s)}{1-F(s)}\left(-\eta(s) + \mu_R\right)\label{dErds},
    \end{align}
    Using (\ref{dElds}) and (\ref{dErds}), we simplify the first-order derivative 
    \begin{align}
        \frac{\partial \mathcal{G}^{CART}(s)}{\partial s} =&\ -2\mu_LF(s)\frac{\partial \mu_L}{\partial s} -\mu_L^2f(s) -2\mu_R(1-F(s))\frac{\partial \mu_R}{\partial s} + \mu_R^2f(s) \nonumber\\
        =&\ f(s)\left( -2\mu_L\left(\eta(s) - \mu_L\right)-\mu_L^2 - 2\mu_R\left(-\eta(s) +\mu_R\right) + \mu_R^2 \right)\nonumber\\
        =&\ f(s) (-2\mu_L\eta(s) + E^2_{t_L} + 2\mu_R\eta(s) - E^2_{t_R})\nonumber\\
        =&\ f(s) (2\eta(s) - \mu_L - \mu_R)(\mu_R - \mu_L)\label{der_G}\\
        \propto&\ (2\eta(s) - \mu_L - \mu_R)(\mu_R - \mu_L)\nonumber. 
    \end{align}
By Lemma \ref{lemma FOC},  $\frac{\partial \mathcal{G}^{CART}(s)}{\partial s} = 0$ is a necessary condition for $s$ to be an optimal split point. 

Given that $f$ is strictly positive, we have that $\frac{\partial \mathcal{G}^{CART}(s)}{\partial s} = 0$ iff $(2\eta(s) - \mu_L - \mu_R)(\mu_R - \mu_L) = 0$. The next step is to rule out the possibility that $\mu_R - \mu_L = 0$ outputs a local maximum. 
\begin{align*}
    \frac{\partial^2 \mathcal{G}^{CART}(s)}{\partial s^2} =&\ \underbrace{(2\eta(s) - \mu_L - \mu_R)(\mu_R - \mu_L)}_{=0 \text{ by the first-order condition}}\frac{\partial f(s)}{\partial s} + \frac{\partial (2\eta(s) - \mu_L - \mu_R)(\mu_R - \mu_L)}{\partial s} f(s) \\
    =&\ \frac{\partial (2\eta(s) - \mu_L - \mu_R)(\mu_R - \mu_L)}{\partial s} f(s) \\
    \propto&\ \frac{\partial (2\eta(s) - \mu_L - \mu_R)(\mu_R - \mu_L)}{\partial s}
\end{align*}

Consider the second derivative of $\mathcal{G}(s)$ with respect to $s$ with $\mu_L = \mu_R$, by \eqref{dElds} and \eqref{dErds}:

\begin{align*}
    \frac{\partial^2 \mathcal{G}^{CART}(s)}{\partial s^2} \propto&\ \frac{\partial (2\eta(s) - \mu_L - \mu_R)}{\partial s} \underbrace{(\mu_R - \mu_L)}_{=0} + (2\eta(s) - \mu_L - \mu_R)\left( \frac{\partial \mu_R}{\partial s} - \frac{\partial \mu_L}{\partial s} \right) \\ 
    \propto&\ (2\eta(s) - \mu_L - \mu_R)\left( - \frac{\eta(s)}{1-F(s)} + \frac{\mu_R}{(1-F(s))} -\frac{\eta(s)}{F(s)} + \frac{\mu_L}{F(s)} \right)\\
    =&\ \frac{2\eta(s) - \mu_L - \mu_R}{F(s)(1-F(s))}\left( - \eta(s) + 
    \mu_L (1-F(s)) + \mu_R F(s)\right)
\end{align*}

If $\mu_L = \mu_R$, 
\begin{align*}
    \frac{\partial^2 \mathcal{G}^{CART}(s)}{\partial s^2} \propto&\ (2\eta(s) - \mu_L - \mu_R)(-\eta(s) + \mu_L) = -2(\eta(s) - \mu_L)^2.
\end{align*}
For $s$ s.t. $2\eta(s, t) - \mu_L - \mu_R \neq 0$ and $\mu_L = \mu_R$, $\frac{\partial \mathcal{G}^{CART}(s)}{\partial s} = 0$ and $\frac{\partial^2 \mathcal{G}^{CART}(s)}{\partial s^2} < 0$: $\mathcal{G}^{CART}(s)$ reaches its local maximum, which cannot be a global minimum. Such $s$ is not the optimal split. 

Global minimum must have $(2\eta(s) - \mu_L - \mu_R)(\mu_R - \mu_L) = 0$ and $\mu_R - \mu_L \neq 0$. Therefore, $2\eta(s, t) - \mu_L - \mu_R = 0$ holds for optimal split $s$ in any node $t$ and dimension $p$.
\end{proof}

{\bf Proof for Lemma \ref{lemma FOC}.}
\begin{proof}
    We first rule out the possibility that the optimal splitting happens at the boundary point. 

    Consider $s = 0$, then we write the impurity score as 
    \begin{align*}
        \mathcal{G}^{CART}(0) =&\ (\mu_L(0) - \mu_L^2(0))F(0) + (\mu_R(0) - \mu_R^2(0))(1-F(0)) \\
        =&\ \mu_R(0) - \mu_R^2(0) = \Bar{\eta} - \Bar{\eta}^2
    \end{align*}
    where $\Bar{\eta}$ denotes the mean probability of $Y=1$ for the entire node $t$.

    Since $\eta(X)$ is not a constant, we can find $s = s'$ such that $\mu_L(s') \neq \mu_R(s')$ and in general $\Bar{\eta} = \mu_L(s')F(s') + \mu_R(s') (1 - F(s'))$, where $0<F(s')<1$. The impurity score for $s = s'$ is 
    \begin{align*}
        \mathcal{G}^{CART}(s') = (\mu_L(s') - \mu_L^2(s'))F(s') + (\mu_R(s') - \mu_R^2(s'))(1 - F(s'))
    \end{align*}

    Using the equality $\Bar{\eta} = \mu_L(s')F(s') + \mu_R(s') (1 - F(s'))$, we can rewrite $\mathcal{G}(0)$ and show that it is strictly greater than $\mathcal{G}(s')$.

    \begin{align*}
        \mathcal{G}^{CART}(0) =&\ \mu_L(s')F(s') + \mu_R(s') (1 - F(s')) - (\mu_L(s')F(s') + \mu_R(s') (1 - F(s')))^2 \\
        =&\ \mu_L(s')F(s') + \mu_R(s') (1 - F(s')) - (\mu_L(s')F(s'))^2 - ((\mu_R(s') (1 - F(s')))^2\\
        &\ \quad - 2 (\mu_L(s')F(s')) (\mu_R(s') (1 - F(s')))\\
        =&\ \mu_L(s')F(s') + \mu_R(s') (1 - F(s')) - (\mu_L(s')F(s'))^2 - ((\mu_R(s') (1 - F(s')))^2 \\
        &\ \quad - \mu_L^2(s') F(s')(1 - F(s')) - \mu_R^2(s') F(s')(1 - F(s')) \\
        &\ \quad + \mu_L^2(s') F(s')(1 - F(s')) + \mu_R^2(s') F(s')(1 - F(s')) \\
        &\ \quad - 2 (\mu_L(s')F(s')) (\mu_R(s') (1 - F(s')))\\
        =&\ \mathcal{G}^{CART}(s') + \mu_L^2(s') F(s')(1 - F(s')) + \mu_R^2(s') F(s')(1 - F(s')) \\
        &\ \quad - 2 (\mu_L(s')F(s')) (\mu_R(s') (1 - F(s')))\\
        =&\ \mathcal{G}^{CART}(s') + (\mu_L(s') - \mu_R(s'))^2 F(s'))(1 - F(s')) > \mathcal{G}(s')
    \end{align*}
The inequality means that $s = 0$ can never be the optimal split. The proof for the case of $s = 1$ is similar. Hence, we show that the optimal split is not the boundary point.

Given that $\mathcal{G}^{CART}(s)$ is differentiable, its domain is closed and compact, and the boundary points are not the optimal splits. The first-order condition must be satisfied at all interior local optima (including the global minimum whose argument is the optimal splitting).
    
\end{proof}

\begin{lemma} \label{lemma dominate and small risk}
    If splitting rule $s'$ strictly dominates $s$, then $R(s') < R(s)$. 
\end{lemma}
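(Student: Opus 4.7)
The plan is to re-express $R(s)$ as the $F$-measure of the misclassification set
\[
M(s) := \{x \in [0,1] : \mathbbm{1}\{\mu_{t(x)}(s) > c\} \neq \mathbbm{1}\{\eta(x) > c\}\},
\]
which follows by rewriting the two integrals in the definition of $R(s)$ using the piecewise form of $\mu_{t(x)}$ and observing that the summed indicators inside each integral are exactly the characteristic function of ``node-mean classifier disagrees with $\mathbbm{1}\{\eta(x) > c\}$''. Comparing $R(s')$ and $R(s)$ then reduces to comparing the $F$-measures of $M(s')$ and $M(s)$.

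First I would establish the set inclusion $M(s') \subseteq M(s)$ by contrapositive. If $x \in M(s')$ and $\eta(x) > c$, then $\mu_{t(x)}(s') \leq c$; the contrapositive of the first implication in Definition \ref{Definition inadmissible} then forces $\mu_{t(x)}(s) \leq c$, so $x \in M(s)$. The $\eta(x) \leq c$ branch is symmetric via the second implication.

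Next, I would verify that the distinguished set $\mathcal{A}$ in Definition \ref{Definition inadmissible} is contained in $M(s) \setminus M(s')$: each of the two ``furthermore'' clauses describes precisely an $x$ that is misclassified under $s$ but correctly classified under $s'$. Since $f$ is continuous and strictly positive on the compact interval $[0,1]$, it attains a positive minimum, so any set of positive Lebesgue measure has strictly positive $F$-measure; hence $F(\mathcal{A}) > 0$. Combining, $R(s) - R(s') = F(M(s)) - F(M(s')) = F(M(s) \setminus M(s')) \geq F(\mathcal{A}) > 0$, giving the desired strict inequality.

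The main obstacle is purely bookkeeping: one has to align the two branches ($\eta(x) > c$ versus $\eta(x) \leq c$) with the correct contrapositive direction so that the inclusion runs $M(s') \subseteq M(s)$ rather than the reverse. Once that is kept straight, the remainder is a short set-theoretic comparison plus the observation that positivity of $f$ upgrades ``nonzero measure'' to ``nonzero $F$-measure''.
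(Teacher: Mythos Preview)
Your proposal is correct and is essentially the paper's own argument, just repackaged in set-theoretic language: where the paper compares the indicator-function integrands pointwise (via the same contrapositive you use) and then integrates, you first collapse $R(s)$ to $F(M(s))$ and compare the sets $M(s')\subseteq M(s)$. Your explicit remark that continuity and strict positivity of $f$ on $[0,1]$ upgrade ``nonzero measure'' of $\mathcal{A}$ to strictly positive $F$-measure is a small clarification the paper leaves implicit, but otherwise the two proofs coincide.
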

{\bf Proof for Lemma~\ref{lemma dominate and small risk}}
\begin{proof}
Using Definition \ref{Definition inadmissible} and contrapositive,
\begin{itemize}
    \item when $\eta(x) > c$, $\mu_{t(x)}(s') \leq c \implies \mu_{t(x)}(s) \leq c$ and
    \item when $\eta(x) \leq c$, $\mu_{t(x)}(s') > c \implies \mu_{t(x)}(s) > c$.
\end{itemize}
Therefore, $\forall\ x \in [0,1]$, 
\begin{align*}
    \mathbbm{1}\{ \eta(x) > c \} \mathbbm{1}\{\mu_{t(x)}(s') \leq c\} 
    \leq 
    \mathbbm{1}\{ \eta(x) > c \} \mathbbm{1}\{\mu_{t(x)}(s) \leq c\}
    \\
    \mathbbm{1}\{ \eta(x) \leq c \} \mathbbm{1}\{\mu_{t(x)}(s') > c\} 
    \leq
    \mathbbm{1}\{ \eta(x) \leq c \} \mathbbm{1}\{\mu_{t(x)}(s) > c\}   
\end{align*}
Also, there exists a set $\mathcal{A} \subseteq [0,1]$ with nonzero measure such that, $\forall\ x \in \mathcal{A}$, either (or both) of the following conditions is true
\begin{align*}
    \mathbbm{1}\{ \eta(x) > c \} \mathbbm{1}\{\mu_{t(x)}(s') \leq c\} 
    <
    \mathbbm{1}\{ \eta(x) > c \} \mathbbm{1}\{\mu_{t(x)}(s) \leq c\}
    \\
    \mathbbm{1}\{ \eta(x) \leq c \} \mathbbm{1}\{\mu_{t(x)}(s') > c\} 
    <
    \mathbbm{1}\{ \eta(x) \leq c \} \mathbbm{1}\{\mu_{t(x)}(s) > c\}   
\end{align*}

    \begin{align*}
        R(s') =&\ \int_{\dot{X}_t}^{\ddot{X}_t} \left( \mathbbm{1}\{ \eta(x) > c \} \mathbbm{1}\{\mu_{t(x)}(s') \leq c\}
        +
        \mathbbm{1}\{ \eta(x) \leq c \} \mathbbm{1}\{\mu_{t(x)}(s') > c\}   \right) f(x) dx \\
        <&\ \int_{\dot{X}_t}^{\ddot{X}_t} \left( \mathbbm{1}\{ \eta(x) > c \} \mathbbm{1}\{\mu_{t(x)}(s) \leq c\} +
        \mathbbm{1}\{ \eta(x) \leq c \} \mathbbm{1}\{\mu_{t(x)}(s) > c\}\right)  f(x) dx = R_t(s)
    \end{align*}
\end{proof}

{\bf Proof for Theorem~\ref{theorem tendency}}
\begin{proof}
    By Lemma~\ref{lemma mid point}, without loss of generality\footnote{Proof for the case $\mu_R(s^{CART}) < \eta(s^{CART}) < \mu_L(s^{CART})$ is similar.}, assume 
    \begin{align*}
        \mu_L(s^{CART}) < \eta(s^{CART}) = \frac{\mu_L(s^{CART}) + \mu_R(s^{CART})}{2} < \mu_R(s^{CART}).
    \end{align*}
    We sequentially prove the two claims in the lemma: the existence of $s^*$ and strict dominance.

   \noindent\textbf{Existence of $s^*$} When $\eta(s^{CART}, t) > c$, we have $(\eta(s^{CART}) - c)(\mu_R(s^{CART}) - \mu_L(s^{CART})) > 0$, so we want to show that $\exists\ s \in (0,s^{CART})$ such that $\eta(s) = c$.

    \begin{equation*}
        \mu_L = {\rm min}(\mu_L,\mu_R) \leq c \implies \exists\ \tilde{s}\in [0,s^{CART}] \text{ such that } \eta(\tilde{s}) \leq c
    \end{equation*}
    If $\eta(\tilde{s}) = c$, then the existence of $s^*$ is proven. If $\eta(\tilde{s}) < c$, then by intermediate value theorem, $\exists\ s\in (\tilde{s},s^{CART})$ such that $\eta(s)= c$, the proof for existence of $s^*$ is complete for $\eta(s^{CART}, t) > c$. The case when $\eta(s^{CART}, t) \leq c$ can be proved using the same logic.
    
    We showed that when $\eta(s^{CART}, t) > c$, there exists $s^*$ and the $s^*$ is in the range $(0,s^{CART})$. By continuity of $\eta$ and the definition of $s^*$, $\forall\ s \in (s^*,s^{CART}), \eta(s) > c$. By the same logic, we have that when $\eta(s^{CART}, t) \leq c$, $\forall\ s \in (s^{CART},s^*), \eta(s) \leq c$.

    \textbf{Strict dominance} We compare risks led by $s^{CART}$ and $s^*$:\\
    1. If $\eta(s^{CART}) \leq c$, for any $s\in (s^{CART}, s^*)$,  we have $\eta(s) \leq c$ and 
    \begin{align*}
    \mu_L(s) &= \int_{0}^s \eta(x) f(x) dx / F(s) = \frac{\int_{0}^{s^{CART}} \eta(x) f(x) dx + \int_{s^{CART}}^{s^{*}} \eta(x) f(x) dx}{F(s)} \\
    &< \frac{\int_{0}^{s^{CART}} \eta(x) f(x) dx + c\int_{s^{CART}}^{s^{*}}  f(x) dx}{F(s)} = \frac{\mu_L(s^{CART}) F(s^{CART}) + c(F(s) - F(s^{CART}))}{F(s)} < c\\
    \mu_R(s) &> \frac{\mu_R(s^{CART}) (1-F(s^{CART})) - c(F(s) - F(s^{CART}))}{1-F(s)} > c.
    \end{align*}

    Since the order that $\mu_L(s) \leq c < \mu_R(s)$ does not change and $\forall\ s \in (s^{CART},s^*), \eta(s) \leq c$, we have that when $X \in [0,s^{CART}]$, splitting rules $s$ and $s^{CART}$ are equivalent in classifying the latent probability; when $X \in (s^{CART},s)$ which has a nonzero measure, splitting rule $s$ performs strictly better than $s^{CART}$  in classifying the latent probability; when $X \in [s,1]$, splitting rules $s$ and $s^{CART}$ are equivalent in classifying the latent probability. Therefore, $s$ strictly dominates $s^{CART}$. \\
    2. If $\eta(s^{CART}) > c$, for $s\in (s^*, s^{CART})$,  we have $\eta(s) > c$, $s$ strictly dominates $s^{CART}$.
\end{proof}

{\bf Theoretical property of KD-CART}

To explore the theoretical property of KD-CART, we treat it as a CART that takes true $\eta(x)$ as input and splits the population based on $\eta(x)$. KD-CART uses criterion function $\mathcal{G}^{KD} := F_X(s) {\rm Var}(\eta(X)|X\leq s) + (1-F_X(s)) {\rm Var}(\eta(X)|X> s)$. 

\begin{lemma} \label{lemma mid point for KD}
    Define $s^{KD} := \argmin_{s\in (0,1)} \mathcal{G}^{KD}(s)$. $2\eta(s^{KD}) - \mu_{L}(s^{KD}) - \mu_{R}(s^{KD}) = 0$ and $\mu_L(s^{KD}) \neq \mu_R(s^{KD})$.
\end{lemma}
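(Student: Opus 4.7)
The plan is to observe that the KD criterion and the CART criterion differ only by a constant in $s$. Once this is established, $s^{KD} = s^{CART}$ and both claims follow immediately from Lemma~\ref{lemma mid point}.

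First, I would rewrite both criteria in terms of the common quantity
\[
  H(s) := F_X(s)\,\mu_L^2(s) + (1 - F_X(s))\,\mu_R^2(s).
\]
For CART, binarity of $Y$ gives $\mathrm{Var}(Y\mid X\le s) = \mu_L(s) - \mu_L^2(s)$ and an analogous formula on the right; combined with $F_X(s)\mu_L(s) = \int_0^s \eta(x)\,dF(x)$, one line of algebra yields $\mathcal{G}^{CART}(s) = \mathbb{E}[\eta(X)] - H(s)$. For KD-CART, using $\mathrm{Var}(\eta(X)\mid X\le s) = \mathbb{E}[\eta^2(X)\mid X\le s] - \mu_L^2(s)$ together with $F_X(s)\,\mathbb{E}[\eta^2(X)\mid X\le s] = \int_0^s \eta^2(x)\,dF(x)$, the same manipulation yields $\mathcal{G}^{KD}(s) = \mathbb{E}[\eta^2(X)] - H(s)$.

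Subtracting, $\mathcal{G}^{KD}(s) - \mathcal{G}^{CART}(s) = \mathbb{E}[\eta^2(X)] - \mathbb{E}[\eta(X)]$, which is independent of $s$. Hence the two criteria share the same minimizer, so $s^{KD} = s^{CART}$, and applying Lemma~\ref{lemma mid point} to $s^{CART}$ produces both identities asserted in the lemma.

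I do not expect any real obstacle here: the only thing to verify carefully is the algebraic reduction expressing each criterion as ``constant minus $H(s)$'', which is routine from the definitions of $\mu_L$ and $\mu_R$. If a self-contained derivation that avoids invoking Lemma~\ref{lemma mid point} were preferred, one could instead differentiate $H(s)$ directly using $\mu_L'(s) = f(s)(\eta(s)-\mu_L(s))/F_X(s)$ and the analogous formula for $\mu_R'(s)$, arrive at $H'(s) = f(s)\,(\mu_L(s)-\mu_R(s))\,(2\eta(s)-\mu_L(s)-\mu_R(s))$, and rule out $\mu_L(s^{KD}) = \mu_R(s^{KD})$ at the minimizer by noting that $\mu_L = \mu_R$ forces $H$ down to $\mathbb{E}[\eta(X)]^2$, which is strictly dominated by $H$ at any split separating level sets of $\eta$.
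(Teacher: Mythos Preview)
Your argument is correct. The key identity
\[
\mathcal{G}^{KD}(s) - \mathcal{G}^{CART}(s) = \mathbb{E}[\eta^2(X)] - \mathbb{E}[\eta(X)]
\]
is valid exactly as you wrote it, and it immediately yields that the two criteria share the same set of minimizers; Lemma~\ref{lemma mid point} then gives both conclusions.

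Comparing to the paper: the appendix does not actually contain a separate proof of this lemma---it is omitted, with the evident intent that the reader rerun the computation of Lemma~\ref{lemma mid point} with $\eta(X)$ in place of $Y$. Your fallback route (differentiating $H(s)$ directly and ruling out $\mu_L=\mu_R$ via the strict convexity inequality $F\mu_L^2+(1-F)\mu_R^2 > (F\mu_L+(1-F)\mu_R)^2$) is precisely that implied argument. Your main route, however, is genuinely different and cleaner: by recognizing that both criteria equal a constant minus the common between-node term $H(s)$, you avoid redoing any calculus and obtain the stronger fact $s^{KD}=s^{CART}$, which the paper never states explicitly. The trade-off is that your version leans on Lemma~\ref{lemma mid point} as a black box, whereas the paper's implied approach is self-contained; either is fine here.
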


The proof of Lemma \ref{lemma mid point for KD} is almost identical to Lemma \ref{lemma mid point} and is omitted here. The key take away is Theorem \ref{theorem tendency} also applies to $s^{KD}$, as proof of Theorem \ref{theorem tendency} only requires that $\eta(s^{CART})$ is strictly between $\mu_L$ and $\mu_R$, Lemma \ref{lemma mid point for KD} shows that $\eta(s^{KD})$ satisfies this condition. As a result, KD-CART does not minimize the misclassification risk.

\textbf{Explanation for discrete $X$}

Say $X \in \{a_1, a_2, \dots, a_K\}$ where $a_1, a_2, \dots, a_K$ are ordered and $a_1 < a_2 < \dots < a_K$. Assumption \ref{assumption unique intersection between eta and c} (a discrete $X$ version) would guarantee that for one unique split rule, for all possible values of $X$ in the left node, $\eta(X) > c$ and for all possible values of $X$ in the right node, $\eta(X) < c$, or vice versa. We can identify this unique rule by simply comparing $\eta(a_k)$ and $c$ for $k = 1, 2, \dots, K$. Estimation boils down to estimating $\eta(X)$ for finitely many $X$, which is trivial.

We find it clearer to prove Corollary \ref{theorem monotonic eta uniform X} and then plug in some of the steps in the proof for Corollary \ref{theorem monotonic eta uniform X} into proof for Theorem \ref{theorem unique intersection between eta and c}. Hence, we first present the proof for Corollary \ref{theorem monotonic eta uniform X}.

\begin{corollary}\label{theorem monotonic eta uniform X}
    If $X \sim {\rm Unif}[0,1]$, and $\eta(X)$ is monotonic $\forall~X \in [0,1]$ and is strictly monotonic and differentiable in a neighborhood of $s^*$, 
    then $\argmax_s \mathcal{G}^*(s,c)$ 
    identifies $s^*$.
\end{corollary} 

{\bf Proof for Corollary \ref{theorem monotonic eta uniform X}}
\begin{proof}
    WLOG, assume $\eta$ is monotonically increasing with respect to x. The proof for the monotonically decreasing case is similar. We will first discuss two distinct cases: $\mu_L < \mu_R \leq c$ and $c \leq \mu_L < \mu_R$ and show that for both cases $\mathcal{G}^*$ is constant. Then, we consider the case where $\mu_L \leq c \leq \mu_R$
    and show that
    $s^*$ maximizes $\mathcal{G}^*$ in this case. We complete the proof by showing that $\mathcal{G}^*(s^*,c)$ is larger than the two constant $\mathcal{G}^*$ for the first two cases.

    i. $\mu_L < \mu_R \leq c$, we can write $\mathcal{G}^*$ as 
    \begin{align*}
        s\left(c-\frac{\int_0^s \eta(x) dx}{s}\right) + (1-s)\left(c-\frac{\int_s^1 \eta(x) dx}{1-s}\right) = c - \int_0^1 \eta(x) dx
    \end{align*}
    Note that in this case, $\mathcal{G}^*$ is a constant.
    
    ii. $ c \leq \mu_L < \mu_R$, we can write $\mathcal{G}^*$ as 
    \begin{align*}
        s\left(\frac{\int_0^s \eta(x) dx}{s} - c\right) + (1-s)\left(\frac{\int_s^1 \eta(x) dx}{1-s} - c\right) = \int_0^1 \eta(x) dx - c
    \end{align*}
    Note that in this case, $\mathcal{G}^*$ is a constant.

    iii. $\mu_L \leq c \leq \mu_R$, we can write $\max_s \mathcal{G}^*$ as 
    \begin{align*}
        \max_s s\left(c - \frac{\int_0^s \eta(x) dx}{s} \right) + (1-s)\left( \frac{\int_s^1 \eta(x) dx}{1-s} - c \right) 
    \end{align*}

    The first-order condition is $c - \eta(s) - \eta(s) + c = 0$. We solve $\eta(s) = c$, hence, $s^*$ is a local optima. Moreover, second order derivative is $-2\eta'(s^*)$ which is negative given that $\eta$ is strictly increasing and differentiable in a neighborhood of $s^*$. Hence, $s^*$ is a local maxima. Since $\mathcal{G}^*$ is continuous in $s$, we need to show that $\mathcal{G}^*(s^*)$ is larger than the boundary points for the case $\mu_L \leq c \leq \mu_R$ to claim $s^*$ as the global maxima for case iii. There are two possibilities for the boundary points

    Possibility 1:
    Consider $s_1$ and $s_2$ and their associated $\mathcal{G}^*(s_1)$ and $\mathcal{G}^*(s_2)$ where
    $\mu_L(s_1) = c$ and $\mu_R(s_2) = c$. Since $s_1$ is included in case ii; whereas $s_2$ is included in case i, once we show that $\mathcal{G}^*(s^*) > \mathcal{G}^*(s_1), \mathcal{G}^*(s_2)$, we can claim $s^*$ to be the unique global maxima among all possible $s$.

    \begin{align*}
        \mathcal{G}^*(s^*) =& s^*c - \int_0^{s^*} \eta(x) dx  + \int_{s^*}^1 \eta(x) dx   - (1 - s^*)c \\
        >& s^*c - \int_0^{s^*} \eta(x) dx + (1-s^*)c - \int_{s^*}^1 \eta(x) dx \\
        =& c - \int_0^1 \eta(x) dx = \mathcal{G}^*(s_2) \\
         \mathcal{G}^*(s^*) =& s^*c - \int_0^{s^*} \eta(x) dx  + \int_{s^*}^1 \eta(x) dx   - (1 - s^*)c \\
        >& \int_0^{s^*} \eta(x) dx - s^*c + (1-s^*)c - (1 - s^*)c \\
        =& \int_0^1 \eta(x) dx - c = \mathcal{G}^*(s_1)
    \end{align*}

Possibility 2: Even at the boundary point,  $\mu_L \leq c \leq \mu_R$ still holds. Then, the boundary points for case iii are $s=0$ and $s=1$.
\begin{align*}
    \tilde{G}(0) =& \int_0^1 \eta(x) dx - c < \mathcal{G}^*(s^*) \\
    \tilde{G}(1) =& c - \int_0^1 \eta(x) dx < \mathcal{G}^*(s^*)
\end{align*}
Since under Possibility 2 case iii spans the entire domain of $X$, $s^*$ is the unique global maxima. Note that Assumption \ref{assumption unique intersection between eta and c} implies that $\exists~ \epsilon > 0$, such that $s^* \in [\epsilon,1-\epsilon] \subset [0,1]$. Therefore, $s^*$ is the global maxima among all possible $s$. 
\end{proof}
\vspace{-1ex}
{\bf Proof for Theorem \ref{theorem unique intersection between eta and c}}
\begin{proof}
    Again, assume that $\eta$ is monotonically increasing in the neighborhood of $s^*$, the proof for the monotonically decreasing case is the same. Assumption \ref{assumption unique intersection between eta and c} guarantees $\forall s < s^*, \eta(X) < c$, $\forall s > s^*, \eta(X) > c$. 

    Let $[s^*-\epsilon_1,s^*+\epsilon_1]$ denote the monotonically increasing $\eta$ neighborhood of $s^*$, where $\epsilon_1 > 0$. Let $[s^*-\epsilon_2,s^*+\epsilon_2]$ denote an interval that contains $s^*$ in which $\forall s \in [s^*-\epsilon_2,s^*+\epsilon_2], \mu_L < c < \mu_R$ and $\epsilon_2 > 0$. The existence of $[s^*-\epsilon_2,s^*+\epsilon_2]$ is guaranteed because both $\mu_L$ and $\mu_R$ are continuous in $s$ and given Assumption \ref{assumption unique intersection between eta and c}, $\mu_L(s^*)<c<\mu_R(s^*)$. 

    Consider $s \in  [s^*-\epsilon_1,s^*+\epsilon_1] \cap [s^*-\epsilon_2,s^*+\epsilon_2] = [a,b]$, where $a = \max(s^*-\epsilon_1,s^*-\epsilon_2)$ and $b = \min(s^*+\epsilon_1,s^*+\epsilon_2)$. 
    This case is equivalent to case iii in the proof for Corollary \ref{theorem monotonic eta uniform X}. Hence, $s^*$ is a local maxima for the interval $[a,b]$ and $\mathcal{G}^*$ is
    \[
    s^*c - \int_0^{s^*} \eta(x) dx + \int_{s^*}^1 \eta(x) dx - (1 - s^*)c.
    \]

    Consider $s \leq a$. When $\mu_R \leq c$, $\mathcal{G}^*$ is $c - \int_0^1 \eta(x) dx$, the proof is identical to case i in the proof for Corollary \ref{theorem monotonic eta uniform X}. When $\mu_R > c$, 
    \begin{align*}
        \mathcal{G}^* =& sc - \int_0^{s} \eta(x) dx + \int_s^1 \eta(x) dx - (1-s)c \\
        =& sc + \int_s^{s^*} \eta(x) dx - \int_0^{s^*} \eta(x) dx + \int_{s^*}^1 \eta(x) dx + \int_{s}^{s^*} \eta(x) dx - (1-s)c\\
        <& sc + (s^* -s)c - \int_0^{s^*} \eta(x) dx + \int_{s^*}^1 \eta(x) dx + (s^* -s)c - (1-s)c \\
        =& s^*c - \int_0^{s^*} \eta(x) dx + \int_{s^*}^1 \eta(x) dx - (1 - s^*)c
    \end{align*}

    The proof for when $s \geq b$ is similar, those $s$ results in higher $\mathcal{G}^*$.

    Since $s^*$ is the unique point in the interval $[a,b]$ which meets the first-order condition and the boundary points $\{a,b\}$ have lower $\mathcal{G}^*$ than $s^*$, $s^*$ is the unique maxima in the interval $[a,b]$. Moreover, $s \in [0,a] \cup [b,1]$ also have lower $\mathcal{G}^*$ than $s^*$, hence, $s^*$ is the unique global maxima.
\end{proof}
{\bf Proof for Theorem \ref{theorem MDFS consistency}}
\begin{proof}
\textbf{Outline}: We leverage tools for studying asymptotic properties of M-estimator. To show $\hat{s} \stackrel{p}{\to} s^*$, we take two steps:
\begin{enumerate}
    \item Under Assumption \ref{assumption unique intersection between eta and c}, we show uniform convergence of the estimator of cost function $\widehat{{\cal G}}^*$ to its population target ${\cal G}^*$, i.e., $\sup_{s \in (\epsilon,1-\epsilon)} | {\cal G}^*(s,c) - \widehat{{\cal G}}^*(s,c)|  \stackrel{p}{\to} 0$ as $n\to\infty$.
    \item Combining with Theorem \ref{theorem unique intersection between eta and c}, we can apply Theorem 5.7 of \citet{van2000asymptotic} and get the desired result.
\end{enumerate}
{\textbf {Step 1}}. For any given $c \in (0,1)$ and $\epsilon_0>0$, we have
    \begin{align}
    &~\pr\Bigg(
    \sup_{s \in (\epsilon,1-\epsilon)} | {\cal G}^*(s,c) - \widehat{{\cal G}}^*(s,c)| \geq \epsilon_0
    \Bigg)
    \notag\\
    =&~
    \pr\Bigg(
    \sup_{s \in (\epsilon,1-\epsilon)}
    \Bigg|
    s
    \bigg|
        \frac
        {\sum_{i=1}^{n} Y_i\mathbbm{1}\{X_i \leq s\}}
        {\sum_{i=1}^{n}\mathbbm{1}\{X_i \leq s\}}
        -c
    \bigg| - s|\mu_L-c| 
    \notag\\
    + &~
    (1-s)
    \bigg|
        \frac
        {\sum_{i=1}^{n} Y_i \mathbbm{1}\{X_i > s\}}
        {\sum_{i=1}^{n}\mathbbm{1}\{X_i > s\}}
        -c
    \bigg| - (1-s)|\mu_R - c|
    \Bigg| \geq \epsilon_0
    \Bigg)
    \notag\\
    \label{proofeqn::consistency-part1}
    \leq&~
    \pr\Bigg( \sup_{s \in (\epsilon,1-\epsilon)} 
    s\Bigg|
    \frac
    {\sum_{i=1}^{n} Y_i\mathbbm{1}\{X_i \leq s\}}
    {\sum_{i=1}^{n}\mathbbm{1}\{X_i \leq s\}}
    - \mu_L\Bigg| \geq \epsilon_0/2 \Bigg) 
    \\
    \label{proofeqn::consistency-part2}
    + &~
    \pr\Bigg( \sup_{s \in (\epsilon,1-\epsilon)} 
    (1-s)\Bigg|
    \frac
        {\sum_{i=1}^{n} Y_i \mathbbm{1}\{X_i > s\}}
        {\sum_{i=1}^{n}\mathbbm{1}\{X_i > s\}}
    - \mu_R\Bigg|\geq \epsilon_0/2 \Bigg)
\end{align}
We focus on showing \eqref{proofeqn::consistency-part1} goes to 0 as $n\to\infty$, the proof for \eqref{proofeqn::consistency-part2} is exactly the same.

For any $\delta > 0$, denote event ${\cal E} = \inf_{s \in (\epsilon,1-\epsilon)} n^{-1}\sum_{i=1}^{n} \mathbbm{1}_{\{X_i \leq s\}} \geq \delta$. We have
\begin{align}
    &~\pr\Bigg( \sup_{s \in (\epsilon,1-\epsilon)} 
    s\Bigg|
    \frac
    {\sum_{i=1}^{n} Y_i\mathbbm{1}\{X_i \leq s\}}
    {\sum_{i=1}^{n}\mathbbm{1}\{X_i \leq s\}}
    - \mu_L\Bigg| \geq \epsilon_0/2 \Bigg)
    \notag\\
    \leq&~
    \pr\Bigg( \sup_{s \in (\epsilon,1-\epsilon)}
    s\Bigg(
        \frac
        {\big|n^{-1}\sum_{i=1}^{n} Y_i\mathbbm{1}\{X_i \leq s\} - s\mu_L\big|}
        {n^{-1}\sum_{i=1}^{n}\mathbbm{1}\{X_i \leq s\}}
        +
        \frac
        {\mu_L\big|n^{-1}\sum_{i=1}^{n}\mathbbm{1}\{X_i \leq s\}-s\big|}
        {n^{-1}\sum_{i=1}^{n}\mathbbm{1}\{X_i \leq s\}}
     \Bigg) \geq \epsilon_0/2   
    \Bigg)
    \notag\\
    \leq&~
    \pr
    \Bigg( \sup_{s \in (\epsilon,1-\epsilon)}
    \frac
        {s\big|n^{-1}\sum_{i=1}^{n} Y_i\mathbbm{1}\{X_i \leq s\} - s\mu_L\big|}
        {n^{-1}\sum_{i=1}^{n}\mathbbm{1}\{X_i \leq s\}}
        \geq \epsilon_0/4
    \Bigg) + 
    \pr
    \Bigg( \sup_{s \in (\epsilon,1-\epsilon)}
        \frac
        {\mu_L\big|n^{-1}\sum_{i=1}^{n}\mathbbm{1}\{X_i \leq s\}-s\big|}
        {n^{-1}\sum_{i=1}^{n}\mathbbm{1}\{X_i \leq s\}}
        \geq \epsilon_0/4
    \Bigg)
    \notag\\
    \leq&~
    \pr
    \Bigg(
    \frac
        {\sup_{s \in (\epsilon,1-\epsilon)}
        s\big|n^{-1}\sum_{i=1}^{n} Y_i\mathbbm{1}\{X_i \leq s\} - s\mu_L\big|}
        {\inf_{s \in (\epsilon,1-\epsilon)} n^{-1}\sum_{i=1}^{n}\mathbbm{1}\{X_i \leq s\}}
        \geq \epsilon_0/4
    \Bigg) + 
    \pr
    \Bigg(
        \frac
        {\sup_{s \in (\epsilon,1-\epsilon)}
        \mu_L\big|n^{-1}\sum_{i=1}^{n}\mathbbm{1}\{X_i \leq s\}-s\big|}
        {\inf_{s \in (\epsilon,1-\epsilon)} n^{-1}\sum_{i=1}^{n}\mathbbm{1}\{X_i \leq s\}}
        \geq \epsilon_0/4
    \Bigg)
    \notag\\
    \leq&~
    \pr
    \Bigg( 
        \frac
        {\sup_{s \in (\epsilon,1-\epsilon)}
        s\big|n^{-1}\sum_{i=1}^{n} Y_i\mathbbm{1}\{X_i \leq s\} - s\mu_L\big|}
        {\inf_{s \in (\epsilon,1-\epsilon)} n^{-1}\sum_{i=1}^{n}\mathbbm{1}\{X_i \leq s\}}
        \geq \epsilon_0/4,
        {\cal E}
    \Bigg)
    +
    \notag\\
    &~\pr
    \Bigg( 
        \frac
        {\sup_{s \in (\epsilon,1-\epsilon)}
        s\big|n^{-1}\sum_{i=1}^{n} Y_i\mathbbm{1}\{X_i \leq s\} - s\mu_L\big|}
        {\inf_{s \in (\epsilon,1-\epsilon)} n^{-1}\sum_{i=1}^{n}\mathbbm{1}\{X_i \leq s\}}
        \geq \epsilon_0/4,
        {\cal E}^c
    \Bigg) +
    \notag\\
    &~\pr
    \Bigg(
        \frac
        {\sup_{s \in (\epsilon,1-\epsilon)}
        \mu_L\big|n^{-1}\sum_{i=1}^{n}\mathbbm{1}\{X_i \leq s\}-s\big|}
        {\inf_{s \in (\epsilon,1-\epsilon)} n^{-1}\sum_{i=1}^{n}\mathbbm{1}\{X_i \leq s\}}
        \geq \epsilon_0/4,
        {\cal E}
    \Bigg) + 
    \notag\\
    \label{proofeqn::consistency-LOTP}
    &~\pr
    \Bigg(
        \frac
        {\sup_{s \in (\epsilon,1-\epsilon)}
        \mu_L\big|n^{-1}\sum_{i=1}^{n}\mathbbm{1}\{X_i \leq s\}-s\big|}
        {\inf_{s \in (\epsilon,1-\epsilon)} n^{-1}\sum_{i=1}^{n}\mathbbm{1}\{X_i \leq s\}}
        \geq \epsilon_0/4,
        {\cal E}^c
    \Bigg)
    \\
    \label{proofeqn::consistency-numerator}
    \leq&~
     \pr
    \Bigg( 
    \sup_{s \in (\epsilon,1-\epsilon)}
        s\Big|n^{-1}\sum_{i=1}^{n} Y_i\mathbbm{1}\{X_i \leq s\} - s\mu_L\Big| \geq (\epsilon_0\delta)/4
    \Bigg) + 
    \pr({\cal E}^c)
    \\
    \label{proofeqn::consistency-denominator}
    +&~
    \pr
    \Bigg(
        \sup_{s \in (\epsilon,1-\epsilon)}
        \mu_L\Big|n^{-1}\sum_{i=1}^{n}\mathbbm{1}\{X_i \leq s\}-s\Big|
        \geq (\epsilon_0\delta)/4
    \Bigg) +
    \pr({\cal E}^c),
\end{align}
where \eqref{proofeqn::consistency-LOTP} is by law of total probability.
For the first term in \eqref{proofeqn::consistency-numerator}, let $\bm{Z}_i = (X_i,Y_i), \,i=1,\dots,n$ and $f(\bm{Z}_i,s) = Y_i\mathbbm{1}_{\{X_i \leq s\}}$. Notice $\ep[f(\bm{Z},s)] = s\mu_L$, $f(\bm{Z},s)$ is continuous at each $s \in (0,1)$ for almost all $\bm{Z}$, since discontinuity occurs at $X_i = s$, which has measure zero for $X_i \sim \textrm{Unif}(0,1)$. Also, $f(\bm{Z},s) \leq \mathbbm{1}_{\{0 \leq X \leq 1\}}$ with $\ep[\mathbbm{1}_{\{0 \leq X \leq 1\}}] = 1 \leq \infty$. Applying uniform law of large numbers:
\begin{align}
    &~\pr
    \Bigg( 
    \sup_{s \in (\epsilon,1-\epsilon)}
        s\big|n^{-1}\sum_{i=1}^{n} Y_i\mathbbm{1}\{X_i \leq s\} - s\mu_L\big| \geq (\epsilon_0\delta)/4
    \Bigg)
    \notag\\
    \leq&~
    \pr
    \Bigg( 
    \sup_{s \in (\epsilon,1-\epsilon)}
        \big|n^{-1}\sum_{i=1}^{n} Y_i\mathbbm{1}\{X_i \leq s\} - s\mu_L\big| \geq (\epsilon_0\delta)/4
    \Bigg)
    \notag\\
    \leq&~
    \pr
    \Bigg( 
    \sup_{s \in (0,1)}
        \big|n^{-1}\sum_{i=1}^{n} Y_i\mathbbm{1}\{X_i \leq s\} - s\mu_L\big| \geq (\epsilon_0\delta)/4
    \Bigg)
    \to 0,
\end{align}
as $n\to\infty$. For the first term in \eqref{proofeqn::consistency-denominator}, notice $F_n(s) = n^{-1} \sum_{i=1}^{n}\mathbbm{1}\{X_i \leq s\}$ is the empirical CDF of $X \sim \textrm{Unif}(0,1)$, with CDF $F_X(s) = s$ for $s \in (0,1)$. Applying Dvoretzky–Kiefer–Wolfowitz inequality, we have
\begin{align}
    &~\pr
    \Bigg(
        \sup_{s \in (\epsilon,1-\epsilon)}
        \mu_L\big|n^{-1}\sum_{i=1}^{n}\mathbbm{1}\{X_i \leq s\}-s\big|
        \geq (\epsilon_0\delta)/4
    \Bigg)
    \notag\\
   (\mu_L \in [0,1]) \leq&~
    \pr
    \Bigg(
        \sup_{s \in (\epsilon,1-\epsilon)}
        \big|n^{-1}\sum_{i=1}^{n}\mathbbm{1}\{X_i \leq s\}-s\big|
        \geq (\epsilon_0\delta)/4
    \Bigg)
    \notag\\
    \leq&~
    \pr
    \Bigg(
        \sup_{s \in (0,1)}
        \big|n^{-1}\sum_{i=1}^{n}\mathbbm{1}\{X_i \leq s\}-s\big|
        \geq (\epsilon_0\delta)/4
    \Bigg)\to 0,
\end{align}
as $n\to\infty$. 

For the second terms in \eqref{proofeqn::consistency-numerator} and \eqref{proofeqn::consistency-denominator}, we want to show $\pr({\cal E}^c) \to 0$ as $n\to\infty$. By Glivenko–Cantelli theorem, we have $\sup_{s \in (0,1)} |F_n(s) - F_X(s)| \stackrel{a.s.}{\to} 0$, which implies $\sup_{s \in (\epsilon,1-\epsilon)} |F_n(s) - s| \stackrel{a.s.}{\to} 0$. The uniform convergence means for all $s \in (\epsilon,1-\epsilon)$ and any $\epsilon'>0$, there exists $N$ such that for all $n\geq N$, we have $|F_n(s) - s| < \epsilon'$, which implies $|\inf_{s \in (\epsilon,1-\epsilon)} F_n(s) - \inf_{s \in (\epsilon,1-\epsilon)} s| = |\inf_{s \in (\epsilon,1-\epsilon)} F_n(s) - \epsilon| < \epsilon'$. This shows $\inf_{s \in (\epsilon,1-\epsilon)} F_n(s)  \stackrel{p}{\to} \epsilon$. Finally, setting $\delta = \epsilon/2$, we have $\inf_{s \in (\epsilon,1-\epsilon)} F_n(s) - \delta \stackrel{p}{\to} \epsilon/2$. Hence $\pr({\cal E}^c) = \pr(\inf_{s \in (\epsilon,1-\epsilon)} F_n(s) - \delta < 0) \to 0$ as $n\to\infty$.

Combining \eqref{proofeqn::consistency-numerator} and \eqref{proofeqn::consistency-denominator}, we can show \eqref{proofeqn::consistency-part1} goes to 0 as $n\to\infty$. Similarly, \eqref{proofeqn::consistency-part2} goes to 0 as $n\to\infty$. This gives us 
$\pr(\sup_{s \in (\epsilon,1-\epsilon)} | {\cal G}^*(s) - \widehat{{\cal G}}^*(s)| \geq \epsilon_0) \to 0$ as $n\to\infty$, which completes the proof of Step 1.

\textbf{Step 2}. By Theorem \ref{theorem unique intersection between eta and c}, we have for any $\epsilon > 0$, $\sup_{s: d(s,s^*) > \epsilon} {\cal G}^*(s,c) < {\cal G}^*(s^*,c)$. Combining with $\sup_{s \in (\epsilon,1-\epsilon)} | {\cal G}^*(s,c) - \widehat{{\cal G}}^*(s,c)|  \stackrel{p}{\to} 0$, we apply Theorem 5.7 of \citet{van2000asymptotic} and conclude $\hat{s} \stackrel{p}{\to} s^*$.

\end{proof}

\textbf{Intuition for Theorem \ref{theorem penalized loss works}}

\begin{wrapfigure}{r}{0.5\textwidth}
    \vspace{-15pt}
     \includegraphics[width=\linewidth]{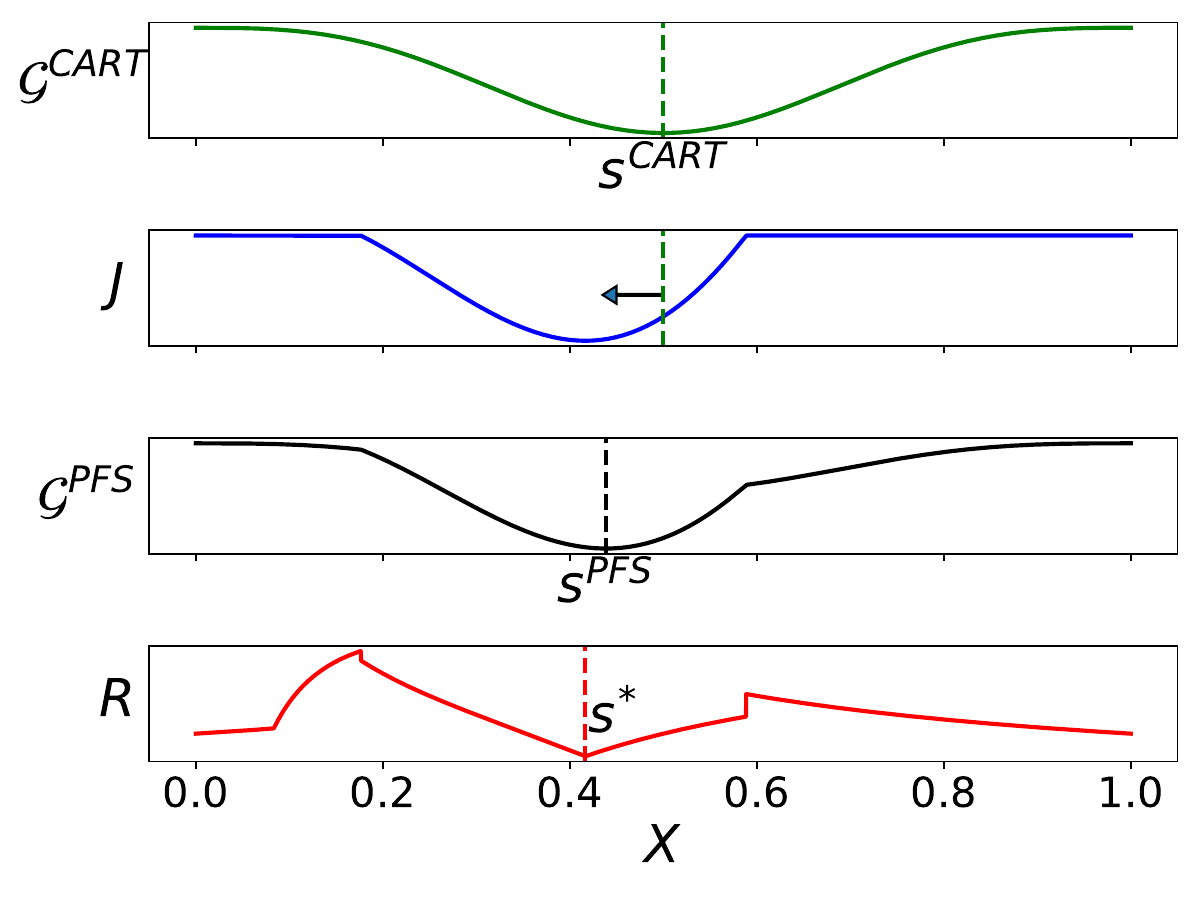}
        \caption{Visualization of the different loss components of PFS. 
    }
   \label{fig proposed solutions losses}
    \vspace{-12pt}
\end{wrapfigure}

We use Figure \ref{fig proposed solutions losses} to illustrate the intuition behind Theorem \ref{theorem penalized loss works} based on the same setting as that in Figure \ref{Figure Sine example}. From top to bottom, Figure \ref{fig proposed solutions losses} depicts $\mathcal{G}^{CART}$, penalty term $J$, $\mathcal{G}^{PFS}$ and LPC misclassification risk $R$ as functions of feature $X$.
The green curve shows that $\mathcal{G}^{CART}$ is minimized at $s^{CART}$. The green curve is also very \textit{flat} around $s^{CART}$. In contrast, the blue curve shows that penalty $J$ increases \textit{sharply} at $X=s^{CART}$. Minimizing $J$ would shift the split rule from $s^{CART}$ to the left, as indicated by the arrow. Since $\mathcal{G}^{PFS} = \mathcal{G}^{CART} + \lambda J$, the gradient of $\mathcal{G}^{PFS}$ at $s^{CART}$ is largely determined by the gradient of $J$, given how flat $\mathcal{G}^{CART}$ is at $s^{CART}$ (See the black curve). Hence, minimizing $\mathcal{G}^{PFS}$ also shifts the split rule from $s^{CART}$ to the left (i.e., towards $s^*$, the minimizer of the LPC misclassification risk as shown by the red curve). Our formal proof in Appendix \ref{appendix proofs} shows that the \textit{flatness} of $\mathcal{G}^{PFS}$ and \textit{sharp} change in $J$ at $X=s^{CART}$ and shifting $s^{CART}$ towards $s^*$ are not coincidences; they are generally true under the assumptions in Theorem \ref{theorem penalized loss works}.

{\bf Proof for Theorem \ref{theorem penalized loss works}}
\begin{proof}
    Consider 
    \begin{align*}
        \frac{\partial \mathcal{G}^{PFS}(s, c)}{\partial s} = \frac{\partial \mathcal{G}^{CART}(s, c) + \lambda W(|\mu_L - c|)F(s) + \lambda W(|\mu_R - c|)(1-F(s))}{\partial s}.
    \end{align*}
    We focus on the new term $\frac{\partial W(|\mu_L - c|)F(s)}{\partial s}$ and $\frac{\partial W(|\mu_R - c|)(1-F(s))}{\partial s}$. By \eqref{dElds} and \eqref{dErds},
    \begin{align*}
        \frac{\partial W(|\mu_L - c|)F(s)}{\partial s} =&\ f(s)W(|\mu_L - c|) + \frac{\partial W(|\mu_L - c|)}{\partial \mu_L}\frac{\partial \mu_L}{\partial s}F(s)\\
        =&\ f(s)\left(W(|\mu_L - c|) + \frac{\partial W(|\mu_L - c|)}{\partial \mu_L} (\eta(s) - \mu_L)\right)\\
        \frac{\partial W(|\mu_R - c|)(1-F(s))}{\partial s} =&\ -f(s)W(|\mu_R - c|) + \frac{\partial W(|\mu_R - c|)}{\partial \mu_R}\frac{\partial \mu_R}{\partial s}(1-F(s))\\
        =&\ f(s)\left( - W(|\mu_R - c|) + \frac{\partial W(|\mu_R - c|)}{\partial \mu_R} (\mu_R - \eta(s))\right).
    \end{align*}
    With $\mathcal{G}^{PFS}(s, c)$'s derivative with respect to $s$, we evaluate how $\frac{\partial \mathcal{G}^{PFS}(s, c)}{\partial s}$ behaves at $s^{CART}$, i.e., when $2\eta(s) - \mu_L - \mu_R = 0$ and at $s^*$, i.e., when $\eta(s) = c$ respectively:
    \begin{align}
        \frac{1}{\lambda f(s)}\left.\frac{\partial \mathcal{G}^{PFS}(s, c)}{ \partial s}\right\vert_{s = s^{CART}}  &=\ W(|\mu_L - c|) - W(|\mu_R - c|)  \nonumber\\ 
        &\quad + \left(\frac{\partial W(|\mu_L - c|)}{\partial \mu_L} + \frac{\partial W(|\mu_R - c|)}{\partial \mu_R}\right)(\eta(s^{CART})-\mu_L)\label{gsop}\\
        \frac{1}{f(s)}\left.\frac{\partial \mathcal{G}^{PFS}(s, c)}{\partial s}\right\vert_{s = s^{*}} &=\ (2c - \mu_L - \mu_R)(\mu_R - \mu_L) +  \lambda(W(|\mu_L - c|) - W(|\mu_R - c|)) \nonumber\\
        &\quad + \lambda\left(\frac{\partial W(|\mu_L - c|)}{\partial \mu_L}(c - \mu_L) + \frac{\partial W(|\mu_R - c|)}{\partial \mu_R}(\mu_R - c)\right).\label{gsstar}
    \end{align}
    Note that $s^{*}$ in \eqref{gsstar} is only guaranteed to exist when the condition of $c$ in Theorem~\ref{theorem tendency} is met. We will address this in the latter part of the proof.
    Next, 
    consider all five possible scenarios of $c$'s location separately. (Still, without loss of generality, assume $\mu_L(s^{CART}) < \eta(s^{CART}) < \mu_R(s^{CART})$.)\\
    \newline
    \textbf{i}. $\mu_L(s^{CART}) \le c <\eta(s^{CART}) < \mu_R(s^{CART})$. 
    
    By Lemma~\ref{lemma mid point},$|\mu_R(s^{CART}) - c| > |\mu_L(s^{CART}) - c|$. $W(|\mu_L - c|) > W(|\mu_R - c|)$ by the monotonicity of $W$ and $\frac{\partial W(|\mu_L - c|)}{\partial \mu_L} + \frac{\partial W(|\mu_R - c|)}{\partial \mu_R} \ge 0$ by the convexity of $W$. 
    Thus, by \eqref{gsop}, 
    \begin{align}
    \left.\frac{\partial \mathcal{G}^{PFS}(s, c)}{\partial s}\right\vert_{s = s^{CART}} >0.    \label{greater0}
    \end{align}
    By Lemma~\ref{lemma mid point}, and continuity of $\eta$, $\mu_L$ and $\mu_R$, 
    \begin{align*}
        (2\eta(s) - \mu_L - \mu_R)(\mu_R - \mu_L) < 0,\ s\in (s^{sp}, s^{CART}), 
    \end{align*}
    where $s^{sp} := \max\{s: s < s^{CART}, (2\eta(s) - \mu_L - \mu_R)(\mu_R - \mu_L) = 0\}$. If there is no such $s^{sp}>0$ exists, let $s^{sp} = 0$.
    When $s\in(\max(s^*, s^{sp}), s^{CART})$, by Theorem~\ref{theorem tendency}, $\mu_L<c$ and $\mu_R>c$. 
    
    When $s^* > s^{sp}$, define $d_L := c - \mu_L$ and $d_R := \mu_R - c$ and we have
    \begin{align*}
    0>(2c - \mu_L(s^*) - \mu_R(s^*))(\mu_R(s^*) - \mu_L(s^*))  = (d_L-d_R)(d_L+d_R),
    \end{align*}
    which directly suggests that $d_L < d_R$.  
    Let $W_2(d) = d^2 + \lambda\left(W(d) - dW'(d)\right)$. We can rewrite \eqref{gsstar} as:
    \begin{align*}
        \frac{1}{f(s)}\left.\frac{\partial \mathcal{G}^{PFS}(s, c)}{\partial s}\right\vert_{s = s^{*}} =&\ d_L^2 + \lambda\left(W(d_L) - d_L\frac{\partial W(d_L)}{\partial d_L}\right) - d_R^2 - \lambda\left(W(d_R) - d_R\frac{\partial W(d_R)}{\partial d_R}\right) \\
        =&\ W_2(d_L) - W_2(d_R).
    \end{align*}
    Note that $W_2^{'}(d) = d(2-\lambda W^{''}(d))$. By the bounded second derivative assumption, set $\Lambda_1 = \frac{1}{\max\{W^{''}(d_L), W^{''}(d_R)\}}$. Since $0<d_L<d_R$, when $\lambda < \Lambda_1$, $W_2(d_L) < W_2(d_R)$, 
    \begin{align*}
        \frac{1}{f(s)}\left.\frac{\partial \mathcal{G}^{PFS}(s, c)}{ \partial s}\right\vert_{s = s^{*}} < 0.
    \end{align*}
    
    When $s^* < s^{sp}$, there exists $s^{in} \in (s^{sp},s^{CART})$ and $c>0$ s.t. 
    \begin{align*}
    (2\eta(s) - \mu_L - \mu_R)(\mu_R - \mu_L) < -c,    
    \end{align*}
    by unique global minimum assumption and continuity. With $\Lambda_1 = \frac{c}{W(0) + W^{'}(0)}$, we have
    \begin{align*}
    \frac{1}{f(s)}\left.\frac{\partial \mathcal{G}^{PFS}(s, c)}{ \partial s}\right\vert_{s = s^{in}} < 0.
    \end{align*}
    
    
    Thus, 
    for $0<\lambda<\Lambda_1$, there exists a $s\in[s^*, s^{CART})$ s.t.
    \begin{align}
    \frac{\partial \mathcal{G}^{PFS}(s, c)}{\partial s} < 0.\label{less0}
    \end{align}
    Considering \eqref{greater0} and \eqref{less0}
    jointly with the continuity of the derivative, we can conclude that there exists a $s^{**}\in (s^*, s^{CART})$ such that $\left.\frac{\partial \mathcal{G}^{PFS}(s, c)}{\partial s}\right\vert_{s = s^{**}} = 0$. $s^{**}$ is a local minimizer of $\mathcal{G}^{PFS}(s, c)$ and is guaranteed to give a smaller risk than $s^{CART}$ by Theorem~\ref{theorem tendency}. 

    Next, we argue that with some additional bound for $\lambda$, $s^{**}$ is also the global optimal split under the penalized impurity measure. To simplify the notation in the rest of the proof, let $s^{in}$ be $s^*$ if $s^*<s^{sp}$. Then,    
    we want to show $\mathcal{G}^{PFS}(s^{**},c) < \mathcal{G}^{PFS}(s,c)$ for $s\neq s^{**}$:
    
    (a). Consider $s$ such that there is at least one $s^{sp}$ in between $s$ and $s^{**}$.
    By unique optimizer assumption for $G$, if there is more than one local minimum, i.e., multiple $s$ that have $2\eta(s, t) - \mu_L - \mu_R = 0$, we assume the global minimum and the second smallest local minimum, gives by $s^{sc}$, different by $\Delta$. Let $\Lambda_2 = \frac{\Delta}{W(0) - \min(W(c), W(1-c))}$. For $0<\lambda<\Lambda_2$,  
    \begin{align*}
    \mathcal{G}^{PFS}(s^{**},c) <&\ \mathcal{G}^{PFS}(s^{CART},c) \\
    =&\ \mathcal{G}(s^{CART},c) + \lambda \left(W(|\mu_L-c|)F(s) + W(|\mu_R-c|)(1-F(s))\right)\\
    \le&\  \mathcal{G}(s^{CART},c) + \lambda W(0) \\
    \le&\ \mathcal{G}(s^{sc},c) + \lambda \min\{W(c), W(1-c)\} \le \mathcal{G}^{PFS}(s,c)
    \end{align*}
    
    (b). Consider $s$ such that there is no $s^{sp}$ in between $s$ and $s^{**}$. 
    
    When $s \in (s^*, s^{CART})$, $\mathcal{G}^{PFS}(s^{**},c) < \mathcal{G}^{PFS}(s,c)$ by definition. 
    
    When $s<s^{*}$, by $\eqref{less0}$ and continuity of $\frac{\partial \mathcal{G}^{PFS}(s, c)}{\partial s}$, there can be two possible scenarios. One, there exist a $s_0<s^*$ such that $\frac{\partial \mathcal{G}^{PFS}(s, c)}{\partial s} = 0$ and $\mathcal{G}^{PFS}(s_0,c) - \mathcal{G}^{PFS}(s^{**},c) := \Delta_L > 0$. Then, define $\Lambda_3 = \frac{\Delta_L}{W(0) - \min(W(c), W(1-c))}$. For $0<\lambda<\Lambda_3$, $\mathcal{G}^{PFS}(s^{**},c) < \mathcal{G}^{PFS}(s,c)$ when $s_0\le s< s^*$ by definition. When $s<s_0$:
    \begin{align*}
    \mathcal{G}^{PFS}(s^{**},c) =&\ \mathcal{G}^{PFS}(s_0,c) - \Delta_L\\
    =&\ \mathcal{G}(s_0,c) + \lambda \left(W(|\mu_L-c|)F(s) + W(|\mu_R-c|)(1-F(s))\right) - \Delta_L\\
    \le&\  \mathcal{G}(s_0,c) + \lambda W(0) - \Delta_L\\
    \le&\ \mathcal{G}(s_0,c) + \lambda \min\{W(c), W(1-c)\}\\
    \le&\ \mathcal{G}(s,c) + \lambda \left(W(|\mu_L-c|)F(s) + W(|\mu_R-c|)(1-F(s))\right) = \mathcal{G}^{PFS}(s,c).
    \end{align*}
    Two, $\frac{\partial \mathcal{G}^{PFS}(s, c)}{\partial s} < 0$. Then $\mathcal{G}^{PFS}(s^{**},c) < \mathcal{G}^{PFS}(s,c)$ for $s<s^{*}$ follows. 

    When $s > s^{CART}$, we can follow the same procedure as when $s<s^{*}$. If there exists a $s_1>s^*$ such that $\frac{\partial \mathcal{G}^{PFS}(s, c)}{\partial s} = 0$ and $\mathcal{G}^{PFS}(s_1,c) - \mathcal{G}^{PFS}(s^{**},c) := \Delta_R > 0$. We can define $\Lambda_4 = \frac{\Delta_R}{W(0) - \min(W(c), W(1-c))}$ and everything else follows.
    
    To conclude, for $0 < \lambda \le \Lambda = \min\{\Lambda_1, \Lambda_2, \Lambda_3, \Lambda_4\}$, the theoretical optimal split chosen with respect to $G^{PFS}$ leads to a risk smaller than $s^{CART}$ when $\mu_L(s^{CART}) \le c <\eta(s^{CART}) < \mu_R(s^{CART})$.\\
    \newline
    \textbf{ii}. $\mu_L(s^{CART}) < \eta(s^{CART}) < c \le \mu_R(s^{CART})$.
    This scenario is symmetric to \textbf{i} and is also considered in Theorem~\ref{theorem tendency}. It can be proved following the same logic as in \textbf{i}.\\
    \newline
    \textbf{iii}. $c < \mu_L(s^{CART}) <\eta(s^{CART}) < \mu_R(s^{CART})$.
    
    In this scenario, consider two split points located on each side of $s^{CART}$ respectively: 
    let $s_{c1}$ be the largest split less than $s^{CART}$ such that $\mu_L(s_{c1}) = c$, $s_{c2}$ be the smallest split greater than $s^{CART}$ such that $\mu_L(s_{c2}) = c$, $s_L$ be the largest split less than $s^{CART}$ such that $\eta(s_L) \ge \mu_L(s^{CART})$ and $s_R$ be the smallest split greater than $s^{CART}$ such that $\eta(s_R) = \mu_R(s^{CART})$. Let $s_{c1} = 0$ or $s_{c2} = 1$ if no such $s_{c1}$ or $s_{c2}$ exists. Both $s_L$ and $s_R$'s existence is guaranteed by the intermediate value theorem which we used once in Theorem~\ref{theorem tendency}. 

    We argue for any $s\in (\max(s_L, s_{c1}), \min(s_R,s_{c2}))$, risk is no greater than $s^{CART}$. 
    
    For $s\in (\max(s_L, s_{c1}), s^{CART})$, it is a piece of $x$ that has $\eta(x) > \mu_L(s^{CART})$ being split to right.
    \begin{align*}
        \mu_R(s) >&\ \frac{\mu_R(s^{CART})(1-F(s^{CART})) + \mu_L(s^{CART})(F(s^{CART})-F(s)) }{1-F(s)} > \mu_L(s^{CART})>c\\
        c < \mu_L(s) <&\ \frac{\mu_L(s^{CART})F(s^{CART}) - \mu_L(s^{CART})(F(s^{CART})-F(s)) }{F(s)} = \mu_L(s^{CART}).
    \end{align*}
    Since $c < \mu_L(s) <\mu_L(s^{CART}) < \mu_R(s)$, the expectations on both sides remain greater than $c$ and the risk is the same as split $s^{CART}$. When $s\in (s^{CART}, \min(s_R,s_{c2}))$, $c < \mu_L(s) <\mu_R(s^{CART}) < \mu_R(s)$ can be obtained similarly and the risk also remains. 

    Note that $\mu_R(s) > \mu_L(s)$ all the way when $s\in (\max(s_L, s_{c1}), \min(s_R,s_{c2}))$. We can then define $\Delta$ the same as in scenario \textbf{i}, $\Delta_L = \mathcal{G}^{PFS}(\max(s_L, s_{c1}), c)$, and $\Delta_R = \mathcal{G}^{PFS}(\min(s_R,s_{c2}), c)$. $\Lambda_2, \Lambda_3, \Lambda_4$ are defined accordingly. 
    For $0<\lambda\le \Lambda =\min\{\Lambda_2, \Lambda_3, \Lambda_4\}$, the theoretical optimal split chosen with respect to $G^{PFS}$ leads to the same risk as $s^{CART}$ when $c < \mu_L(s^{CART}) <\eta(s^{CART}) < \mu_R(s^{CART})$.\\
    \newline 
    \textbf{iv}. $\mu_L(s^{CART}) <\eta(s^{CART}) < \mu_R(s^{CART}) < c$. This scenario is symmetric to \textbf{iii}. It can be proved following the same logic in \textbf{iii}.\\
    \textbf{v}. $\mu_L(s^{CART}) <\eta(s^{CART}) = c < \mu_R(s^{CART})$. This scenario can be considered a special case of \textbf{i} and \textbf{ii}. By appropriate choice of $\Lambda$, it's trivial to show that the original $s^{CART}$ remains the global optimizer and the risk stays the same. 

\end{proof}

{\bf Mathematical details for Figure \ref{Figure Unique Sine example} and \ref{Figure Monotonic example}}

Imagine there are two final splitting nodes $\{t_1,t_2\}$ with the same mass of population. CART selects features $\{X_1, X_2\}$ for nodes $\{t_1,t_2\}$, respectively. The pdf of $X_1$ and $X_2$ in the $\{t_1,t_2\}$ are both uniform, respectively: $f_1(x) = 1$ and $f_2(x) = 1$ $\forall~x \in [0,1]$.

In node $t_1$,
\[
P(Y=1|X_1) = : \eta_1(X_1) = 
\begin{cases}
    1, \text{ when $X_1 \in [0,\frac{1}{4}]$} \\
    \frac{{\rm sin}(2 \pi X_1) + 1}{2} , \text{ when $X_1 \in (\frac{1}{4},\frac{3}{4})$} \\
    0, \text{ when $X_1 \in [\frac{3}{4},1]$}
\end{cases}
\]

We depict $\eta_1(X_1)$ in Figure \ref{Figure Unique Sine example}. Since $\eta_1(X_1)$ is reflectional symmetric around $\eta_1(0.5)$, $s^{cart} = 0.5$. Splitting $s^{cart}$, policymakers target the left node $X_1 \leq 0.5$ only, the calculation is similar to Figure \ref{Figure Sine example} in the introduction.

The unique optimal LPC solution is $s^* = \frac{5}{12}$. By shifting from $s^{CART}$ to $s^*$, the policymaker excludes subpopulation $\{\frac{5}{12}<X_1<\frac{1}{2}|t_1\}$ from the targeted group. Again, the mathematics is similar to Figure \ref{Figure Sine example} in the introduction.

In node $t_2$, 
\[
P(Y=1|X_2) =: \eta_2(X_2) = \frac{9}{11}X_2
\]

The other node $t_2$ is depicted by Figure \ref{Figure Monotonic example}. CART will split $t_2$ at $s^{CART} = 0.5$ because $\eta_2(X_2)$ is reflectional symmetric around $\eta_2(0.5)$. This would not target any subpopulations from $t_2$, as both the left node mean and right node mean are smaller than 0.75.
\begin{align*}
    &P(Y = 1| X_2 < 0.5) = \frac{\int_{0}^{0.5} \frac{9}{11} x dx}{0.5} = \frac{9}{44} < 0.75 \\
    &P(Y = 1| X_2 > 0.5) = \frac{\int_{0.5}^{1} \frac{9}{11} x dx}{0.5} = \frac{27}{44} < 0.75
\end{align*}
On the other hand, the best split for LPC is $s^* = \frac{11}{12}$. This splits $t_2$ into two groups, the left node is entirely below the threshold whereas the right node is entirely above, as illustrated by the orange and green segment in Figure \ref{Figure Monotonic example}, respectively. As a result, splitting at $s^*$ targets subpopulation $\{\frac{11}{12}<X_2<1|t_2\}$.

To summarize, splitting nodes $t_1$ and $t_2$ individually at $s^{CART}$ versus $s^*$ results in different target subpopulations: (i) $s^{CART}$: Target $\{X_1<\frac{1}{2}|t_1\}$; (ii) $s^*$: Target $\{X_1<\frac{5}{12}|t_1\}$ and $\{\frac{11}{12}<X_2<1|t_2\}$.
Assuming that both $t_1$ and $t_2$ contain the same amount of population, the two sets of policies target the same proportion of the population, but $\eta(x,t_1) < 0.75$ for $x \in \{\frac{5}{12}<X_1<\frac{1}{2}\}$, which is targeted by $s^{CART}$, whereas $\eta(x,t_2) > 0.75$ for $x \in \{\frac{11}{12}<X_2<1\}$, which is targeted by $s^*$. Therefore, policies based on LPC, i.e., $s^*$ policy, target a \textbf{more vulnerable} subpopulation than policies based on observed $Y$ classification, i.e., $s^{CART}$ policy or CART/KD-CART policy. This idea appears in our diabetes empirical study, which we detail in Section \ref{sec empirical studies}.

\textbf{Proof of Remark \ref{remark generalizing WRAcc}}
\begin{align*}
    \mathcal{G}^{\text{WRacc}} =& \text{WRacc of left node} + \text{WRacc of right node} \\
=& F(s) \left| \frac{\int_0^s \eta(x)  dF(x)}{F(s)} - \int_0^1 \eta(x)  dF(x) \right| + (1 - F(s)) \left| \frac{\int_s^1 \eta(x)  dF(x)}{1 - F(s)} - \int_0^1 \eta(x)  dF(x) \right| \\
=& F(s) \left| \mu_L - \int_0^1 \eta(x)  dF(x) \right| + (1 - F(s)) \left| \mu_R - \int_0^1 \eta(x)  dF(x) \right| \\
=& s \left| \mu_L - \int_0^1 \eta(x)  dx \right| + (1 - s) \left| \mu_R - \int_0^1 \eta(x)  dx \right| \quad \text{Under Assumption \ref{assumption unique intersection between eta and c}}
\end{align*}

\vspace{-2ex}
\section{Synthetic data simulation studies} \label{appendix Synthetic data simulation studies}
\vspace{-1ex}


\subsection{Data generation processes}\label{Data generation details}
\begin{enumerate}
    \item Generate features: \( X_i \sim U(0, 1), i\in \{1,2,3,4,5\} \). 
\item \begin{itemize}
    \item 
    \textbf{Ball:} \( f(X) = \sum_{i=1}^3 X_i^2  \).
    
    \item 
    \textbf{Friedman \#1:}  $f(X) = 10 \sin(\pi X_1  X_2) + 20(X_3 - 0.5)^2 + 10 X_4 + 5 X_5$
    
    \item 
    \textbf{Friedman \#2:}
    \begin{itemize}
    \item Make transformations: \(Z_1 = 100 X_1, Z_2 = 40\pi + 520\pi X_2, Z_4 = 10X_4 + 1\)
    \item Generate responses: \( f(X) = \sqrt{Z_1^2 + (Z_2 X_3  - \frac{1}{Z_2 Z_4}) ^2 }.\)
    \end{itemize}
    
    \item 
    \textbf{Friedman \#3:}
    \begin{itemize}
    \item Make transformations: \(Z_1 = 100 X_1, Z_2 = 40\pi + 520\pi X_2, Z_4 = 10X_4 + 1\)
    \item Generate responses: \( f(X) = \arctan\left((Z_2X_3 - \frac{1}{Z_2 Z_4})/Z_1\right).\)
    \end{itemize}
    
    \item 
    \textbf{Poly \#1:} $f(X) = 4X_1 + 3X_2^2 + 2X_3^3 + X_4^4$
    
    \item 
    \textbf{Poly \#2:} \(f(X) = X_1^4 + 2X_2^3 + 3X_3^2 + 4X_1\).
    
    \item 
    \textbf{Ring:} \( f(X) = |\sum_{i=1}^3 X_i^2 - 1|\).
    
    \item 
    \textbf{Collinear:}    
    \begin{itemize}
    \item Create correlated features: \( X_{i+3} = X_i + 0.1 \cdot \mathcal{N}(0, 1) \), where \( i\in\{1,2,3\}\).
    \item Generate responses: \( f(X) = \sum_{i=1}^6 X_i \).
    \end{itemize}
\end{itemize}
\item Map it to probabilities: $\eta = \text{Sigmoid}\left(f(X)-E(f(X))\right)$. 
\item Generate labels: \( y \sim \text{Bernoulli}(\eta) \).
\end{enumerate}
\vspace{-1.5ex}
\subsection{Full algorithms}\label{algor}
\vspace{-0.5ex}

We provide pseudo codes for CART, PFS, MDFS, wEFS, RF-CART, and RF-MDFS. In Algorithm~\ref{grow_tree}, setting `method' to CART, PFS, MDFS, or wEFS invokes the corresponding procedure.  Likewise, in Algorithm~\ref{grow_tree_wp}, choosing method = CART or MDFS produces RF-CART and RF-MDFS, respectively. The rest of the algorithms are all helper functions used in Algorithms~\ref{grow_tree} and \ref{grow_tree_wp}.
\newpage
\begin{breakablealgorithm}
\caption{Grow\_Tree}
\text{Fit the tree-based model to $(X,\bm{y})$ when $\eta$ or $\hat{\eta}$ is not provided.}
\begin{algorithmic}
    \State{\bfseries Input:}{$X, \bm{y}, c,$ current\_depth, method, depth, min\_samples}
    \If{current\_depth = depth \textbf{or} $n(\text{unique}(\bm{y}))=1$ \textbf{or} $n(\bm{y})<$min\_samples}
        \State{\bfseries Output:}{mean($\bm{y}$)}
    \EndIf
    \State best\_feature, best\_split $\gets$ FindBestSplit($X,\bm{y}$)
    \State mask $\gets (\bm{x}_{\text{best feature}}< \text{best\_split})$
    \State left\_X, right\_X $\gets X[\text{mask}], X[!\text{mask}]$
    \State left\_y, right\_y $\gets \bm{y}[\text{mask}], \bm{y}[!\text{mask}]$
    
    \If{current\_depth = depth - 1 \textbf{or} $\min\{n(\text{left\_y}), n(\text{right\_y)}|\}<$min\_samples \textbf{or} method!=`CART'}
        \If{method = `MDFS'}
            \State new\_best\_split $\gets$ Find\_PFS\_BestSplit($\bm{x}_{\text{best\_feature}}, \bm{y}, c, 1$)
        \ElsIf{method = `PFS'}
            \State new\_best\_split $\gets$ Find\_PFS\_BestSplit($\bm{x}_{\text{best\_feature}}, \bm{y}, c, 0.1$)
        \ElsIf{method = `wEFS'}
            \State new\_best\_split $\gets$ Find\_wEFS\_BestSplit($\bm{x}_{\text{best\_feature}}, \bm{y}, c$)
        \EndIf
        \State mask $\gets (\bm{x}_{\text{best\_feature}}< \text{new\_best\_split})$
        \State left\_tree, right\_tree $\gets$ mean($\bm{y}[\text{mask}]$), mean($\bm{y}[\text{!mask}]$)
        \State{\bfseries Output:}{best\_feature, new\_best\_split, left\_tree, right\_tree}
    \Else
        \State left\_tree = Grow\_Tree(left\_X, left\_y, current\_depth + 1, method, depth, min\_samples)
        \State right\_tree = Grow\_Tree(right\_X, right\_y, current\_depth + 1, method, depth, min\_samples)
        \State{\bfseries Output:}{best\_feature, best\_split, left\_tree, right\_tree}
    \EndIf
\end{algorithmic}\label{grow_tree}
\end{breakablealgorithm}

\begin{breakablealgorithm}
\caption{Grow\_Tree\_wP}
\text{Fit the tree-based model to $(X,\bm{y})$ and $\hat{\eta}$ provided by Knowledge-Distillation.}
\begin{algorithmic}
    \State{\bfseries Input:}{$X, \bm{y}, \bm{p}, c,$ current\_depth, method, depth, min\_samples}
    \If{current\_depth = depth \textbf{or} $(\min(\bm{p}) > c\ \textbf{or}\ \max(\bm{p}) < c)$ \textbf{or} $n(\bm{y})<$min\_samples}
        \State{\bfseries Output:}{mean($\bm{p}$)}
    \EndIf
    \State best\_feature, best\_split $\gets$ FindBestSplit($X,\bm{p}$)
    \State mask $\gets (\bm{x}_{\text{best feature}}< \text{best\_split})$
    \State left\_X, right\_X $\gets X[\text{mask}], X[!\text{mask}]$
    \State left\_y, right\_y $\gets \bm{y}[\text{mask}], \bm{y}[!\text{mask}]$
    \State left\_p, right\_p $\gets \bm{p}[\text{mask}], \bm{p}[!\text{mask}]$
    
    \If{current\_depth = depth - 1 \textbf{or} $\min\{n(\text{left\_p}), n(\text{right\_p)}|\}<$min\_samples \textbf{or} method!=`CART'}
        \If{method = `MDFS'}
            \State new\_best\_split $\gets$ Find\_PFS\_BestSplit($\bm{x}_{\text{best\_feature}}, \bm{y}, c, 1$)
        \ElsIf{method = `PFS'}
            \State new\_best\_split $\gets$ Find\_PFS\_BestSplit($\bm{x}_{\text{best\_feature}}, \bm{y}, c, 0.1$)
        \ElsIf{method = `wEFS'}
            \State new\_best\_split $\gets$ Find\_wEFS\_BestSplit($\bm{x}_{\text{best\_feature}}, \bm{p}, c$)
        \EndIf
        \State mask $\gets (\bm{x}_{\text{best\_feature}}< \text{new\_best\_split})$
        \State left\_tree, right\_tree $\gets$ mean($\bm{p}[\text{mask}]$), mean($\bm{p}[\text{!mask}]$)
        \State{\bfseries Output:}{best\_feature, new\_best\_split, left\_tree, right\_tree}
    \Else
        \State left\_tree = Grow\_Tree\_P(left\_X, left\_y, left\_p, current\_depth + 1, method, depth, min\_samples)
        \State right\_tree = Grow\_Tree\_P(right\_X, right\_y, right\_p, current\_depth + 1, method, depth, min\_samples)
        \State{\bfseries Output:}{best\_feature, best\_split, left\_tree, right\_tree}
    \EndIf
\end{algorithmic}\label{grow_tree_wp}
\end{breakablealgorithm}

\begin{breakablealgorithm}
\caption{Calculate\_impurity}\text{Calculate sample impurity $\hat{\mathcal{G}}^{CART}$.}\label{Calculate_impurity}
\begin{algorithmic}
    \State{\bfseries Input:} {$\bm{x}, \bm{y}, x$} 
    \State $\bm{y}_l \gets \bm{y}[\bm{x}\le x], \bm{y}_r \gets \bm{y}[\bm{x}>x]$ 
    \State $\mathcal{G} \gets \text{Var}(\bm{y}_l)\frac{|\bm{y}_l|}{n} + \text{Var}(\bm{y}_r)\frac{|\bm{y}_r|}{n}$
    \State{\bfseries Output:} {$\mathcal{G}$}
\end{algorithmic}
\end{breakablealgorithm}

\begin{breakablealgorithm}
\caption{Calculate\_distance}\text{Calculate the second term in \eqref{G_pfs} and $\hat{\mathcal{G}}^*$.}\label{Calculate_distance}
\begin{algorithmic}
    \State{\bfseries Input:} {$\bm{x}, \bm{y}, x, c$} 
    \State $\bm{y}_l \gets \bm{y}[\bm{x}\le x], \bm{y}_r \gets \bm{y}[\bm{x}>x]$ 
    \State $\mathcal{G} \gets (1-|c-\bar{y_l}|)\frac{|\bm{y}_l|}{n} + (1-|c-\bar{y_r}|)\frac{|\bm{y}_r|}{n}$
    \State{\bfseries Output:} {$\mathcal{G}$}
\end{algorithmic}
\end{breakablealgorithm}

\begin{algorithm} 
\caption{Calculate\_weighted\_risk}\text{Calculate sample loss function of wEFS.}\label{Calculate_risk} 
\begin{algorithmic}
    \State{\bfseries Input:}{$\bm{x}, \bm{y}, x, c$} 
    \State $\bm{y}_l \gets \bm{y}[\bm{x}\le x], \bm{y}_r \gets \bm{y}[\bm{x}>x]$ 
    \State $r_l \gets |\bm{y}_l[\bm{y}_l > c]|, r_r \gets |\bm{y}_r[\bm{y}_r > c]|$ 
    \State $\mathcal{R} \gets (\mathbbm{1}\{\bar{y}_l>c\}(|\bm{y}_l| - r_l) + \mathbbm{1}\{\bar{y}_r>c\}(|\bm{y}_r| - r_r))c + (\mathbbm{1}\{\bar{y}_l\le c\} r_l + \mathbbm{1}\{\bar{y}_r\le c\}r_r)(1-c)$ 
    \State{\bfseries Output:} {$\mathcal{R}$}
\end{algorithmic}
\end{algorithm}

\begin{algorithm}
\caption{FindBestSplit}\text{Take in data from the parent node and outputs the best feature, split point in terms of $\mathcal{G}^{CART}$.}\label{best_split}
\begin{algorithmic}
    \State{\bfseries Input:}{$X, \bm{y}$} 
    \State best\_impurity, best\_split, best\_feature $\gets$ 0, None, None
    \For{each feature $\bm{x}_i$ in $X$, $i$ from $1$ to $p$}
        \State sort $X$ in terms of $\bm{x}_i$
        \For{$x$ in ordered samples in $\bm{x}_i$}
            \State impurity $\gets$ Calculate\_impurity($\bm{x}_i, \bm{y}, x$) 
            \If{impurity $<$ best\_impurity}
                \State best\_impurity $\gets$ impurity
                \State best\_split $\gets$ $x$
                \State best\_feature $\gets$ $i$
            \EndIf
        \EndFor
    \EndFor
    \State{\bfseries Output:}{best\_split, best\_feature}
\end{algorithmic}
\end{algorithm}

\begin{algorithm}
\caption{Find\_PFS\_BestSplit}
\text{Take in data from the parent node and outputs the best feature, split point in terms of $\mathcal{G}^{PFS}(\lambda)$.}
\label{best_PFS_split}
\begin{algorithmic}
    \State{\bfseries Input:}{$\bm{x}, \bm{y}, c, \lambda$}
    \State best\_G\_PFS, best\_split $\gets$ 0, None
    \State sort $\bm{x}$
    \For{$x$ in sorted $\bm{x}$}
        \State G\_PFS $\gets$ $(1-\lambda)$ Calculate\_impurity($\bm{x}, \bm{y}, x$) + $\lambda$ Calculate\_distance($\bm{x}, \bm{y}, x, c$)
        \If{G\_PFS $<$ best\_G\_PFS}
            \State best\_G\_PFS $\gets$ G\_PFS
            \State best\_split $\gets$ $x$
        \EndIf
    \EndFor
    \State{\bfseries Output:}{best\_split}
\end{algorithmic}
\end{algorithm}

\begin{algorithm}
\caption{Find\_wEFS\_BestSplit}
\text{Take in data from the parent node and outputs the best feature, split point in terms of the weighted empirical risk.}\label{best_ERMFS_split}
\begin{algorithmic}
    \State{\bfseries Input:}{$\bm{x}, \bm{y}, c$}
    \State best\_risk, best\_split $\gets$ 0, None
    \State sort $\bm{x}$
    \For{$x$ in sorted $\bm{x}$}
        \State risk $\gets$ Calculate\_weighted\_risk($\bm{x}, \bm{y}, x, c$)
        \If{risk $<$ best\_risk}
            \State best\_risk $\gets$ risk
            \State best\_split $\gets$ $x$
        \EndIf
    \EndFor
    \State{\bfseries Output:}{best\_split}
\end{algorithmic}
\end{algorithm}

\clearpage

\textbf{How to implement KD-CART/KD-MDFS}: We first use \textit{RandomForestClassifier} from \textit{sklearn} \citep{scikit-learn} to train a random forest with the entire sample. All user-defined parameters of the random forest follow the default settings. Then, we grow an RF-CART and an RF-MDFS tree with the predicted probability output by the random forest as the response variable.

\subsection{Choice of hyperparameter \texorpdfstring{$\lambda$}{Lg}}\label{Appendix: honest approach}
Our simulations regarding PFS choose a small $\lambda$ value of 0.1, and overall, this choice leads to lower misclassification error than CART. To identify an appropriate $\lambda$ in practice, we had a standard procedure: we select the $\lambda$ value using an approach that combines the idea of cross-validation and the honest approach \citep{athey2016recursive}:

\begin{enumerate}
  \item Divide the data into $K = 5$ folds. For a fixed candidate $\lambda$ value, by leaving out fold $k \in \{1, 2, 3, 4, 5\}$, we can construct 5 trees, $T^{-k}$. Denote $T^{-k}$'s estimate for $X$ as $T^{-k}(X)$.
  
  \item Obtain an unbiased estimate of $\eta$ for each node by feeding the left-out fold to $T^{-k}$. Denote this estimate for $X$ as $\hat{\eta}^k(X)$.
  
  \item Calculate misclassification score (MS) (i.e., \textit{honest} approach), 
  \[
  \text{MS} = \sum_{k=1}^{5} \sum_{i \in \text{fold }k} \mathbf{1}\left\{(T^{-k}(X_i) - c)(\hat{\eta}^k(X_i) - c) < 0\right\}.
  \]
  
  \item Choose $\lambda$ value that minimizes MS.
\end{enumerate}

We implemented this approach in simulation for PFS, the results are comparable to setting $\lambda = 0.1$.

 
\subsection{Additional simulation results}\label{Appendix: simulation result}
Refer to Figure \ref{Boxplot simulation results in appendix F1} and Table \ref{Table simulation results in appendix}.

\begin{figure*}[htb]
    \centering
    \includegraphics[width=\linewidth]{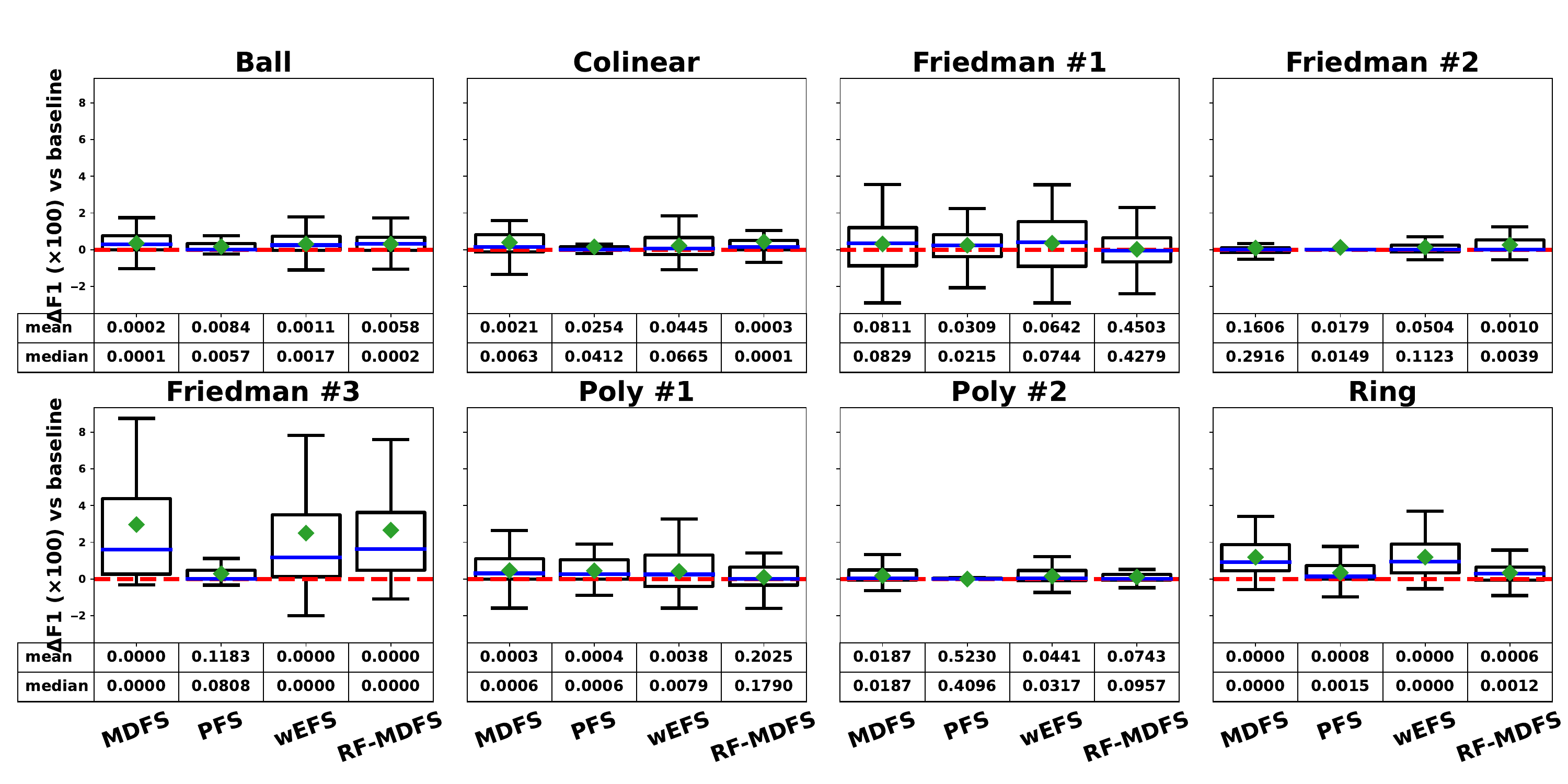}
    \caption{
    Boxplots of F1 differences relative to baseline models. For each panel, the first three boxplots compare CART with PFS, MDFS, and wEFS, and the last boxplot compares RF-CART with RF-MDFS. 
    Positive values in the boxplots denote \textit{improvement} in F1. Embedded tables list one-sided paired t-test and Wilcoxon signed-rank p-values for mean and median F1 differences, respectively. See boxplots for other settings in the supplemental files.}
    
    
    \label{Boxplot simulation results in appendix F1}
\end{figure*}

\begin{table}[htb]\tiny
\centering
\begin{tabular}{@{}lc|rrrrr|rr@{}}
\toprule
 \textbf{DGP} & $c$ & \textbf{CART} & \textbf{PFS} & \textbf{MDFS} & \textbf{WRACC} & \textbf{wEFS} & \textbf{RF-CART} & \textbf{RF-MDFS} \\
 \midrule
 Ball                 & 0.8 & 9.7 (1.8) & 9.2 (1.7) & 9.2 (1.5) & 8.9 (1.4) & 8.8 (1.3) & 8.8 (1.8) & \textbf{8.5} (1.2) \\
                      & 0.7 & 13.8 (2.6) & 13.7 (2.3) & 13.3 (2.2) & 13.8 (2.7) & 14.0 (2.5) & 12.8 (2.1) & \textbf{12.6} (2.0) \\
                      & 0.6 & 14.4 (2.2) & 14.0 (1.8) & 13.8 (1.8) & 14.2 (2.2) & 13.9 (1.7) & 13.6 (1.4) & \textbf{13.0} (1.4) \\
                      & 0.5 & 11.3 (1.8) & 11.0 (1.9) & 10.7 (1.7) & 10.7 (1.6) & 10.7 (1.6) & 10.5 (1.7) & \textbf{10.2} (1.6) \\
 \midrule
 Friedman \#1          & 0.8 & 6.7 (1.3) & 6.5 (1.4) & 6.4 (1.4) & 6.2 (1.2) & 6.0 (1.1) & \textbf{6.0} (1.0) & 6.3 (1.0) \\
                      & 0.7 & 12.5 (1.6) & 12.2 (1.7) & 12.2 (1.3) & 12.6 (1.4) & 12.6 (1.7) & 11.5 (1.2) & \textbf{11.5} (1.1) \\
                      & 0.6 & 18.5 (1.7) & 18.1 (1.6) & 18.0 (1.4) & 18.4 (1.6) & 18.5 (1.6) & 17.6 (1.4) & \textbf{17.2} (1.6) \\
                      & 0.5 & 20.4 (2.0) & 20.3 (1.5) & 20.2 (1.4) & 20.3 (1.4) & 20.1 (1.4) & \textbf{19.3} (1.3) & 19.8 (1.3) \\
 \midrule
 Friedman \#2          & 0.8 & 8.6 (2.0) & 7.8 (1.4) & \textbf{7.2} (1.3) & 8.2 (1.9) & 8.2 (1.9) & 8.4 (1.7) & 7.3 (1.4) \\
                      & 0.7 & 8.0 (1.7) & 7.6 (1.6) & 7.4 (1.4) & 7.8 (1.6) & 7.9 (1.5) & 7.8 (2.0) & \textbf{7.1} (1.6) \\
                      & 0.6 & 6.9 (1.7) & 6.6 (1.4) & 6.4 (1.5) & 6.8 (1.6) & 6.4 (1.4) & 6.5 (1.8) & \textbf{6.0} (1.5) \\
                      & 0.5 & 5.2 (1.5) & 5.0 (1.4) & 5.0 (1.2) & 5.0 (1.3) & 5.0 (1.3) & 5.0 (1.5) & \textbf{4.6} (1.0) \\
 \midrule
 Friedman \#3          & 0.8 & 1.9 (0.4) & 2.0 (0.4) & 1.8 (0.4) & 2.2 (0.5) & 2.1 (0.5) & 1.8 (0.4) & \textbf{1.8} (0.4) \\
                      & 0.7 & 2.6 (0.6) & 2.5 (0.5) & \textbf{2.3} (0.4) & 2.9 (0.7) & 2.5 (0.5) & 2.4 (0.5) & 2.4 (0.4) \\
                      & 0.6 & 3.2 (0.7) & 3.2 (0.6) & 2.9 (0.6) & 3.5 (0.6) & \textbf{2.9} (0.5) & 3.1 (0.5) & 3.0 (0.5) \\
                      & 0.5 & 4.0 (0.7) & 3.9 (0.7) & 3.6 (0.5) & 3.6 (0.5) & 3.6 (0.5) & 3.8 (0.7) & \textbf{3.6} (0.5) \\
 \midrule
 Poly \#1              & 0.8 & 7.3 (1.4) & 7.2 (1.3) & 7.3 (1.3) & 7.4 (1.3) & 7.4 (1.2) & 7.1 (1.5) & \textbf{6.8} (1.5) \\
                      & 0.7 & 10.6 (2.0) & 10.4 (2.0) & 10.1 (1.7) & 10.4 (1.9) & 10.6 (2.1) & 9.8 (1.9) & \textbf{9.3} (1.6) \\
                      & 0.6 & 12.5 (1.8) & 12.4 (2.8) & 12.2 (2.6) & 12.9 (2.5) & 12.9 (2.2) & \textbf{11.8} (2.0) & 11.8 (2.2) \\
                      & 0.5 & 13.0 (1.9) & 12.3 (2.0) & 12.3 (1.9) & 12.3 (1.7) & 12.3 (1.7) & 12.2 (2.1) & \textbf{12.0} (1.7) \\
 \midrule
 Poly \#2              & 0.8 & 10.0 (2.1) & 9.7 (1.9) & 9.7 (1.9) & 9.9 (2.1) & 9.9 (2.1) & 9.3 (2.1) & \textbf{9.0} (2.0) \\
                      & 0.7 & 9.7 (2.1) & 9.5 (2.0) & 9.4 (1.9) & 9.7 (2.0) & 9.9 (1.9) & 9.5 (2.0) & \textbf{9.3} (1.8) \\
                      & 0.6 & 8.2 (1.6) & 8.1 (1.6) & 7.9 (1.6) & 8.1 (1.9) & 8.0 (1.8) & 7.8 (1.4) & \textbf{7.6} (1.3) \\
                      & 0.5 & 6.9 (1.5) & 6.9 (1.5) & 6.7 (1.3) & 6.6 (1.4) & 6.6 (1.4) & 6.3 (1.6) & \textbf{6.2} (1.4) \\
 \midrule
 Ring                 & 0.8 & 17.1 (2.0) & 17.0 (2.0) & 16.6 (2.0) & 17.4 (1.8) & 17.5 (1.8) & 16.1 (1.7) & \textbf{15.9} (1.6) \\
                      & 0.7 & 14.5 (1.7) & 14.4 (1.4) & 14.3 (1.5) & 15.1 (1.4) & 14.9 (1.4) & 13.7 (1.3) & \textbf{13.3} (1.2) \\
                      & 0.6 & 12.7 (1.4) & 12.5 (1.4) & 11.9 (1.3) & 12.6 (1.4) & 12.2 (1.3) & 11.8 (1.2) & \textbf{11.4} (1.1) \\
                      & 0.5 & 10.7 (1.6) & 10.4 (1.5) & 9.8 (0.9) & 9.8 (0.9) & 9.8 (0.9) & 10.0 (1.2) & \textbf{9.5} (1.1) \\
 \midrule
 Colinear             & 0.8 & 6.5 (1.4) & 6.3 (1.3) & 5.8 (1.2) & 6.7 (1.3) & 6.6 (1.3) & 6.2 (1.2) & \textbf{5.8} (1.1) \\
                      & 0.7 & 7.8 (1.4) & 7.7 (1.4) & 7.3 (1.1) & 8.1 (1.4) & 8.2 (1.4) & 7.3 (1.6) & \textbf{7.0} (1.2) \\
                      & 0.6 & 8.3 (1.4) & 8.2 (1.3) & 8.0 (1.2) & 9.1 (1.6) & 8.6 (1.6) & 8.2 (1.5) & \textbf{7.7} (1.2) \\
                      & 0.5 & 8.5 (1.6) & 8.4 (1.5) & 8.1 (1.5) & 8.2 (1.4) & 8.2 (1.4) & 8.2 (1.4) & \textbf{7.7} (1.2) \\
\bottomrule
\end{tabular}
\caption{MR comparison of misclassification rate among all six methods. CART, PFS, MDFS, and wEFS use raw data input; CART and MDFS use random forest as a KD tool. Each entry presents the average and standard error (in parentheses) of the misclassification rate over 50 replicates. Note that the standard error is mostly from the data generation randomness, not the variability of the methods. To check for statistical significance, one should check out the t-test and rank sum test p-values in Figure \ref{Boxplot simulation results in main text}. The complete simulation results for the simulation setups presented in Table \ref{Table simulation results in appendix} and Figure \ref{Boxplot simulation results in main text} are provided in the supplemental files. }
\label{Table simulation results in appendix}
\end{table}

\begin{table}[htb]\tiny
\centering
\begin{tabular}{@{}lc|rrrrr|rr@{}}
\toprule
 \textbf{DGP} & $c$ & \textbf{CART} & \textbf{PFS} & \textbf{MDFS} & \textbf{WRACC} & \textbf{wEFS} & \textbf{RF-CART} & \textbf{RF-MDFS} \\
 \midrule
 Ball                 & 0.2 & 78.9 (4.0) & 79.5 (3.4) & 80.0 (3.4) & 80.1 (3.3) & 80.3 (3.3) & 80.4 (3.4) & \textbf{80.8} (3.6) \\
                      & 0.3 & 83.7 (2.6) & 83.8 (2.6) & 84.1 (2.6) & 83.6 (2.9) & 83.4 (2.8) & 84.8 (2.6) & \textbf{85.1} (2.4) \\
                      & 0.4 & 88.4 (1.9) & 88.7 (1.7) & 88.9 (1.1) & 88.7 (2.0) & 89.1 (1.4) & 89.0 (1.3) & \textbf{89.6} (1.2) \\
                      & 0.5 & 92.8 (1.3) & 93.0 (1.3) & 93.2 (1.2) & 93.1 (1.1) & 93.1 (1.1) & 93.3 (1.2) &\textbf{93.5} (1.1) \\
 \midrule
 Friedman \#1          & 0.2 & 45.6 (5.7) & 46.2 (5.7) & 46.7 (5.4) & 43.1 (6.1) & 45.9 (6.0) & \textbf{49.4} (5.4) & 48.4 (5.4) \\
                      & 0.3 & 61.8 (3.7) & 62.5 (4.8) & 63.0 (2.8) & 61.5 (3.0) & 62.3 (3.0) & 63.3 (3.8) & \textbf{63.8} (3.1) \\
                      & 0.4 & 71.4 (2.5) & 71.7 (2.7) & 71.7 (2.4) & 71.7 (2.4) & 72.5 (2.2) & 72.6 (2.3) & \textbf{73.0} (2.4) \\
                      & 0.5 & 80.6 (2.1) & 80.6 (1.7) & 80.8 (1.7) & 80.6 (1.7) & 80.8 (1.7) & \textbf{81.6} (1.7) & 81.2 (1.6) \\
 \midrule
 Friedman \#2          & 0.2 & 91.0 (2.3) & 91.8 (1.5) & 92.5 (1.4) & 91.5 (1.9) & 91.5 (1.9) & 91.2 (1.7) & \textbf{92.4} (1.4) \\
                      & 0.3 & 93.7 (1.4) & 94.0 (1.3) & 94.2 (1.1) & 93.8 (1.3) & 93.7 (1.3) & 93.8 (1.7) & \textbf{94.3} (1.3) \\
                      & 0.4 & 95.3 (1.2) & 95.5 (1.0) & 95.7 (1.0) & 95.5 (1.1) & 95.8 (0.9) & 95.6 (1.3) & \textbf{96.0} (1.1) \\
                      & 0.5 & 96.9 (0.9) & 97.0 (0.9) & 96.9 (0.7) & 97.0 (0.8) & 97.0 (0.8) & 96.9 (0.9) & \textbf{97.2} (0.6) \\
 \midrule
 Friedman \#3          & 0.2 & 63.1 (7.7) & 61.8 (13.4) & \textbf{64.7} (9.5) & 50.2 (24.4) & 59.2 (15.6) & 63.8 (10.6) & 62.1 (13.6) \\
                      & 0.3 & 69.3 (5.0) & 70.2 (5.2) & \textbf{71.6} (5.5) & 66.3 (10.8) & 71.4 (5.2) & 70.1 (5.5) & 70.7 (6.3) \\
                      & 0.4 & 73.6 (4.4) & 74.2 (4.3) & \textbf{75.9} (4.1) & 73.6 (3.7) & 75.7 (3.8) & 74.3 (3.3) & 75.1 (3.8) \\
                      & 0.5 & 76.2 (3.3) & 76.5 (3.0) & 77.7 (3.0) & 77.2 (3.2) & 77.2 (3.2) & 76.9 (3.2) & \textbf{77.7} (2.6) \\
 \midrule
 Poly \#1              & 0.2 & 74.7 (3.9) & 75.9 (4.5) & 76.3 (3.8) & 75.2 (6.6) & 75.5 (4.5) & 75.9 (5.2) & \textbf{76.6} (5.4) \\
                      & 0.3 & 81.9 (3.3) & 82.4 (3.4) & 82.7 (2.9) & 82.2 (3.2) & 82.0 (3.3) & 83.0 (3.5) & \textbf{83.9} (2.7) \\
                      & 0.4 & 86.7 (1.9) & 86.8 (2.9) & 87.1 (2.5) & 86.8 (2.5) & 86.9 (2.0) & 87.3 (2.3) & \textbf{87.5} (2.2) \\
                      & 0.5 & 90.0 (1.5) & 90.5 (1.6) & 90.5 (1.5) & 90.5 (1.4) & 90.5 (1.4) & 90.6 (1.7) & \textbf{90.7} (1.4) \\
 \midrule
 Poly \#2              & 0.2 & 90.3 (2.1) & 90.4 (1.9) & 90.4 (1.9) & 90.2 (2.2) & 90.2 (2.1) & 90.8 (2.0) & \textbf{91.1} (1.8) \\
                      & 0.3 & 92.7 (1.6) & 92.9 (1.5) & 92.9 (1.4) & 92.7 (1.4) & 92.6 (1.4) & 92.8 (1.5) & \textbf{93.0} (1.3) \\
                      & 0.4 & 94.9 (1.0) & 94.9 (1.0) & 95.0 (1.0) & 95.0 (1.2) & 95.0 (1.1) & 95.1 (0.9) & \textbf{95.2} (0.9) \\
                      & 0.5 & 96.1 (0.9) & 96.1 (0.9) & 96.2 (0.8) & 96.2 (0.8) & 96.2 (0.8) & 96.4 (0.9) & \textbf{96.5} (0.8) \\
 \midrule
 Ring                 & 0.2 & 70.2 (3.1) & 71.0 (3.4) & 71.4 (3.2) & 69.8 (2.9) & 70.2 (2.9) & 71.9 (2.9) & \textbf{72.4} (2.7) \\
                      & 0.3 & 82.5 (2.0) & 82.6 (1.5) & 82.8 (1.7) & 82.3 (1.5) & 82.6 (1.4) & 83.4 (1.6) & \textbf{83.8} (1.4) \\
                      & 0.4 & 88.1 (1.3) & 88.3 (1.3) & 89.0 (1.2) & 88.6 (1.2) & 88.8 (1.1) & 89.0 (1.2) & \textbf{89.4} (1.1) \\
                      & 0.5 & 91.7 (1.3) & 92.0 (1.2) & 92.5 (0.7) & 92.5 (0.7) & 92.5 (0.7) & 92.3 (0.9) & \textbf{92.7} (0.8) \\
 \midrule
 Colinear             & 0.2 & 81.3 (3.8) & 81.5 (3.6) & 82.9 (3.3) & 80.6 (3.4) & 80.8 (3.3) & 81.8 (3.5) & \textbf{83.3} (3.0) \\
                      & 0.3 & 85.9 (2.5) & 86.3 (2.5) & 86.8 (2.2) & 85.6 (2.6) & 85.5 (2.5) & 86.9 (2.8) & \textbf{87.3} (2.2) \\
                      & 0.4 & 89.2 (1.9) & 89.4 (1.8) & 89.7 (1.6) & 88.8 (1.9) & 89.3 (1.8) & 89.6 (1.7) & \textbf{90.2} (1.6) \\
                      & 0.5 & 91.5 (1.7) & 91.6 (1.7) & 91.9 (1.5) & 91.7 (1.5) & 91.7 (1.5) & 91.8 (1.5) & \textbf{92.2} (1.2) \\
\bottomrule
\end{tabular}
\caption{F1 comparison of misclassification rate among all six methods. CART, PFS, MDFS, and wEFS use raw data input; CART and MDFS use random forest as a KD tool. Each entry presents the average and standard error (in parentheses) of the misclassification rate over 50 replicates. Note that the standard error is mostly from the data generation randomness, not the variability of the methods. To check for statistical significance, one should check out the t-test and rank sum test p-values in Figure \ref{Boxplot simulation results in appendix F1}. The complete simulation results for the simulation setups presented in Table \ref{Table simulation results in appendix (F1)} and Figure \ref{Boxplot simulation results in appendix F1} are provided in the supplemental files. }
\label{Table simulation results in appendix (F1)}
\end{table}

\subsection{Computational complexity analysis} \label{Appendix: Computational Complexity}

Since our proposed algorithm only operates at the last split, the tree-building processes are identical in upper levels among 4 strategies. Next, we evaluate the computational complexity of the \textbf{final split}, i.e., split at the final level. 

Suppose the sample size in a node at the second to last level is $n$ and the number of features is $p$. In \textbf{CART}, the dominant step is to sort the samples based on each feature. With $p$ features to consider, the computational complexity is $O(pNlogN)$. \textbf{PFS}, \textbf{MDFS}, and \textbf{wEFS} all determine the feature to split on using \textbf{CART}. The split point decision requires another $O(N)$ of time which is still dominated by the sorting process.

\section{Additional empirical study details}\label{Appendix Empirical Studies}
We use publicly available data from our empirical studies. The Pima Indian Diabetes dataset can be downloaded at \url{https://www.kaggle.com/datasets/uciml/pima-indians-diabetes-database}. The Montesinho Park forest fire dataset can be downloaded at \url{https://archive.ics.uci.edu/dataset/162/forest+fires}. 

In this section, we use forest fire dataset to illustrate an application of our methods MDFS and RF-MDFS, and compare them to CART and RF-CART respectively.

The UCI Forest Fire dataset measures the characteristics of different forest fires in Montesinho Park with the response variable being the area burnt by a forest fire. We binarize the response by labeling those observations with a burnt area larger than 5 as 1, and 0 otherwise. 1 indicates a ``big'' forest fire. We search for conditions under which the probability of having a big forest fire is above 1/3. We use the following features in the dataset for this task: X and Y-axis spatial coordinates, month, FFMC, DMC, DC, ISI, temperature, RH (relative humidity), wind, and rain. All the acronyms except for RH are measures that positively correlate with the probability of a big forest fire. With a moderate sample size of 517, we set $m=3$. 

\begin{figure*}[ht] 
    \centering
    \begin{minipage}{0.5\textwidth}
        \centering
        \fontsize{8pt}{9.6pt}\selectfont
        \textbf{CART}
            \begin{Verbatim}[commandchars=\\\{\}]
if FFMC <= 85.85
    if temp <= 5.15
        if X <= 5.0
            \textcolor[HTML]{990000}{value: 1.000, samples: 8}
            \textcolor[HTML]{990000}{value: 0.600, samples: 5}
        if temp <= 18.55
            value: 0.172, samples: 29
            \textcolor[HTML]{990000}{value: 0.667, samples: 9}
    if temp <= 26.0
        if DC <= 673.2
            value: 0.208, samples: 216
            value: 0.310, samples: 200
        if RH <= 24.5
            {value: 0.125, samples: 8} 
            \textcolor[HTML]{990000}{value: 0.500, samples: 42}
        \end{Verbatim}
    \end{minipage}%
    \hfill
    \begin{minipage}{0.5\textwidth}
        \centering
        \fontsize{8pt}{9.6pt}\selectfont
        \textbf{MDFS}
        \begin{Verbatim}[commandchars=\\\{\}]
if FFMC <= 85.85
    if temp <= 5.15
        if X <= 3.5
            \textcolor[HTML]{990000}{value: 1.000, samples: 2}
            \textcolor[HTML]{990000}{value: 0.818, samples: 11}
        if temp <= 18.55
            value: 0.172, samples: 29
            \textcolor[HTML]{990000}{value: 0.667, samples: 9}
    if temp <= 26.0
        if DC <= 766.2
            value: 0.244 samples: 216
            \textcolor[HTML]{990000}{value: 0.372, samples: 43}
        if RH <= 24.5
            {value: 0.125, samples: 8} 
            \textcolor[HTML]{990000}{value: 0.500, samples: 42}
        \end{Verbatim}
    \end{minipage}
\\
\vspace{.5cm}
\begin{minipage}{0.5\textwidth}
        \centering
        \fontsize{8pt}{9.6pt}\selectfont
        \textbf{RF-CART}
            \begin{Verbatim}[commandchars=\\\{\}]
if FFMC <= 85.85
    if temp <= 5.15
        if wind <= 7.15
            \textcolor[HTML]{990000}{value: 0.638, samples: 6}
            \textcolor[HTML]{990000}{value: 0.964, samples: 7}
        if temp <= 18.55
            value: 0.229, samples: 29
            \textcolor[HTML]{990000}{value: 0.594, samples: 9}
    if temp <= 26.0
        if DC <= 673.2
            value: 0.220 samples: 216
            value: 0.308, samples: 200
        if FFMC <= 95.85
            \textcolor[HTML]{990000}{value: 0.381, samples: 43} 
            \textcolor[HTML]{990000}{value: 0.649, samples: 7}
        \end{Verbatim}
    \end{minipage}%
    \hfill
    \begin{minipage}{0.5\textwidth}
        \centering
        \fontsize{8pt}{9.6pt}\selectfont
        \textbf{RF-MDFS}
        \begin{Verbatim}[commandchars=\\\{\}]
if FFMC <= 85.85
    if temp <= 5.15
        
        \textcolor[HTML]{990000}{value: 0.814, samples: 13}
            
        if temp <= 18.55
            value: 0.229, samples: 29
            \textcolor[HTML]{990000}{value: 0.594, samples: 9}
    if temp <= 26.0
        if DC <= 766.2
            value: 0.250 samples: 373
            \textcolor[HTML]{990000}{value: 0.368, samples: 43}
        if FFMC <= 91.95
            \textcolor[HTML]{990000}{value: 0.578, samples: 4} 
            \textcolor[HTML]{990000}{value: 0.404, samples: 46}
        \end{Verbatim}
    \end{minipage}

    \caption{The targeting policies generated by CART, MDFS, RF-CART, RF-MDFS. The \textcolor[HTML]{990000}{red} groups are the targeted subpopulations predicted to have a higher than 1/3 probability of catching a big forest fire.}
    \label{figure diabetes CART vs MDFS}
\end{figure*}

\subsection{CART vs MDFS}
As shown by Figure \ref{figure forest fire CART vs MDFS}, the two sets of policies target many common groups except for the subgroup consisting of 43 observations defined by $FFMC > 85.85$, $temp \leq 26.0$, $DC > 766.2$. This group that MDFS additionally targets has a 37.2\% probability of catching a big forest fire, higher than the threshold. This finding aligns with Remark \ref{remark more cost}.

\subsection{RF-CART vs RF-MDFS}
As shown by Figure \ref{figure forest fire CART vs MDFS}, RF-MDFS also additionally targets the subgroup consisting of 43 observations defined by $FFMC > 85.85$, $temp \leq 26.0$, $DC > 766.2$. Hence, the comparison between RF-CART and RF-MDFS here also aligns with Remark \ref{remark more cost}.

Interestingly, both RF-CART and RF-MDFS (compared to CART and MDFS respectively) additionally target the subgroup $FFMC > 85.85$, $temp > 26.0$, $RH \leq 24.5$.

\subsection{Full CART/MDFS/RF-CART/MDFS for the diabetes example}

\begin{figure*}[ht] 
    \centering
    \begin{minipage}{0.5\textwidth}
        \centering
        \fontsize{8pt}{9.6pt}\selectfont
        \textbf{CART}
            \begin{Verbatim}[commandchars=\\\{\}]
if Glucose <= 127.5
    if Age <= 28.5
        if BMI <= 30.95
            value: 0.013, samples: 151
            value: 0.175, samples: 120
        if BMI <= 26.35
            value: 0.049, samples: 41
            value: 0.399, samples: 173
    if BMI <= 29.95
        if Glucose <= 145.5
            value: 0.146, samples: 41
            value: 0.514, samples: 35
        if Glucose <= 157.5
            \textcolor[HTML]{990000}{value: 0.609, samples: 115} 
            \textcolor[HTML]{990000}{value: 0.870, samples: 92}
        \end{Verbatim}
    \end{minipage}%
    \hfill
    \begin{minipage}{0.5\textwidth}
        \centering
        \fontsize{8pt}{9.6pt}\selectfont
        \textbf{MDFS}
        \begin{Verbatim}[commandchars=\\\{\}]
if Glucose <= 127.5
    if Age <= 28.5
        if BMI <= 22.25
            value: 0.000, samples: 35
            value: 0.097, samples: 236
        if BMI <= 28.25
            value: 0.222, samples: 63
            value: 0.377, samples: 151
    if BMI <= 29.95
        if Glucose <= 166.5
            value: 0.250, samples: 64
            \textcolor[HTML]{990000}{value: 0.667, samples: 12}
        if Glucose <= 129.5
            value: 0.579, samples: 19 
            \textcolor[HTML]{990000}{value: 0.739, samples: 188}
        \end{Verbatim}
    \end{minipage}
\\
\vspace{.5cm}
\begin{minipage}{0.5\textwidth}
        \centering
        \fontsize{8pt}{9.6pt}\selectfont
        \textbf{RF-CART}
            \begin{Verbatim}[commandchars=\\\{\}]
if Glucose <= 127.5
    if Age <= 28.5
        if BMI <= 30.95
            value: 0.028, samples: 151
            value: 0.186, samples: 120
        if BMI <= 26.95
            value: 0.104, samples: 45
            value: 0.396, samples: 169
    if BMI <= 29.95
        if Glucose <= 145.5
            value: 0.193, samples: 41
            value: 0.511, samples: 35
        if Glucose <= 157.5
           value: 0.595, samples: 115 
            \textcolor[HTML]{990000}{value: 0.830, samples: 92}
        \end{Verbatim}
    \end{minipage}%
    \hfill
    \begin{minipage}{0.5\textwidth}
        \centering
        \fontsize{8pt}{9.6pt}\selectfont
        \textbf{RF-MDFS}
        \begin{Verbatim}[commandchars=\\\{\}]
if Glucose <= 127.5
    if Age <= 28.5
        if BMI <= 22.25
            value: 0.010, samples: 35
            value: 0.111, samples: 236
        if BMI <= 28.25
            value: 0.225, samples: 63
            value: 0.380, samples: 151
    if BMI <= 29.95
        if Glucose <= 166.5
            value: 0.284, samples: 64
            \textcolor[HTML]{990000}{value: 0.636, samples: 12}
        if Glucose <= 129.5
           value: 0.573, samples: 19 
            \textcolor[HTML]{990000}{value: 0.713, samples: 188}
        \end{Verbatim}
    \end{minipage}

    \caption{The targeting policies generated by CART, MDFS, RF-CART, RF-MDFS. The \textcolor[HTML]{990000}{red} groups are the targeted subpopulations predicted to a higher than 60\% probability of being diabetic.}
    \label{figure forest fire CART vs MDFS full}
\end{figure*}



\end{document}